\documentclass[11pt]{article}

\input{preamble.tex}
\usepackage[mathscr]{euscript}

\newcommand\tldrDone[1]{}

\usepackage{graphicx}
\usepackage{subcaption}

\newcommand{\generate}{\mathtt{generate}}
\newcommand{\detect}{\mathtt{detect}}
\newcommand{\shuffle}{\mathtt{shift\txt{-}generate}}

\newcommand{\decode}{\Gamma}
\newcommand{\its}{$\mathtt{ITS}$}
\newcommand{\itsedit}{$\mathtt{ITS}$-$\mathtt{edit}$}
\newcommand{\gumb}{$\mathtt{EXP}$}
\newcommand{\gumbedit}{$\mathtt{EXP}$-$\mathtt{edit}$}
\newcommand{\gumbhash}{$\mathtt{EXP}$-$\mathtt{hash}$}
\newcommand{\kgwtwo}{$\mathtt{KGW}$-$\mathtt{2.0}$}
\newcommand{\kgwone}{$\mathtt{KGW}$-$\mathtt{1.0}$}


\newcommand{\len}[1]{\mathtt{len}(#1)}


\newcommand{\R}{\mathbb{R}}
\newcommand{\N}{\mathbb{N}}

\newcommand{\eps}{\varepsilon}

\newcommand{\E}[2][]{\mathbb{E}_{\ifx &#1& \else #1 \fi}\left[#2\right]}

\renewcommand{\P}[1]{\mathbb{P}\(#1\)}



\DeclareMathOperator*{\argmin}{arg\,min}

\newcommand{\Ep}{\mathbb{E}}

\renewcommand{\P}{\mathbb{P}}

\newcommand{\norm}[1]{\left\|{#1}\right\|} 

\newcommand{\abs}[1]{\left|{#1}\right|} 
\newcommand{\est}[1]{\widehat{#1}}

\newcommand{\mc}[1]{\mathcal{#1}}

\newcommand{\indic}[1]{\mathbf{1}\!\left\{#1\right\}} 

\newcommand{\defeq}{:=}

\definecolor{innerboxcolor}{rgb}{.9,.95,1}
\definecolor{outerlinecolor}{rgb}{.6,0,.2}
\definecolor{shcolor}{RGB}{27, 87, 14}
\definecolor{rckcolor}{RGB}{0,0,255}

\newcommand{\txt}[1]{\textup{#1}}
\newcommand{\iid}{\overset{\text{i.i.d.}}{\sim}}

\newcommand{\wt}[1]{\widetilde{#1}}














\begin{document}

\Crefname{algorithm}{Algorithm}{Algorithm}

\begin{center}
  \Large{Robust Distortion-free Watermarks for Language Models} \\
  \vspace{.5cm}

  \large{Rohith Kuditipudi ~~~~ John Thickstun ~~~~ Tatsunori Hashimoto ~~~~ Percy Liang} \\
  \vspace{.25cm}
  \large{Department of Computer Science \\
    Stanford University}
  \\
  \vspace{.2cm}
  \large{July 2023}
\end{center}



\begin{abstract}
    We propose a methodology for planting watermarks in text from an 
    autoregressive language model that are robust to perturbations without 
    changing the distribution over text up to a certain maximum generation budget.
    We generate watermarked text by mapping a sequence of random numbers---which 
    we compute using a randomized watermark key---to a sample from the language model.
    To detect watermarked text, any party who knows the key can align the 
    text to the random number sequence.
    We instantiate our watermark methodology with two sampling schemes: 
    inverse transform sampling and exponential minimum sampling.
    We apply these watermarks to three language models---OPT-1.3B, LLaMA-7B and Alpaca-7B---to 
    experimentally validate their statistical power and robustness to various paraphrasing attacks.
    Notably, for both the OPT-1.3B and LLaMA-7B models, we find we can reliably detect watermarked text ($p \leq 0.01$) from $35$ tokens 
    even after corrupting between $40$-$50$\% of the tokens via random edits (i.e., substitutions,
    insertions or deletions). 
    For the Alpaca-7B model, we conduct a case study on the feasibility of watermarking responses 
    to typical user instructions. Due to the lower entropy of the responses, detection is more difficult:
    around $25\%$ of the responses---whose median length is around $100$ tokens---are detectable with $p \leq 0.01$, 
    and the watermark is also less robust to certain automated paraphrasing attacks we implement.\footnote{
        We release all code publicly at \url{https://github.com/jthickstun/watermark}.
    }
\end{abstract}

\section{Introduction}\label{sec:intro}
The ability of language models to mass produce human-like text
creates an acute, renewed emphasis on the importance of provenance of generated content.
For example, the website StackOverflow has banned users from posting answers using OpenAI's
ChatGPT model to mitigate the spread of misinformation on the platform \cite{vincent23stackoverflow}.
A reliable forensic tool for attributing text to a particular language model would empower 
individuals---such as platform moderators and teachers---to 
enact and enforce policies on language model usage; it would also  
better enable model providers to track the (mis)use of their models, e.g.,
to scrub synthetic text from the training data of future language models. 




To achieve provenance,
a \textit{watermark} is a signal embedded within some 
generated content---in our case, text from a language model---that encodes the source of the content.
We consider a setting where an untrusted third party user queries a language model (LM) by sending prompts 
to a trusted provider (Figure~\ref{fig:main-top}):
the LM provider generates text from their language model with a watermark so that a detector may later 
identify the source of the text if the user publishes it.
The ideal watermark should satisfy at least the following three desiderata:
\begin{itemize}
    \item[1.] \textbf{distortion-free}---the watermark should preserve the original text distribution; 
    \item[2.] \textbf{agnostic}---it should be detectable without the language model and/or prompt;
    \item[3.] \textbf{robust}---it should withstand perturbations of the watermarked text.
\end{itemize}
Existing watermarks either distort the model's sampling distribution, thus altering the API functionality \cite{kirchenbauer2023watermark,aaronson2023openai}, 
or are not robust to editing or cropping the text \cite{christ2023undetectable}.
Meanwhile, classical steganographic techniques for covertly encoding messages within samples of text from a language model
are neither agnostic nor robust \cite{ziegler2019neural}.
We develop the first watermarks for attributing text to a language model that achieve all three desiderata.

\begin{figure}
    \centering
    \includegraphics[width=15cm]{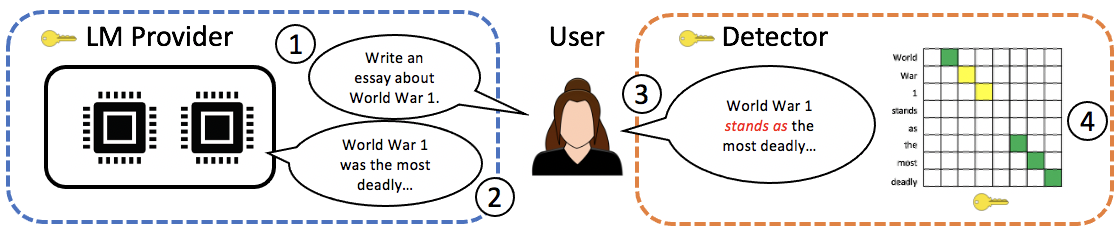}
    \caption{We define the following watermarking protocol between three parties: 
    the LM provider, the user, the detector. The LM provider and the detector 
    coordinate via a shared key, while the user is an untrusted third party. 
    The protocol consists of four steps: 1) the user sends a prompt to the LM provider;
    2) the LM provider generates watermarked text to the user; 3) the user edits 
    the watermarked text (to avoid detection) and publishes the edited text; 4) the 
    detector verifies which of the published text came from the LM provider.
    }
    \label{fig:main-top}
\end{figure}

Our methodology consists of two components, which the LM provider and detector respectively use to execute 
the two steps of the protocol in Figure~\ref{fig:main-top} under their control: 
a $\generate$ method that deterministically maps a sequence $\xi$ of random numbers encoded by a
watermark key---which we call the watermark key sequence---to a sample from the language model, and a $\detect$ method that aligns a putative watermarked text
with the watermark key sequence using the shared key.
Informally, our watermarks are \emph{distortion-free} in the sense that---marginalizing over the watermark key sequence---each 
call to $\generate$ is equal in distribution to a sample from the original language model, i.e., the distribution
\begin{align*}
    P(\textbf{text}) = \int_{\xi} \indic{\textbf{text} = \generate(\xi,\textbf{prompt})} d \nu(\xi)
\end{align*}
is equal to the original language model's sampling distribution. 

The challenge of detecting watermarked text is that the detector cannot simply recompute $\generate$ and compare its 
output against the text since they do not necessarily know the prompt which produced the text: in practice, users often 
crop the prompt when publishing text from a language model.
Our watermarks are \emph{agnostic} in the sense that they are easily detectable 
with a suitable model-agnostic and prompt-agnostic test statistic $\phi$ such that 
$\phi(\generate(\xi,\textbf{prompt}),\xi) \ll \phi(\textbf{text},\xi)$
for any $\textbf{text}$ that is independent of the watermark key sequence.
The idea here is that the detector may use $\phi$ within $\detect$ to compute a $p$-value with respect to the null 
hypothesis that the text is independent of the watermark key sequence, i.e.,
that the text is not watermarked.

To ensure $\detect$ is \emph{robust} to edits of the watermarked text, 
the core idea underpinning the design of each test statistic $\phi$ is to leverage techniques 
for robust sequence alignment to align a putative watermarked text with the watermark key sequence;
we quantify the quality of the alignment using an ``alignment cost" specific to each watermark.
The sequence alignment procedure ensures the watermark is detectable from even a 
small, corrupted block of watermarked text planted within some other larger text.
Of course, a sufficiently motivated and/or sophisticated user can still evade detection 
by simply rewriting the text from scratch themselves (or, using another language model to generate the text);
the point of a robust watermark is simply that the amount of effort and/or resources a user requires to produce text 
that evades watermark detection should be commensurate to what they would have expended had they not had access 
to the watermarked language model in the first place.

Whereas $\generate$ is a deterministic function, if our watermark 
produced the same text every time for each prompt it would not be very useful.
We resolve this limitation by designing a wrapper $\shuffle$ around $\generate$ that calls $\generate$ using a randomly chosen subsequence of $\xi$
instead of generating tokens from the same starting point each time.
For the same reasons that $\detect$ is robust to editing and cropping watermarked text,
calling $\generate$ in this fashion does not affect watermark detectability.
In practice, the statistical power of our watermarks improves exponentially with respect 
to the length of the putative watermarked text and diminishes only linearly with the 
length of the random number sequence;
thus, by increasing the length of the random number sequence, we can reduce the probability 
of reusing the same random subsequence while still ensuring our watermark has good statistical power
(i.e., that it yields low $p$-values for watermarked text).
So long as we do not reuse an element of the key sequence, successive calls to $\shuffle$ will 
be jointly indistinguishable from regular calls to the language model.

To remark briefly on the work most closely related to ours, we contrast the distortion-free property of our watermarks
with the hashing-based watermarks of \citet{kirchenbauer2023watermark} and \citet{aaronson2023openai}
that bias the distribution of watermarked text towards certain $k$-grams by hashing 
a sliding window of the previous $k-1$ tokens to determine the next token pseudorandomly.
We give examples of prompts (e.g., ``Give me a list of 20 movies.") for which the bias due to hashing is clearly noticeable in our experiments.
\citet{christ2023undetectable} propose a variation of hashing in which the window size changes based on the entropy of the generated tokens to avoid hash collisions with high probability.
Their motivation is similar to ours in that they focus on preserving 
the original text distribution; however, like \citet{kirchenbauer2023watermark} and \citet{aaronson2023openai}, using larger window sizes
hurts robustness as an adversary can break the watermark by replacing a single token in each window.
Our watermark is not only distortion-free but also robust to substantial corruption of the text,
which is crucial in practice.
We defer a more thorough discussion of related work to the next section (Section~\ref{sec:related-work}).





We describe the details of our methodology in Section~\ref{sec:results}, 
wherein we give two instantiations of watermarks---using inverse transform sampling 
and exponential minimum sampling---and provide analyses of their statistical power.
We experimentally validate the power
and robustness of our watermarks using the OPT-1.3B, LLaMA-7B and Alpaca-7B 
language models in Section~\ref{sec:experiments}.
Across all models, we find the second instantiation using exponential minimum sampling to be 
the most powerful.
For both the OPT-1.3B and LLaMA-7B models, using this watermark we can reliably detect watermarked text ($p \leq 0.01$) from $35$ tokens 
even after corrupting between $40$-$50$\% of the tokens via random edits (i.e., substitutions,
insertions or deletions);
the watermark also remains detectable from $50$ tokens even after paraphrasing 
the text by translating to French/Russian and back.
For the Alpaca-7B model, we conduct a case study on the feasibility of watermarking responses 
to typical user instructions. Due to the lower entropy of the responses, detection is more difficult:
around $25\%$ of the responses---whose median length is around $100$ tokens---are detectable with $p \leq 0.01$, 
and the watermark is also less robust to paraphrasing.
We release code for implementing the watermark and reproducing the experiments in this paper, as well as additional supplementary material including an in-browser demo of the watermark detector.\footnote{For assets and supplemental material, see: \url{https://github.com/jthickstun/watermark}.}


\subsection{Related work}\label{sec:related-work}
Text watermarking is a special case of linguistic steganography, 
in that the goal is to convey a hidden message---the watermark---within a passage of text.
Existing approaches to linguistic steganography fall under two broad categories: 
\emph{edit-based} methods that modify a pre-existing text,
and \emph{generative} methods that construct a distribution over cover text \citep{ueoka2021frustratingly}. 
Crucially, in contrast to steganography, the literature on digital watermarking has historically foregrounded 
robustness to corruption as a key attribute of a good watermark \citep{katzenbeisser2000digital,atallah2001natural}.
In this light, a text watermark should be able to withstand some perturbations of the text,
thus precluding the direct application of many existing techniques for linguistic steganography \citep{dai2019towards,ziegler2019neural,shen2020near}.




Older work on text watermarking considers editing a pre-existing text to include 
a watermark \citep{rizzo2019fine,abdelnabi2021adversarial,yang2022tracing}; 
for a survey of edit-based watermarks, see \citet{kamaruddin2018review}. In contrast, 
we are interested in generating watermarked text while preserving the distribution over the text from a language model.
Work on generative watermarking is nascent, underwritten by recent advances in open-ended text generation \citep{brown2020language}. 
Pioneering work by \citet{venugopal2011watermarking} proposed a generative watermark for the output of a machine translation system, biasing the system towards translations with particular features that can later be detected using a hypothesis test.

Our work is most closely related to \citet{kirchenbauer2023watermark}, 
who watermark text from a language model by reweighting the token log-probabilities 
from the model at inference time as a function (i.e., hash) of the previous $k-1$ tokens, where $k \in \N$ is a hyperparameter.
In ongoing unpublished work concurrent to ours, \citet{aaronson2023openai} describes a technique for watermarking language models 
using exponential minimum sampling (a close relative of the Gumbel trick \cite{papandreou2011perturbandmaprf}) 
to sample from the model, where the inputs to the sampling mechanism
are also a hash of the previous $k-1$ tokens.
Neither watermark is distortion-free, and in fact we show in our experiments that the 
distortions manifest noticeably in practice (e.g., excessive repetition of certain tokens).
Specifically, both \citet{kirchenbauer2023watermark} and \citet{aaronson2023openai}
bias the distribution toward a subset of $k$-grams.
Increasing $k$ makes the bias less noticeable but hurts the robustness of both watermarks:
an adversary can break the signal from a particular token by
replacing any one of the previous $k-1$ tokens.

Also concurrent to our work, 
\citet{christ2023undetectable} propose watermarking blocks of text from a language model
by hashing each block to seed a sampler for the next block.
\citet{christ2023undetectable} vary their block sizes---which are analogous to the hyperparameter $k$ of \citet{kirchenbauer2023watermark} 
and \citet{aaronson2023openai}---as a function of the empirical entropy of the constituent tokens to avoid using the same seed twice with high probability.
Their work is similar to ours in that they preserve the original language model's sampling distribution;
however, the resulting watermark is not robust since in order to mitigate the distortion induced by hashing
the block sizes must be sufficiently large to avoid hash collisions with high probability over all 
blocks and---similar to \citet{kirchenbauer2023watermark} and \citet{aaronson2023openai}---replacing 
any token in the previous block breaks the watermark in the next block.
Whereas \citet{christ2023undetectable}---who do not run experiments---choose their block sizes to be sufficiently 
large to minimize distortion, \citet{kirchenbauer2023watermark} and \citet{aaronson2023openai} 
recommend choosing $k$ to be a small constant in practice, which ensures a moderate amount of robustness 
by introducing some distortion.
Finally, whereas our definition distortion-freeness implies exact equality in distribution of watermarked text to unwatermarked text for a single query 
to the language model, \citet{christ2023undetectable} propose a definition of ``undetectability'' that implies approximate equality in distribution, i.e., 
approximate distortion-freeness, over multiple queries.
Using $\shuffle$, we also achieve approximate distortion-freeness in the multiple query setting, though the runtime of our watermark detection procedure must grow with the number of queries; 
we discuss these trade-offs in more detail in Section~\ref{sec:discussion}.

An alternative approach for detecting synthetic text is to 
learn a classifier between synthetic and human text \citep{jawahar2020automatic,mitchell2023detectgpt}.
A key advantage of such methods over watermarking is that they do not require coordination with the original producer of the text 
(i.e., the LM provider); however, their effectiveness is distribution dependent and they do not provide a priori (distribution-free)
guarantees on the significance level of detection (i.e., Type I errors).

Finally, we note that our setting is different from the literature on planting watermarks in the training data of machine learning models, 
e.g., to infer the model's training set or otherwise influence the model's output \citep{he2022protecting,he2022cater,zhao2023protecting}.
Such watermarks are not distortion-free by design, since the point is to plant some learnable signal in the training data that influences 
the behavior of models which train on the watermarked data.



\section{Methodology and theoretical analysis}\label{sec:results}
Let $\mc{V}$ be a discrete set, i.e., the \textit{vocabulary}, and let 
$p \in \mc{V}^* \to \Delta(\mc{V})$ be an autoregressive \textit{language model}
which maps a string of arbitrary length to a distribution over the vocabulary, with
$p(\cdot \mid x)$ 
denoting the distribution of the next token given the prefix $x \in \mc{V}^*$. 
Let $\Xi$ denote the space in which the elements of the watermark key sequence lie.
Recall the main protocol (Figure~\ref{fig:main-top}) which defines our problem setting:
\begin{itemize}
  \item[0.] The LM provider shares a random watermark key sequence $\xi \in \Xi^*$ with the detector;
    \item[1.] The user sends a prompt $x \in \mc{V}^*$ to the LM provider;
    \item[2.] The LM provider generates text $Y \in \mc{V}^*$ by $Y = \generate(x,\xi)$;
    \item[3.] The user publishes text $\wt{Y} \in \mc{V}^*$, which may be either (i) (an edited version of) the generated text $Y$
      or (ii) text independent of $Y$ (e.g., text that they wrote themselves);
    \item[4.] The detector determines if $\wt{Y}$ is watermarked---i.e., if $\wt{Y}$ depends on the watermark key sequence---by computing 
                a $p$-value $\est{p} = \detect(\wt{Y},\xi)$ with respect to the null hypothesis 
                that $\wt{Y}$ is independent of $\xi$ (i.e., not watermarked).
\end{itemize}

\subsection{Protocol details}\label{sec:protocol}
In the protocol, the LM provider calls the $\generate$ method (Algorithm~\ref{algorithm:generate}) to autoregressively generate 
text from a language model using a \emph{decoder} function $\decode : \Xi \times \Delta(\mc{V}) \to \mc{V}$ which
maps an element $\xi_i$ of the watermark key and a distribution over the next token to a next token prediction.
By design, over the randomness of $\xi_i$ the prediction should constitute a sample from the distribution,
i.e., $\P(\decode(\xi_i,\mu) = y_i) = \mu(y_i)$.
\begin{definition}\label{defn:distortion-free}
    A decoder $\decode : \Xi \times \Delta(\mc{V}) \to \mc{V}$ is \emph{distortion-free} 
    with respect to (the distribution of) a random variable $\xi \in \Xi$ if for any $\mu \in \Delta(\mc{V})$ and $y \in \mc{V}$ it satisfies
    $\P(\decode(\xi,\mu) = y) = \mu(y)$.
\end{definition}
We relate Definition~\ref{defn:distortion-free} to our informal definition of distortion-free text 
in the introduction through the following simple lemma.
Essentially, so long as the watermark key sequence is long enough that we do not reuse any part of it to generate text, 
the only material difference between an LM provider using $\generate$ versus sampling directly from the language model 
is that the sequence $\xi$ is an input to the method rather than resampled i.i.d. within the method for each call.
We treat the language model $p$, the decoder $\decode$, and generation length $m$ as internal parameters of the $\generate$ method.
\begin{lemma}\label{lemma:dist-free-equiv}
    Let $m,n \in \N$ with $n \geq m$.
    Let $\decode$ be distortion free with respect to a distribution $\nu \in \Delta(\Xi)$
    and let $\{\xi_i\}_{i=1}^n \iid \nu$. Let $Y = \generate(\xi;m,p,\decode)$.
    Then $Y_i \sim p(\cdot \mid Y_{:i-1})$ for $i \in [m]$.
\end{lemma}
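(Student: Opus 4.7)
The plan is to proceed by induction on $i \in [m]$, directly unpacking what the $\generate$ procedure does at each autoregressive step and appealing to the distortion-free property of $\decode$ coordinate by coordinate. The hypothesis $n \geq m$ is used only to ensure that a fresh key element $\xi_i$ is available for each of the $m$ generation steps, so that the procedure is well-defined and no coordinate of $\xi$ is reused.

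At each step, I would use that $\generate$ sets $Y_i = \decode(\xi_i, p(\cdot \mid Y_{:i-1}))$. For the base case $i=1$, $Y_{:0}$ is empty, so $Y_1 = \decode(\xi_1, p(\cdot))$; since $\xi_1 \sim \nu$ and $\decode$ is distortion-free with respect to $\nu$, Definition~\ref{defn:distortion-free} applied with $\mu = p(\cdot)$ immediately gives $\P(Y_1 = y) = p(y)$ for every $y \in \mc{V}$.

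For the inductive step, suppose the claim holds for indices $1, \ldots, i-1$, so the joint law of $Y_{:i-1}$ is the standard autoregressive one. The key observation is that $Y_{:i-1}$ is a deterministic function of $(\xi_1, \ldots, \xi_{i-1})$, while $\xi_i$ is independent of $(\xi_1, \ldots, \xi_{i-1})$ by the i.i.d. assumption. Hence, conditional on $Y_{:i-1} = y_{:i-1}$, the random variable $\xi_i$ still has distribution $\nu$, and the distribution $\mu = p(\cdot \mid y_{:i-1})$ fed into $\decode$ is a deterministic element of $\Delta(\mc{V})$. Applying Definition~\ref{defn:distortion-free} conditionally yields
\[
\P(Y_i = y \mid Y_{:i-1} = y_{:i-1}) = \P\!\left(\decode(\xi_i, p(\cdot \mid y_{:i-1})) = y\right) = p(y \mid y_{:i-1}),
\]
which is exactly the claimed conditional distribution; combined with the inductive hypothesis, this completes the step.

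There is no real obstacle here: the lemma is essentially a bookkeeping consequence of the definition of distortion-freeness together with the independence of the fresh key element from the already-generated prefix. The only subtlety worth stating explicitly is the justification for applying Definition~\ref{defn:distortion-free}, which is formulated for a fixed $\mu$, to the random distribution $p(\cdot \mid Y_{:i-1})$; this is handled cleanly by first conditioning on $Y_{:i-1}$ so that $\mu$ becomes deterministic while $\xi_i \sim \nu$ remains preserved thanks to independence.
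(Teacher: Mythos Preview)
Your proposal is correct and takes essentially the same approach as the paper: both apply Definition~\ref{defn:distortion-free} at each step $i \in [m]$ of $\generate$, using $n \geq m$ to ensure a fresh i.i.d.\ key element is available. The paper's proof is a two-line sketch that leaves the conditioning argument implicit, whereas you spell out the induction and the independence-of-$\xi_i$-from-$Y_{:i-1}$ subtlety explicitly; no genuine difference in strategy.
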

\begin{proof}
    As $n \geq m$, we have $\{\xi_i\}_{i=1}^m \iid \nu$.
    The claim then follows immediately from applying Definition~\ref{defn:distortion-free}
    to Line~\ref{line:generate} of $\generate$ for $i \in [m]$.
\end{proof}

To simplify the remainder of the presentation, we do not pass a prompt as input to $\generate$.
As the language model $p$ is arbitrary and $\detect$ is model-agnostic, 
this simplification is without loss of generality since $p$ itself may model the distribution of 
text from some base model given an arbitrary prompt.
Also, unless stated otherwise, without loss of generality we let $\mc{V} = [N]$ throughout the paper,
where $N \in \N$ is the vocabulary size.

\begin{algorithm}[h]
    \DontPrintSemicolon
    \SetKwInOut{Input}{Input}
    \SetKwInOut{Output}{Output}
    \SetKwInOut{Params}{Params}
    \SetKwInOut{Noise}{Noise}
    \caption{\label{algorithm:generate}
        Watermarked text generation ($\generate$)}
    \Input{watermark key sequence $\xi \in \Xi^n$}
    \Params{generation length $m$, language model $p$, decoder $\decode$
    }
    \Output{string $y \in \mc{V}^m$}
    \For{$i \in 1,\dots,m$}{ 
        $y_i \leftarrow \decode(\xi_{i\%n},p(\cdot \mid y_{:i-1}))$ \label{line:generate} 
    }
    \Return $y$\;
\end{algorithm}

The detector calls the $\detect$ method (Algorithm~\ref{algorithm:detect}) to compute---via a permutation test with $T$ resamples---a $p$-value 
with respect to a test statistic $\phi : \mc{V}^* \times \Xi^* \to \R$ for the null 
hypothesis that $\wt{Y}$ is not watermarked, i.e., that $\wt{Y}$ is independent of $\xi$. 
The output $\est{p}$ of $\detect$ is a proper non-asymptotic $p$-value: if $\wt{Y}$ is not watermarked, 
then each $(\wt{Y},\xi^{(t)})$ constitutes an independent, identically distributed copy of $(\wt{Y},\xi)$ and therefore by symmetry $\est{p}$ is 
uniformly distributed over $\{1/(T+1),2/(T+1),\dots,1\}$ for any (non-atomic) test statistic.\footnote{
    By non-atomic, we mean for any $c \in \R$ that $\P(\phi(Y,\xi) = c) = 0$ so that almost surely we will not 
    have to break ties (meaning, if $\phi(y,\xi) = \phi_t$) when computing $\est{p}$.
    In case of ties (i.e., if the test statistic is atomic), we can either modify $\detect$ to break 
    ties uniformly at random, or simply report valid but conservative $p$-values by leaving $\detect$ as is.
}
If $\phi$ returns a small $p$-value (e.g., 0.0001) then the text is likely watermarked;
if the $p$-value is large (e.g., 0.25), then the text might not be.

\begin{algorithm}[h]
    \DontPrintSemicolon
    \SetKwInOut{Input}{Input}
    \SetKwInOut{Output}{Output}
    \SetKwInOut{Params}{Params}
    \SetKwInOut{Noise}{Noise}
    \caption{\label{algorithm:detect}
        Watermarked text detection ($\detect$)
        } 
    \Input{string $y \in \mc{V}^*$, watermark key sequence $\xi \in \Xi^n$}
    \Params{test statistic $\phi$; watermark key sequence distribution $\nu \in \Delta(\Xi^n)$; resample size $T$}
    \Output{p-value $\est{p} \in [0,1]$}
    \For{$t \in 1,\dots,T$}{
        $\xi^{(t)} \sim \nu$\;
        $\phi_t \leftarrow \phi(y,\xi^{(t)})$\;
    }
    $\est{p} \leftarrow \frac{1}{T+1} \left(1 + \sum_{t=1}^T \indic{\phi_t \leq \phi(y,\xi)}\right)$\;
    \Return $\est{p}$\;
\end{algorithm}

The goal then is to design the test statistic $\phi$ (Algorithm~\ref{algorithm:test}) such that $\est{p}$ will typically be small 
if $\wt{Y}$ is watermarked.
In particular, the goal is to identify an alignment cost $d : (\mc{V} \times \Xi)^* \to \R$, which measures the quality of a match between a subsequence of the input text and a subsequence of the watermark key, and use this to define $\phi$ as the minimum cost alignment between length $k$ subsequences of the text and key.

This alignment-based detection strategy makes the watermark robust, since even if the user crops or otherwise corrupts $Y$, a single block of preserved watermarked text within
some larger body of unwatermarked text will suffice to trigger a low $p$-value from $\detect$.
The actual form of the alignment cost will be specific to each watermark---in particular, it will depend on the nature of 
the decoder $\decode$ in $\generate$.
Our most robust watermarks incorporate a soft notion of edit distance (i.e., Levenshtein distance) 
into the computation of the alignment cost via dynamic programming, 
with runtime scaling quadratically in the block size.
Thus, letting $m$ be the length of 
the input text $y$, $n$ be the length of the watermark key sequence $\xi$, and $k$ be the block size,
the cost of computing the test statistic is $O(m n k^2)$.


\begin{algorithm}[h]
    \DontPrintSemicolon
    \SetKwInOut{Input}{Input}
    \SetKwInOut{Output}{Output}
    \SetKwInOut{Params}{Params}
    \SetKwInOut{Noise}{Noise}
    \caption{\label{algorithm:test}
        Test statistic ($\phi$)
        } 
    \Input{string $y \in \mc{V}^*$, watermark key sequence $\xi \in \Xi^n$}
    \Params{alignment cost $d$, block size $k$}
    \Output{test statistic value $\phi(y,\xi) \in \R$}
    \For{$i \in 1,\dots,\len{y}-k + 1$}{
        \For{$j \in 1,\dots,n$}{
            $y^i \leftarrow \{y_{i+\ell}\}_{\ell = 0}^{k-1}$,
            $\xi^j \leftarrow \{\xi_{(j+\ell) \% n}\}_{\ell=0}^{k-1}$\;
            $\est{d}_{i,j} \leftarrow d(y^i,\xi^j)$\;
        }
    }
    \Return $\min_{i,j} \est{d}_{i,j}$
\end{algorithm}

To illustrate how the decoder and the alignment cost fit together, we give a simple example
for the toy setting of a binary vocabulary. \\

\begin{example}\label{example:binary}
    Consider a binary vocabulary $\mc{V} = \{0,1\}$.
      To generate $Y \in \{0,1\}^*$ from the model, 
      the LM provider shares
      $\{\xi_i\}_{i=1}^n \iid \txt{Unif}([0,1])$ with the detector
      and let $Y_i = 0$ if $\xi_i \leq p(0 \mid Y_{:i-1})$ and $Y_i = 1$ otherwise.
      In particular, defining the decoder $\decode$ 
      by
      \begin{align*}
          \decode(\xi_i,\mu) \defeq
              \begin{cases} 
                  0 & \xi_i \leq \mu(0) \\
                  1 & \xi_i > \mu(0),
              \end{cases}
      \end{align*}
      let $Y = \generate(\xi;m,p,\decode)$ for some $m \leq n$.
      Then $Y$ is a valid sample from the language model as 
      $\P(\xi_i \leq p(0 \mid Y_{:i-1})) = p(0 \mid Y_{:i-1})$,
      and crucially $Y$ and $\xi$ are correlated (i.e., if $\xi_i$ is sufficiently close to zero then $Y_i = 0$, 
      and likewise if $\xi_i$ is sufficiently close to one then $Y_i = 1$). 
      Thus, we can define the alignment cost $d(y,\xi) = \norm{y - \xi}_1$.

      Assuming for the sake of this example that $n = m$ and the user does not corrupt 
      the watermarked text from the LM provider, i.e., $\wt{Y} = Y$, the detector can run $\detect$
      to verify that $\wt{Y}$ is watermarked using the test statistic $\phi$ with alignment cost $d$ and
      block size $k = m$. The value of the test statistic will then be at most 
      the $\ell_1$ norm of $\wt{Y} - \xi$.
\end{example}

\subsection{Handling multiple queries}\label{sec:multi-query}
In the above example, the LM provider generates the same text each time from the watermark key sequence,
which is not ideal in practice.
One solution for avoiding reusing elements of the watermark key sequence across queries is to make $\generate$ stateful,
thus enabling the LM provider to generate a total of $\lfloor n/m \rfloor$ independent watermarked text samples of $m$ tokens each
from the language model.
Instead, to avoid persisting state, we provide a randomized 
wrapper $\shuffle$ (Algorithm~\ref{algorithm:shuffle}) around $\generate$ and
modify the watermarking protocol from the start of the section to allow the LM provider to call the $\shuffle$ 
instead of $\generate$ in the second step of the protocol.
The wrapper $\shuffle$ randomly shifts the watermark key sequence before passing the shifted sequence to $\generate$.
Shifting the watermark key sequence does not affect the value of the test statistic in $\detect$,
since to compute the test statistic the detector anyways searches over all subsequences of the watermark key sequence 
to find the best match for each block of text.
There are $n$ possible shifts, each of which may produce a distinct text;
while in principle these $n$ texts will correlate with each other due to sharing elements of 
the watermark key sequence, in practice we find the effects of these correlations are not noticeable.
The so-called birthday paradox \cite{elfving1966selected} implies the LM provider can typically expect to call $\shuffle$ $\Omega(\sqrt{n})$ times,
each time producing a different text, before reusing the same offset twice.
In fact, the provider can expect call $\shuffle$ $\Omega(\sqrt{n/m})$ times before reusing a subsequence, in which case 
the constituent $\Omega(\sqrt{nm})$ tokens in these texts
will be indistinguishable from regular samples from the language model.

In general, we can bound the distortion (i.e., statistical distance from regular samples) of $\shuffle$ in the multi-query setting by 
the probability of reusing an element of the watermark key sequence.
Specifically, for $T$ queries and a maximum generation length of $m$ tokens per query,
we will achieve negligible, i.e., $o(1)$ distortion, so long as $n = \omega(mT^2)$.
Thus, similar to \citet{christ2023undetectable} we can achieve approximate distortion-freeness in the multi-query setting.
However, unlike \citet{christ2023undetectable}, to achieve approximate distortion-freeness in this setting the computational cost of our watermark detection procedure must 
grow with the target number of queries.
In principle, this trade-off between the degree of distortion and the runtime of watermark detection means at least in an asymptotic sense that the latter
effectively upper bounds the number of queries to the LM provider an attacker would require to learn information about the watermark key sequence.
In practice, we expect the cost per token of queries to the LM provider will be significant enough to make such attacks expensive to implement.



\begin{algorithm}[h]
    \DontPrintSemicolon
    \SetKwInOut{Input}{Input}
    \SetKwInOut{Output}{Output}
    \SetKwInOut{Params}{Params}
    \SetKwInOut{Noise}{Noise}
    \caption{\label{algorithm:shuffle}
        Randomized watermarked text generation ($\shuffle$)}
    \Input{watermark key sequence $\xi \in \Xi^n$}
    \Params{generation length $m$, language model $p$, decoder $\decode$}
    \Output{string $y \in \mc{V}^m$}
    $\tau \sim \txt{Unif}([n])$,
    $\xi' \leftarrow \{\xi_{(i+\tau)\% n}\}_{i=1}^{m}$\;
    \Return $\generate(\xi';m,p,\decode)$\;
\end{algorithm}

\subsection{Terminology: watermark strategies and watermark potential}
Henceforth, we use the term \emph{watermarking strategy} to refer to a concrete instantiation of the 
$\shuffle$, $\generate$ and $\detect$ methods by specifying 
the internal parameters of both algorithms (i.e., the decoder $\decode$, the test statistic $\phi$ and the watermark key sequence distribution $\nu$).
We give concrete watermarking strategies in the following sections (Sections~\ref{sec:watermark} and \ref{sec:gumbel}).
For each watermarking strategy, we show two main results: we prove the decoder is distortion-free and also 
obtain high probability upper bounds on the $p$-values of watermarked text---as a function of the length 
of the text and the watermark key sequence.
We emphasize that only the former result (i.e., that the decoder is distortion-free) is critical 
to the validity of our main claims; we intend the latter collection of results 
to provide intuition for when we would expect the detector to have sufficient power and to anticipate the 
forthcoming experimental results in Section~\ref{sec:experiments}.
The strength of the $p$-value upper bounds will depend on the observed token probabilities of (watermarked) text,
through a quantity which we evocatively term the \textit{watermark potential}. 

\begin{definition}\label{defn:water-ent}
    (watermark potential) Define $\alpha : \mc{V}^* \to \R$ by 
    \begin{align*}
        \alpha(y) \defeq 1 - \frac{1}{\len{y}} \sum_{i = 1}^{\len{y}} p(y_i \mid y_{:i-1}).
    \end{align*}
\end{definition}

The watermark potential of text from a deterministic language model will always be zero,
whereas for a high-entropy model it will approach one.
The degree to which it is possible for the detector to reliably distinguish watermarked text from unwatermarked 
text necessarily depends on the watermark potential of the LM provider's language model.
For example, if the language model is deterministic, then any distortion-free watermark will necessarily have zero statistical power.
We formalize this intuition by establishing the following general lower bound on the detection accuracy of any watermarking strategy
as a function of the watermark potential of the original language model.
In particular, we lower bound the error of any classifier $h : \mc{V}^* \times \Xi^* \to \{-1,+1\}$ 
that tries to distinguish watermarked (positive label) versus nonwatermarked text (negative label)
given some watermark key $\xi$ (we make no assumption on the distribution of $\xi$ except that it is independent 
of unwatermarked text by definition).
We defer the proof of Lemma~\ref{lemma:lower-bound-all} to Appendix~\ref{app:lb}.

\begin{lemma}\label{lemma:lower-bound-all}
    Let $Y_i' \sim p(\cdot \mid Y_{:i-1}')$ for $i \in [m]$.
    Let $Y \stackrel{d}{=} Y'$ and let $\xi \in \Xi^*$ be a random variable that is 
    independent of $Y'$. Let $h : \mc{V}^* \times \Xi^* \to \{-1,+1\}$ be a classifier.
    Let $c > 0$ and define the set $\mc{V}_c \subset \mc{V}^m$ by 
    \begin{align*}
        \mc{V}_c \defeq \{y : p(y_i \mid y_{:i-1}) \geq \exp(-c/2) \ \txt{for all} \ i \in [m]\}.
    \end{align*}
    Then
    \begin{align*}
        \P(h(Y,\xi) = -1) + \P(h(Y',\xi) = 1) \geq \Ep \left[\exp\left(-c m \alpha(Y)\right)\indic{Y \in \mc{V}_c}\right].
    \end{align*}
\end{lemma}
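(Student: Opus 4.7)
The plan is to reduce the classifier-error sum to a total-variation-style gap via a Neyman-Pearson argument, and then convert the distortion-free constraint $Y \stackrel{d}{=} Y'$ together with the entropic constraint defining $\mc{V}_c$ into a concrete pointwise bound via elementary inequalities.

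First, I would invoke the Bayes-optimal classifier bound: for any $h$, $\P(h(Y,\xi) = -1) + \P(h(Y',\xi) = 1) \geq \int \min(q(y,\xi), q'(y,\xi)) \, d(y,\xi)$, where $q, q'$ denote the joint densities of $(Y,\xi)$ and $(Y',\xi)$. Because $\xi$ is independent of $Y'$ and the hypothesis $Y \stackrel{d}{=} Y'$ forces the $y$-marginal of $q$ to equal $p$, I can write $q'(y,\xi) = p(y)\pi(\xi)$ and $q(y,\xi) = q(y\mid\xi)\pi(\xi)$ with $\Ep_\xi[q(y\mid\xi)] = p(y)$, so $\min(q,q') = \pi(\xi)\,\min(q(y\mid\xi), p(y))$.

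Second, I would apply the elementary inequality $\min(a,b) \geq ab$ for $a,b \in [0,1]$ (which holds since $\max(a,b) \leq 1$) to obtain $\min(q(y\mid\xi), p(y)) \geq p(y)\, q(y\mid\xi)$. Integrating against $\pi$ and using $\Ep_\xi[q(y\mid\xi)] = p(y)$, the contribution from each $y$ to $\int\min(q,q')$ is at least $p(y)^2$. Restricting the outer sum over $y$ to $y \in \mc{V}_c$ then yields $\int\min(q,q') \geq \Ep[p(Y)\,\indic{Y \in \mc{V}_c}]$.

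Third, the crucial remaining step is the pointwise bound $p(y) \geq \exp(-cm\alpha(y))$ for $y \in \mc{V}_c$. Factoring $p(y) = \prod_{i=1}^m p(y_i\mid y_{:i-1})$ and using the hypothesis $p_i \defeq p(y_i\mid y_{:i-1}) \geq e^{-c/2}$, this reduces upon taking logarithms to the scalar inequality $-\log p \leq c(1-p)$ on the interval $p \in [e^{-c/2}, 1]$. I would establish this by analyzing the concave function $f(p) = c(1-p) + \log p$ on this interval, using the boundary values $f(1) = 0$ and $f(e^{-c/2}) = c(1/2 - e^{-c/2})$ together with the concavity of $f$ to pin down its sign. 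Chaining the pieces gives $\Ep[p(Y)\,\indic{Y \in \mc{V}_c}] \geq \Ep[\exp(-cm\alpha(Y))\,\indic{Y \in \mc{V}_c}]$, completing the argument.

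The main obstacle will be the scalar log inequality: the direction $-\log p \leq c(1-p)$ is delicate, with the two sides tangent at $p = 1$ only when $c = 1$, so the specific choice of the entropy threshold $e^{-c/2}$ defining $\mc{V}_c$ is what makes the exponent $c$ come out correctly in the target bound. The $\min(a,b) \geq ab$ trick in step two is what makes the worst-case watermark conditional $q(y\mid\xi)$ disappear from the bound, leaving only the moment constraint $\Ep_\xi[q(y\mid\xi)] = p(y)$ imposed by distortion-freeness as the usable structure.
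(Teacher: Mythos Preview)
Your approach is essentially the paper's: the paper bounds $1-D_{TV}((Y,\xi),(Y',\xi))$ from below by $\P(Y=Y')=\sum_y p(y)^2$ via the independent coupling $Y'\perp(Y,\xi)$, which is exactly what your $\min(a,b)\ge ab$ step produces after integrating out $\xi$, and your scalar inequality $-\log p\le c(1-p)$ on $[e^{-c/2},1]$ is a restatement of the paper's $1-x\ge e^{-cx}$ on $[0,1-e^{-c/2}]$. One caveat you should be aware of (shared with the paper): the boundary value you computed, $f(e^{-c/2})=c(1/2-e^{-c/2})$, is nonnegative only for $c\ge 2\ln 2$, so the concavity argument as written does not cover all $c>0$.
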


Lemma~\ref{lemma:lower-bound-all} implies it is impossible to test between 
any watermarked and non-watermarked text (i.e., between $Y$ versus $Y'$) that are 
equal in distribution (i.e., distortion-free) if the text typically has low watermark potential,  
irrespective of the design of the watermark key;
in particular, the sum of the 
Type I and II (resp., false positive/negative) error rates of $h$ will be close to one if the watermark potential is close to zero.
The theorem is not tight: depending on the language model, its result may be vacuous for small 
values of $c$ 
(e.g., the constants which appear in our upper bounds) since only texts 
whose token likelihoods all exceed $\exp(-c/2)$ contribute to the lower bound.
Also our upper bounds scale inverse exponentially with the square of 
the watermark potential, which will always be smaller 
than the watermark potential itself since the watermark potential is bounded between 
zero and one.


The point of the forthcoming $p$-value upper bounds for the watermarking strategies in Sections~\ref{sec:watermark} and \ref{sec:gumbel} 
is to establish the existence of test statistics for each watermark such that
the statistical power of the watermark improves exponentially with the length of the text and decays at most 
linearly with the length of the watermark key sequence.
The test statistics we use to prove these upper bounds differ slightly from those we employ in our experiments: 
in the former case, we prioritize the simplicity of stating the bounds in terms of watermark potential, 
whereas in the latter case, we prioritize empirical performance. 


\subsection{Watermarking via inverse transform sampling}\label{sec:watermark}
Inverse transform sampling is a general technique for sampling from a univariate distribution 
by taking the pushforward of a uniform
random variable through its inverse cumulative distribution function (CDF).
Crucially, the technique is valid irrespective of the ordering of the CDF, a property which we presently leverage 
to construct a watermarking strategy in which $\generate$ is distortion-free and also $\detect$ is agnostic.
In particular, we implement $\generate$ with a decoder that maps a sequence of uniform random variables and permutations to 
tokens using inverse transform sampling.
To detect watermarked text, the detector correlates the sequence of permuted indices of the tokens in the text
with the sequence of uniform random variables to detect watermarked text.
Meanwhile, for any nonwatermarked text, the sequence of permuted token indices will be i.i.d. uniform irrespective of the 
text itself and thus not correlate with the sequence of uniform random variables.

Formally, with $\Pi$ as the space of permutations over the vocabulary $[N]$,
for $\xi = (u,\pi) \in [0,1] \times \Pi =: \Xi$ and any distribution $\mu \in \Delta([N])$, define the decoder by
\begin{align}
  \decode(\xi,\mu ) \defeq \pi^{-1}\left(\min \left\{\pi(i) : 
    \mu(\{j:\pi(j)\leq\pi(i)\}) \geq u\right\}\right), \label{eqn:rho-defn}
\end{align}
i.e., $\decode(\xi,\mu)$ is the token with the smallest index in the permutation $\pi$ such that 
CDF of $\mu$ with respect to $\pi$ is at least $u$.
Generalizing the intuition from Example~\ref{example:binary}, we show 
this decoder is distortion-free in the following theorem.
\begin{theorem}\label{thm:stealth}
    Define $\decode$ by equation~\eqref{eqn:rho-defn}.
    Let $\pi \in \Pi$ be arbitrary and let
    $U \sim \txt{Unif}([0,1])$, with 
    $\xi \defeq (U,\pi)$.
    Then $\decode$ is distortion-free with respect to $\xi$.
\end{theorem}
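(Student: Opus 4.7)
The plan is to recognize that, once the permutation $\pi$ is fixed, the decoder $\decode$ in~\eqref{eqn:rho-defn} is nothing more than classical inverse transform sampling applied to the distribution $\mu$ reordered according to $\pi$. I will therefore fix an arbitrary $\pi \in \Pi$ and $\mu \in \Delta([N])$ and compute $\P(\decode((U,\pi),\mu) = y)$ directly for each $y \in [N]$; the conclusion then follows because the bound holds for every $\pi$ (so in particular when $\pi$ is itself random but independent of $U$).

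Concretely, I would introduce the $\pi$-ordered CDF $F : \{0,1,\dots,N\} \to [0,1]$ defined by
\begin{align*}
    F(0) \defeq 0, \qquad F(r) \defeq \mu\bigl(\{j : \pi(j) \leq r\}\bigr) \quad \text{for } r \in [N].
\end{align*}
Fix any token $y \in [N]$ with rank $r = \pi(y)$. I then unpack the definition of $\decode$ to show the event $\{\decode((U,\pi),\mu) = y\}$ coincides exactly with the event that $U$ lies in the half-open interval $(F(r-1),\,F(r)]$. Indeed, the minimization in~\eqref{eqn:rho-defn} selects the token $y$ whose $\pi$-rank is the smallest $r$ with $F(r) \geq U$, which is equivalent to $F(r-1) < U \leq F(r)$.

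Since $F$ is nondecreasing with $F(r) - F(r-1) = \mu(y)$, this interval has Lebesgue measure $\mu(y)$. As $U \sim \txt{Unif}([0,1])$, I conclude
\begin{align*}
    \P\bigl(\decode((U,\pi),\mu) = y\bigr) \;=\; F(r) - F(r-1) \;=\; \mu(y),
\end{align*}
which is exactly Definition~\ref{defn:distortion-free}. Because this computation holds for every fixed $\pi$, averaging over any (joint) distribution of $\pi$ independent of $U$ preserves the identity, so $\decode$ is distortion-free with respect to $\xi = (U,\pi)$.

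I do not foresee a substantive obstacle: the only mild subtlety is bookkeeping at the boundaries of the interval (whether $\{U = F(r)\}$ is assigned to $y$ or to the next token), but this is a probability-zero event under the uniform law on $U$, so it cannot affect $\P(\decode = y)$. The proof is essentially a one-line verification that inverse transform sampling is distribution-preserving, dressed up with a permutation that leaves the argument untouched.
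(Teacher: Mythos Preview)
Your proposal is correct and follows essentially the same argument as the paper: both identify that $\decode((U,\pi),\mu)=y$ precisely when $U$ falls in the interval between $\mu(\{y':\pi(y')<\pi(y)\})$ and $\mu(\{y':\pi(y')\leq\pi(y)\})$, whose width is $\mu(y)$. Your version is slightly more explicit in introducing the CDF notation $F$ and in tracking the half-open endpoint convention, but the substance is identical.
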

\begin{proof}
    Recalling Definition~\ref{defn:distortion-free}, the result follows from showing for any $\mu \in \Delta([N])$
    and $y \in [N]$ that $\P(\decode(\mu,\xi) = y) = \mu(y)$.
    To this end, by equation~\eqref{eqn:rho-defn},
    we have $\decode(\mu,\xi) = y$ 
    if and only if $U$ lies in the interval
    \begin{align*}
        \left[\mu(\{y':\pi(y')<\pi(y)\}),\mu(\{y':\pi(y')\leq\pi(y)\})\right).
    \end{align*}
    As the width of this interval is exactly $\mu(y)$, the result follows immediately.
\end{proof}

Having shown that the decoder is distortion-free, we now proceed to analyze the detectability of the watermark.
For convenience, define the normalization $\eta : [N] \to [0,1]$ by 
$\eta(i) \defeq (i-1)/(N-1)$.
Analogous to the toy example, the sequences $\{\eta(\pi_i(Y_i))\}_{i=1}^m$ and $U$ are correlated.
Thus, for the sake of analysis, we define alignment cost $d : (\mc{V} \times \Xi)^* \to \R$ by
\begin{align}
    d(y,(u,\pi)) \defeq -\sum_{i=1}^{\len{y}} (u_i - 1/2) \cdot (\eta(\pi_i(y_i)) - 1/2) \label{eqn:d-defn},
\end{align}
i.e., the negative covariance (each $U_i$ and $\eta(\pi_i(Y_i))$ both have expectation $1/2$).


We exactly characterize in Lemma~\ref{lemma:sound-transform} the difference in the expected value of our alignment cost 
on some text assuming the text is watermarked (i.e., generated using the same key as the detector)
versus not watermarked in terms of the watermark potential of the text (Definition~\ref{defn:water-ent}).
To state the result, we define the constant 
$C_0 \defeq \txt{Var}(\eta(\txt{Unif}([N])))$, where we abuse notation slightly 
to temporarily treat $\eta$ as a pushforward map over distributions.\footnote{
    Note that $C_0 = \txt{Var}(\txt{Unif}([0,1])) + o_N(1) = 1/12 + o_N(1)$.
}
We defer the proof of Lemma~\ref{lemma:sound-transform} to Appendix~\ref{app:transform}.

\begin{lemma}\label{lemma:sound-transform}
    Let $m,n \in \N$ with $n \geq m$, where $m$ is the generation length and $n$ is the watermark key length. 
    Define the decoder $\decode$ by equation~\eqref{eqn:rho-defn} 
    and the alignment cost $d$ by equation~\eqref{eqn:d-defn}.
    Let $\xi,\xi' \iid \txt{Unif}(\Xi^n)$ with $Y = \generate(\xi;m,p,\decode)$.
    Then almost surely for all $i \in [m]$ and $j \in [n]$ we have
    \begin{align*}
        \Ep[d(Y_i,\xi_j') - d(Y_i,\xi_i) \mid Y] = C_0 \cdot (1 - p(Y_i \mid Y_{:i-1})) = C_0 \alpha(Y_{i-1:i}).
    \end{align*}
\end{lemma}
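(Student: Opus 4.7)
The plan is to split $\Ep[d(Y_i,\xi_j') - d(Y_i,\xi_i) \mid Y]$ into two conditional expectations and handle each separately, using throughout the single-token factorization $d(Y_i,(u,\pi)) = -(u - 1/2)(\eta(\pi(Y_i)) - 1/2)$. The term involving the fresh key $\xi_j'=(U_j',\pi_j')$ is easy: since $\xi'$ is drawn independently of $(Y,\xi)$ and $U_j'$ is uniform on $[0,1]$ independent of $\pi_j'$, the tower property yields
\[\Ep[d(Y_i,\xi_j') \mid Y] \;=\; -\,\Ep[U_j' - 1/2]\cdot \Ep[\eta(\pi_j'(Y_i)) - 1/2 \mid Y] \;=\; 0.\]

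The heart of the argument is therefore to show $\Ep[d(Y_i,\xi_i) \mid Y] = -C_0\,(1 - p(Y_i \mid Y_{:i-1}))$. First I would reduce the conditioning from the whole trajectory $Y$ down to the prefix $Y_{:i}$: each subsequent token $Y_{i+1:m}$ is a deterministic function of $(Y_{:i},\xi_{i+1:m})$, and $\xi_{i+1:m}$ is independent of $\xi_i$, so $\xi_i$ is conditionally independent of $Y_{i+1:m}$ given $Y_{:i}$ and hence $\Ep[d(Y_i,\xi_i)\mid Y] = \Ep[d(Y_i,\xi_i)\mid Y_{:i}]$.

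Next I would unpack the conditional law of $(U_i,\pi_i)$ given $Y_{:i}$. Writing $\mu := p(\cdot \mid Y_{:i-1})$ and $k := \pi_i(Y_i)$, equation~\eqref{eqn:rho-defn} shows that $Y_i = y$ iff $U_i$ lies in the length-$\mu(y)$ interval $[S_{<k},\,S_{<k}+\mu(y))$, where $S_{<k} := \mu(\{y' : \pi_i(y') < k\})$. Because this interval has length $\mu(y)$ for every $\pi_i$, Bayes' rule gives that $\pi_i$ remains uniform on $\Pi$ conditional on $Y_{:i}$, while $U_i$ is uniform on the interval conditional on $(Y_{:i},\pi_i)$. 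Hence $\Ep[U_i - 1/2 \mid Y_{:i},\pi_i] = S_{<k} + \mu(Y_i)/2 - 1/2$, and $\pi_i(Y_i)$ is uniform on $[N]$ given $Y_{:i}$, so $\Ep[\eta(\pi_i(Y_i)) - 1/2 \mid Y_{:i}] = 0$. The constant contribution $\mu(Y_i)/2 - 1/2$ therefore drops out, leaving
\[\Ep[(U_i - 1/2)(\eta(\pi_i(Y_i)) - 1/2) \mid Y_{:i}] \;=\; \Ep[(\eta(\pi_i(Y_i)) - 1/2)\,S_{<\pi_i(Y_i)} \mid Y_{:i}].\]

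Finally, I would expand $S_{<k} = \sum_{y' \ne Y_i} \mu(y')\,\mathbf{1}\{\pi_i(y') < k\}$ and use that, for each fixed $y' \ne Y_i$, the pair $(\pi_i(Y_i),\pi_i(y'))$ is uniform over ordered pairs of distinct elements of $[N]$, which gives $\P(\pi_i(y') < k \mid \pi_i(Y_i)=k) = (k-1)/(N-1) = \eta(k)$. Each such term then contributes $\Ep[(\eta(K) - 1/2)\,\eta(K)] = \text{Var}(\eta(K)) = C_0$ with $K\sim\text{Unif}([N])$, since $\Ep[\eta(K)]=1/2$. Summing over $y'\ne Y_i$ with weights $\mu(y')$ yields $C_0\,(1-\mu(Y_i)) = C_0\,(1 - p(Y_i \mid Y_{:i-1}))$, and combining the two terms gives the claimed identity. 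The main obstacle is the third paragraph: one must simultaneously track the conditional laws of $U_i$ and $\pi_i$ given $Y_{:i}$ and then recognize that the relevant covariance collapses exactly to $\text{Var}(\eta(K))$; the remainder is routine bookkeeping.
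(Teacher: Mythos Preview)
Your proposal is correct and follows essentially the same approach as the paper: the paper packages the single-token covariance computation (your third and fourth paragraphs) as a standalone supporting lemma, Lemma~\ref{lemma:core-result-cov}, which it proves via the same Bayes-rule argument showing $\pi \mid Y$ is uniform and $U \mid \pi,Y$ is uniform on the CDF interval, and then derives $\Ep[U \mid Y,\pi(Y)] = 1/2 + (\eta(\pi(Y))-1/2)(1-\mu(Y))$ directly rather than expanding $S_{<k}$ term by term. You are also more explicit than the paper about the reduction of conditioning from all of $Y$ to $Y_{:i}$, which the paper leaves implicit when invoking the single-token lemma.
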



Summing the result of Lemma~\ref{lemma:sound-transform} over $i \in [m]$ implies
for any $j \in [n]$ that
\begin{align*}
    \Ep[d(Y,\xi_{(j+1:j+m)\% n}') - d(Y,\xi_{1:m}) \mid Y] = C_0 m \alpha(Y).
\end{align*}
Thus,
we can upper bound the $p$-value output by $\detect$ in Lemma~\ref{lemma:perm-test} 
using a standard concentration argument and taking a union bound over $j \in [n]$.
We defer the proof of Lemma~\ref{lemma:perm-test} to Appendix~\ref{app:transform}.
In fact, we actually prove a more general result for $k \leq m$ wherein we allow $\wt{Y}$ to be
a subsequence of $Y$ which the user may choose adaptively.
We defer this more general result to Appendix~\ref{app:transform} as it is more cumbersome to state.

\begin{lemma}\label{lemma:perm-test}
    Let $m,n \in \N$ with $n \geq m$, where $m$ is the generation length and $n$ is the watermark key length. 
    Define the decoder $\decode$ by equation~\eqref{eqn:rho-defn}, alignment cost $d$ by equation~\eqref{eqn:d-defn},
    and $\phi$ by Algorithm~\ref{algorithm:test} with block size $k = m$.
    Let $\xi,\xi' \iid \txt{Unif}(\Xi^n)$ with $Y = \generate(\xi;n,p,\decode)$ and
    $\wt{Y} = Y$.
    Then almost surely
    \begin{align*}
        \P(\phi(\wt{Y},\xi') \leq \phi(\wt{Y},\xi) \mid \wt{Y}) \leq 2n\exp(-k C_0^2 \alpha(\wt{Y})^2/2).
    \end{align*}
\end{lemma}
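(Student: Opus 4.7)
The plan is to reduce the $p$-value event to a sum of single-block tail probabilities via a union bound over the $n$ shifts of the watermark key, and then apply Hoeffding's inequality conditional on $\wt{Y} = Y$, with the mean gap supplied by Lemma~\ref{lemma:sound-transform}.

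\textbf{Reduction.} Since $\wt{Y}$ has length equal to the block size $k = m$ and was generated from $\xi_{1:m}$, the $(i,j) = (1,1)$ entry of the cost matrix in Algorithm~\ref{algorithm:test} certifies $\phi(\wt{Y}, \xi) \leq d(Y, \xi_{1:m})$. The event $\phi(\wt{Y},\xi') \leq \phi(\wt{Y},\xi)$ is therefore contained in $\min_{j \in [n]} d(Y, (\xi')^j) \leq d(Y, \xi_{1:m})$, and a union bound over the $n$ shifts gives
\begin{align*}
\P\bigl(\phi(\wt{Y}, \xi') \leq \phi(\wt{Y}, \xi) \bigm| Y\bigr) \leq \sum_{j=1}^n \P\bigl(d(Y, (\xi')^j) \leq d(Y, \xi_{1:m}) \bigm| Y\bigr).
\end{align*}

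\textbf{Mean gap and concentration.} Summing the per-token identity in Lemma~\ref{lemma:sound-transform} over $i \in [m]$ gives, for every $j$, $\Ep[d(Y, (\xi')^j) - d(Y, \xi_{1:m}) \mid Y] = C_0 m \alpha(\wt{Y})$, so each summand above is the probability that a specific sum falls at least $C_0 m \alpha(\wt{Y})$ below its conditional mean. Each term of $d$ is a product of two quantities in $[-1/2, 1/2]$ and hence lies in $[-1/4, 1/4]$, so each of the $m$ summands of $d(Y, (\xi')^j) - d(Y, \xi_{1:m})$ has bounded range. Applied conditionally on $Y$, Hoeffding's inequality then yields $\P\bigl(d(Y, (\xi')^j) \leq d(Y, \xi_{1:m}) \bigm| Y\bigr) \leq 2\exp(-k C_0^2 \alpha(\wt{Y})^2/2)$ for each $j$ (the factor of $2$ being slack one can afford via the two-sided form of Hoeffding or a slightly loose range bound), and summing over $j \in [n]$ delivers the stated $2n \exp(-k C_0^2 \alpha(\wt{Y})^2 /2)$ bound.

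\textbf{Main obstacle.} The nontrivial ingredient for invoking Hoeffding is the conditional independence of the $m$ summands given $Y$. The $\xi'$-contribution is immediate, since $\xi'$ is unconditionally independent of $(Y, \xi)$. The subtle point is the $\xi$-contribution: although $\xi_{1:m}$ \emph{produced} $Y$ and is therefore strongly correlated with it, the posterior must factorize as
\begin{align*}
p(\xi_{1:m} \mid Y) = \prod_{i=1}^m p(\xi_i \mid Y_{:i}),
\end{align*}
which I would derive from the Markov structure of $\generate$: each $\xi_i$ is an i.i.d.\ prior draw, and $Y_i$ is a deterministic function of $(\xi_i, Y_{:i-1})$ under the decoder~\eqref{eqn:rho-defn}. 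With this factorization, the per-index summands of $d(Y, \xi_{1:m})$ become conditionally independent given $Y$, at which point Hoeffding applies term-by-term. Everything else in the argument---the reduction, the expected gap via Lemma~\ref{lemma:sound-transform}, and the final union bound---is routine bookkeeping.
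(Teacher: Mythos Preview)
Your proposal is correct and follows essentially the same approach as the paper: reduce via union bound over the $n$ shifts, invoke Lemma~\ref{lemma:sound-transform} for the mean gap, establish conditional independence of the summands given $Y$, and apply Hoeffding. The paper obtains the leading factor of $2$ by explicitly splitting the event $\{d(Y,(\xi')^j)\le d(Y,\xi_{1:m})\}$ into the two one-sided deviations $\{d(Y,\xi_{1:m})-\Ep[d(Y,\xi_{1:m})\mid Y]\ge kC_0\alpha/2\}$ and $\{\Ep[d(Y,(\xi')^j)\mid Y]-d(Y,(\xi')^j)\ge kC_0\alpha/2\}$, which is cleaner than your vague appeal to two-sided Hoeffding; otherwise your argument---including the Markov factorization for conditional independence, on which you are actually more explicit than the paper---matches.
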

Lemma~\ref{lemma:perm-test} implies that with high probability the value of the test statistic on watermarked text with the correct key 
will be lower than with a resampled key.
In particular, ignoring discretization errors due to the finite number of resamples $T$ in $\detect$, 
the lemma implies watermarked samples with watermark potential bounded 
away from zero (i.e., if the language model is not effectively deterministic) 
will have exponentially small expected $p$-values with respect to the length $m$ of the text.
The bound grows only linearly with the length $n$ of the random number sequence,
implying for moderately large $m$ (e.g., $m = 50$) an LM provider can generate plenty of distortion-free watermarked text
(i.e., $n = 2^{\Omega(m)}$ total tokens) while still enabling detection of the watermark from snippets of $m$ tokens (e.g.,
$50$ tokens typically amount to a couple sentences of text).
Of course, recall the computational complexity of detection scales linearly with $n$, 
which in practice may be a more relevant limitation than the statistical power of the watermark.\footnote{
    Note that both $\detect$ and the test statistic (Algorithm~\ref{algorithm:test}) are easily parallelizeable.
}

\subsubsection{Robustness to substitutions, insertions and deletions}
We show in Lemma~\ref{lemma:perm-test-subs} an analogous result to Lemma~\ref{lemma:perm-test} holds 
even if an adversary corrupts the original watermarked text by substituting tokens.
To state the lemma, we introduce a quantity $\wt{\alpha}$ which depends on both the corrupted and original watermarked text and 
accounts for the decrease in the expected value of 
the test statistic (which recall for the original text is equal up to a numerical constant to the watermark potential of the text)
due to token substitutions.
We defer the proof of Lemma~\ref{lemma:perm-test-subs} to Appendix~\ref{app:transform}.


\begin{lemma}\label{lemma:perm-test-subs}
    Let $m,n \in \N$ with $n \geq m$, where $m$ is the generation length and $n$ is the watermark key length.
    Define the decoder $\decode$ by equation~\eqref{eqn:rho-defn}, alignment cost $d$ by equation~\eqref{eqn:d-defn},
    and $\phi$ by Algorithm~\ref{algorithm:test} with $k = m$.
    Let $\xi,\xi' \iid \txt{Unif}(\Xi^n)$ with $Y = \generate(\xi;m,p,\decode)$
    and let $\wt{Y} \in \mc{V}^m$ be conditionally independent 
    of $\xi$ and $\xi'$ given $Y$. Define 
    \begin{align*}
        \wt{\alpha}(y,\wt{y}) \defeq \frac{1}{\len{y}} \sum_{i=1}^{\len{y}} \indic{y_i = \wt{y}_i} \left(1 - p(y_i \mid y_{:i-1}) \right) - \indic{y_i \neq \wt{y}_i} \frac{1}{N-1}.
    \end{align*}
    Then almost surely
    \begin{align*}
        \P(\phi(\wt{Y},\xi') \leq \phi(\wt{Y},\xi) \mid Y,\wt{Y}) \leq 2n\exp(-k C_0^2 \wt{\alpha}(Y,\wt{Y})^2/2).
    \end{align*}
\end{lemma}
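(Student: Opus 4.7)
The plan is to mirror the proof of Lemma~\ref{lemma:perm-test}, now computing the expected per-token alignment cost separately depending on whether or not the adversary has substituted each token. Concretely, for each offset $j \in [n]$ I would decompose
\[d(\wt{Y},\xi'^j) - d(\wt{Y}, \xi_{1:m}) = \sum_{i=1}^{m} Z_i^{(j)},\]
where $Z_i^{(j)} := d(\wt{Y}_i, \xi'_{(j+i-1)\%n}) - d(\wt{Y}_i, \xi_i)$ and $\xi'^j$ is the length-$m$ block of $\xi'$ starting at offset $j$. Since $\xi'$ is uniform and independent of $(Y, \wt{Y}, \xi)$, and both $U'_\cdot - 1/2$ and $\eta(\pi'_\cdot(\wt{Y}_i)) - 1/2$ have mean zero, the resampled term contributes zero in expectation, so $\Ep[Z_i^{(j)} \mid Y, \wt{Y}] = -\Ep[d(\wt{Y}_i, \xi_i) \mid Y, \wt{Y}]$; moreover, the conditional independence of $\wt{Y}$ from $\xi$ given $Y$ means this further reduces to $-\Ep[d(\wt{Y}_i, \xi_i) \mid Y]$ evaluated at the fixed value $\wt{Y}_i$.

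The per-token expectation splits into two cases. When $\wt{Y}_i = Y_i$, Lemma~\ref{lemma:sound-transform} directly gives $\Ep[Z_i^{(j)} \mid Y] = C_0(1 - p(Y_i \mid Y_{:i-1}))$, matching the first term in the definition of $\wt{\alpha}$. When $\wt{Y}_i \neq Y_i$, I would compute the expectation by observing (via Bayes, since $\P(Y_i = y \mid \pi_i) = \mu(y)$ does not depend on $\pi_i$) that $\pi_i \mid Y_i$ remains uniform over permutations while $U_i \mid \pi_i, Y_i$ is uniform on the interval $[F^{\pi_i}_{\pi_i(Y_i)-1}, F^{\pi_i}_{\pi_i(Y_i)})$ of length $\mu(Y_i)$. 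Carrying through the resulting averaging over permutations of $\mc{V} \setminus \{Y_i\}$ yields a negative contribution whose magnitude is at most $C_0/(N-1)$, matching the penalty term in $\wt{\alpha}$. Summing over $i$ then gives
\[\Ep\left[\sum_{i=1}^m Z_i^{(j)} \;\Big|\; Y, \wt{Y}\right] \geq C_0 m \wt{\alpha}(Y, \wt{Y}).\]

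The final step is a concentration argument. Conditionally on $(Y, \wt{Y})$, the $Z_i^{(j)}$ are independent across $i$: the $\xi_i$ for $i \in [m]$ are conditionally independent given $Y$ (since each $\xi_i$ affects $Y$ only through $Y_i$), and $\xi'$ is independent of everything. Each $Z_i^{(j)}$ is also uniformly bounded, since $|(u-1/2)(\eta(\cdot)-1/2)| \leq 1/4$. Hoeffding's inequality therefore yields a bound of the form $2\exp(-k C_0^2 \wt{\alpha}(Y,\wt{Y})^2/2)$ on $\P(\sum_i Z_i^{(j)} \leq 0 \mid Y, \wt{Y})$, and a union bound over the $n$ choices of offset $j$ produces the additional factor of $n$ in the lemma. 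The main obstacle is the substituted-token calculation: because the conditional distribution of $U_i$ given $\pi_i$ depends non-trivially on the full permutation through the cumulative values $F^{\pi_i}_\cdot$, the symmetry is not as immediate as in the unchanged-token case, and obtaining the $1/(N-1)$ bound requires either a direct expansion over permutations or a careful exchangeability argument that groups permutations by their action on $\{Y_i, \wt{Y}_i\}$.
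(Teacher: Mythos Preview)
Your proposal is correct and follows exactly the paper's approach: per-token decomposition into unchanged versus substituted cases, then Hoeffding plus a union bound over the $n$ offsets. For the substituted-token step you flag as the main obstacle, the paper (Observation~\ref{obs:robust-perm-test}) sidesteps any CDF expansion via a mixture trick: conditionally on $Y,\wt{Y}$ with $\wt{Y}\neq Y$, the variable that equals $\eta(\pi(Y))$ with probability $1/N$ and $\eta(\pi(\wt{Y}))$ otherwise is uniform on $\{\eta(i)\}_{i\in[N]}$ independently of $U$, whence $0 = \tfrac{1}{N}\txt{Cov}(U,\eta(\pi(Y))\mid Y,\wt{Y}) + \tfrac{N-1}{N}\txt{Cov}(U,\eta(\pi(\wt{Y}))\mid Y,\wt{Y})$ immediately yields $\txt{Cov}(U,\eta(\pi(\wt{Y}))\mid Y,\wt{Y}) = -\tfrac{1}{N-1}\txt{Cov}(U,\eta(\pi(Y))\mid Y,\wt{Y})$.
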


Lemma~\ref{lemma:perm-test-subs} implies that even if an adversary replaces the vast 
majority of tokens in a watermarked text, detection with low $p$-values will still be possible 
so long as the remaining tokens have watermark potential bounded away from zero.
In particular, the permuted indices of the original tokens will still positively correlate with 
the corresponding uniform random variables from the watermark key sequence, while those of the substituted tokens 
will exhibit a small negative correlation scaling as $O(1/N)$.

To handle insertions and deletions, we can robustify our test statistic by incorporating
a soft notion of edit distance into our original alignment cost.
The parameter $\gamma$ in Definition~\ref{defn:levenshtein} assigns a cost to 
each insertion and deletion operation when aligning the tokens $y$ with the sequence $\xi$,
while the base alignment cost $d_0$ defines the quality of the alignment via a cost
function over substitutions.
In practice, we drop the minimizations over $y' \in \mc{V}$ and $\xi' \in \Xi$ in the second and third cases respectively 
of the definition;
we include them here to make our subsequent theoretical analysis cleaner.
\begin{definition}\label{defn:levenshtein}
    (Levenshtein cost) Let $\gamma \in \R$ and $d_0 : \mc{V} \times \Xi \to \R$.
    For $y \in \mc{V}^*$ and $\xi \in \Xi^*$, define the \textit{Levenshtein cost} $d_\gamma : \mc{V}^* \times \Xi^* \to \R$ by 
    \begin{align*}
        d_\gamma(y,\xi) \defeq \min
        \begin{cases} 
            d_\gamma(y_{2:},\xi_{2:}) + d_0(y_1,\xi_1)  \\
            d_\gamma(y,\xi_{2:}) + \min_{y' \in \mc{V}}d_0(y',\xi_1) + \gamma \\
            d_\gamma(y_{2:},\xi) + \min_{\xi' \in \Xi}d_0(y_1,\xi') + \gamma,
        \end{cases}
    \end{align*}
    with $d_\gamma(y,(u,\pi)) \defeq \gamma \cdot \len{y}$ if $\xi$ is empty and vice versa (as base cases).\footnote{
        For $y \in \mc{V}^*$ (resp., $\xi \in \Xi^*$), we let $y_{\len{y}+1:}$ (resp., $\xi_{\len{\xi}+1}$)
        denote the empty string/sequence.
    }
\end{definition}

Redefining the test statistic $\phi$ using $d_\gamma$ as the alignment cost---using $d_0$ from equation~\eqref{eqn:d-defn}---ensures 
$\detect$ is robust not only to substituting tokens, 
but also inserting and deleting tokens from watermarked text,
as we show in Lemma~\ref{lemma:perm-test-indel}.
We defer the proof of Lemma~\ref{lemma:perm-test-indel} to Appendix~\ref{app:transform}.
To state the lemma, we first recursively define a notion of edit distance between two strings.
The definition is equivalent to the minimum number of insertion and/or deletion operations 
needed to transform one string into the other (see Lemma~\ref{obs:edit-dist-basic}).

\begin{definition}\label{defn:edit-dist}
    (edit distance)
    For $y,\wt{y} \in \mc{V}^*$, define the \textit{edit distance} by 
    \begin{align*}
        d_\txt{edit}(y,\wt{y}) \defeq
        \begin{cases} 
            d_\txt{edit}(y_{2:},\wt{y}_{2:}) & y_1 = \wt{y}_1 \\
            1 + \min \{d_\txt{edit}(y_{2:},\wt{y}),d_\txt{edit}(y,\wt{y}_{2:})\} & y_1 \neq \wt{y}_1,
        \end{cases}
    \end{align*}
    with $d_\txt{edit}(y,\wt{y}) = \len{y}$ if $\wt{y}$ is empty and vice versa.
\end{definition}


\begin{lemma}\label{lemma:perm-test-indel}
    Let $n,m \in \N$ with $n \geq m$, where $m$ is the generation length and $n$ is the watermark key length.
    Define the decoder $\decode$ by equation~\eqref{eqn:rho-defn},
    alignment cost $d = d_\gamma$ with $d_0$ from equation~\eqref{eqn:d-defn} and $\gamma > 1/2$,
    and $\phi$ by Algorithm~\ref{algorithm:test} using block size $k \leq m$ that divides evenly into $m$.
    Let $\xi,\xi' \iid \txt{Unif}(\Xi^n)$ with $Y = \generate(\xi;m,p,\decode)$.
    Let $\wt{Y} \in \mc{V}^m$ be conditionally independent 
    of $\xi$ and $\xi'$ given $Y$, with $d_\txt{edit}(Y,\wt{Y}) \leq \eps m$.
    Then almost surely
    \begin{align*}
        \P(\phi(\wt{Y},\xi') \leq \phi(\wt{Y},\xi) \mid \wt{Y},Y) \leq m n(2k)^{k/(4\gamma-1)}\exp(-k C_0^2 (\alpha(Y) - \gamma\eps)_+^2/2).
    \end{align*}
\end{lemma}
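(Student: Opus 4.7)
The plan is to build on Lemma~\ref{lemma:perm-test-subs}, replacing its substitution-only alignment with the Levenshtein cost $d_\gamma$. The key new difficulty is that $d_\gamma$ minimizes over a much larger family of alignments, so both the true-key upper bound and the random-key union bound become more delicate: the exponent must reflect the penalty $\gamma$ per insertion/deletion, and the polynomial prefactor must count the alignments we effectively union-bound over.

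First I would bound $\phi(\wt{Y},\xi)$ from above. Since $k\mid m$, partition $Y$ into $m/k$ disjoint blocks of length $k$. Fix an optimal edit-distance alignment between $Y$ and $\wt{Y}$, and charge each of its $\le\eps m$ insert/delete operations to the block of $Y$ it touches. For each block index $i$, the image in $\wt{Y}$ is a contiguous substring differing from $Y^i$ by at most $q_i$ edits; by padding or clipping at most $|I_i-D_i|$ symbols from neighboring blocks, this extracts a length-$k$ window $\wt{Y}^{j_i}$ with edit distance $O(q_i)$ from $Y^i$. I can then exhibit an explicit alignment in $d_\gamma(\wt{Y}^{j_i},\xi^i)$ that realizes those edits (paying $\gamma$ plus a bounded $d_0$ correction per edit) and matches the remaining positions to the true key. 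By Lemma~\ref{lemma:sound-transform}, the conditional mean of the match cost equals $-C_0\sum_{t\in\text{matched}}(1-p(Y_t\mid Y_{:t-1}))$, and bounded-differences concentration (noting $|d_0|\le 1/4$) controls it up to an $O(\sqrt{k})$ fluctuation. Since $\phi$ is the minimum over $i$, and the minimum is at most the average, summing the per-block bounds and dividing by $m/k$ yields, with high probability,
\[
    \phi(\wt{Y},\xi) \;\le\; -\tfrac{k C_0}{2}\,\alpha(Y) + \gamma\eps k + O(\sqrt{k}),
\]
at the cost of a union bound over $m/k$ blocks.

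Next I would upper bound $\P(\phi(\wt{Y},\xi')\le\tau\mid\wt{Y},Y)$ for the threshold $\tau$ above. Fix any of the $\le mn$ pairs $(i,j)$. I parameterize each alignment path appearing in $d_\gamma(\wt{Y}^i,(\xi')^j)$ by its number $q$ of insertion steps (since both sequences have length $k$, deletions must also number $q$). The path's cost then splits as $(\gamma-d_0)$-type contributions summing to at least $(4\gamma-1)q/2$ (using the $|d_0|\le 1/4$ bound), plus a sum over the $k-q$ matched positions of $d_0$-values at i.i.d.\ entries of $\xi'$, which is zero-mean and bounded. Hoeffding gives a $\exp(-c(\tau-(4\gamma-1)q/2)^2/k)$ tail for each fixed alignment, and the number of paths with $q$ edits is at most $(2k)^q$. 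The crucial observation is that whenever $(4\gamma-1)q/2>\tau+k/4$ the path is deterministically above threshold, which caps the relevant $q$ at $O(k/(4\gamma-1))$; summing the tails over $q$ up to this cap and union-bounding across $(i,j)$ produces the promised prefactor $mn(2k)^{k/(4\gamma-1)}$ multiplying $\exp(-kC_0^2(\alpha(Y)-\gamma\eps)_+^2/2)$ after plugging in $\tau$.

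The main obstacle I anticipate is the combinatorial bookkeeping in the union bound, specifically isolating the exponent $k/(4\gamma-1)$ cleanly: the exact count of length-$k$-versus-length-$k$ Levenshtein alignment paths with $q$ inserts is sensitive to how one handles the base cases $d_\gamma(y,\emptyset)=\gamma|y|$, and the cancellation $2\gamma-1/2$ in the per-edit cost must be tracked carefully against the Hoeffding constant so that the threshold $(\alpha(Y)-\gamma\eps)_+$ appears without extraneous factors. A secondary obstacle is the boundary correction in the pigeonhole step, since clipping a block of $\wt{Y}$ to length $k$ may straddle two blocks of $Y$; handling this requires choosing $j_i$ based on the underlying edit alignment and verifying that the induced per-block edit counts still sum to $\le\eps m$ up to a universal constant that can be absorbed into $\gamma\eps$.
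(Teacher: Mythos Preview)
Your proposal is correct and follows essentially the same two-part structure as the paper: a pigeonhole argument over the $m/k$ blocks combined with Hoeffding concentration to upper-bound $\phi(\wt{Y},\xi)$, and a union bound over alignment paths (with the number of edits capped at $k/(4\gamma-1)$) to control $\phi(\wt{Y},\xi')$. The paper streamlines exactly the two obstacles you flag by proving two clean intermediate results---a Lipschitz inequality $d_\gamma(y,\xi)\le\gamma\,d_{\txt{edit}}(y,\wt{y})+d_\gamma(\wt{y},\xi)$ for the true-key bound, and a structural lemma writing $d_\gamma(y,\xi)=\gamma\,d_{\txt{edit}}(y,y')+d_0(y',\xi)$ for some $y'$ within edit distance $\len{\xi}/(4\gamma-1)$ for the random-key bound---so that neither explicit path enumeration nor boundary padding of length-$k$ windows is needed.
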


We prove the result by showing there must exist a length $k$ substring of the corrupted text $\wt{Y}$ 
within edit distance $k \eps$ of a substring of $Y$ that the detector will be able to distinguish as watermarked.
For fixed $k$, the set of strings within edit distance $\eps k$ of an original block
watermarked text blows up combinatorially with $\eps$.
To ensure we can detect the watermark, the result implies we must set $\gamma = \Omega(1/\eps)$,
which means our bound on the expected $p$-value is vacuous as soon as $\eps = \Omega(1/\log k)$.
Admittedly, our analysis is not tight; for example, as a preview of the experimental results to come, 
in practice we find smaller values of $\gamma$ (i.e., $\gamma < 1$) to perform 
significantly better.
However, one takeaway from the result is that using a block size $k < m$,
where here $m$ is the length of the input text, for detection can be an effective strategy
when the user has substantially corrupted the text.
The assumption that $k$ divides evenly into $m$ is an artifact of our analysis and not important in practice.

\subsubsection{What we run in practice}


In practice, to reduce overhead in both $\generate$ and $\detect$, 
we use a single random permutation\footnote{
    In principle, with a single random permutation the permuted token indices of both watermarked and nonwatermarked text 
    are no longer conditionally independent of each other, and so the results of Lemmas~\ref{lemma:perm-test},
    \ref{lemma:perm-test-subs} and \ref{lemma:perm-test-indel} no longer apply.
    However, in practice we observe no degradation in statistical power. Also, irrespective of the lemmas,
    the $p$-values from $\detect$ are still valid by construction.
}
instead of a full sequence, i.e., we let
$\pi_i = \pi$ for all $i \in [n]$ for $\pi \sim \txt{Unif}(\pi)$. 
Recall Theorem~\ref{thm:stealth} makes no assumption about the distribution of the permutations;
thus, the watermark is still distortion-free.
Also, for the test statistic, we find using
\begin{align}
    d(y,(u,\pi)) \defeq \sum_{i=1}^{\len{y}} |u_i -  \eta(\pi_i(y_i))| \label{eqn:d-practice}
\end{align}
as the alignment cost performs better empirically than the alignment cost in equation~\eqref{eqn:d-defn}.
To reiterate, the output of $\detect$ is a valid $p$-value irrespective of the test statistic we use.

Henceforth, we refer to this version of the watermarking strategy as {\its},
and we refer to the corresponding Levenshtein version as {\itsedit}, wherein we define 
the base alignment cost $d_0$ by equation~\eqref{eqn:d-practice}
and use the following simplified notion of Levenshtein cost:

\begin{definition}\label{defn:levenshtein-simple}
    (simple Levenshtein cost) Let $\gamma \in \R$ and $d_0 : \mc{V} \times \Xi \to \R$.
    For $y \in \mc{V}^*$ and $\xi \in \Xi^*$, define the alignment cost function $d_\gamma : \mc{V}^* \times \Xi^* \to \R$ by 
    \begin{align*}
        d_\gamma(y,\xi) \defeq \min
        \begin{cases} 
            d_\gamma(y_{2:},\xi_{2:}) + d_0(y_1,\xi_1)  \\
            d_\gamma(y,\xi_{2:}) + \gamma \\
            d_\gamma(y_{2:},\xi) + \gamma,
        \end{cases}
    \end{align*}
    with $d_\gamma(y,(u,\pi)) \defeq \gamma \cdot \len{y}$ if $\xi$ is empty and vice versa (as base cases).\footnote{
        For $y \in \mc{V}^*$ (resp., $\xi \in \Xi^*$), we let $y_{\len{y}+1:}$ (resp., $\xi_{\len{\xi}+1}$)
        denote the empty string/sequence.
    }
\end{definition}
In summary, for {\its} we use the decoder from equation~\eqref{eqn:rho-defn}, the test statistic 
from Algorithm~\ref{algorithm:test} with the alignment cost from equation~\eqref{eqn:d-practice}, and the watermark 
key distribution as the uniform distribution over $[0,1]^n \times \Pi$, where recall $n$ is the length of 
the watermark key sequence.
Meanwhile, {\itsedit} differs from {\its} only in that we define the test statistic using the Levenshtein cost from 
Definition~\ref{defn:levenshtein-simple} with the base cost again from equation~\eqref{eqn:d-practice}.

\subsection{Watermarking via exponential minimum sampling}\label{sec:gumbel}
\citet{aaronson2023openai} proposes mapping variables in $[0,1]^N$
to tokens in the vocabulary $[N]$ using exponential minimum sampling
to generate watermarked text.
Whereas \citet{aaronson2023openai} proposes the use of distortion-inducing hashes much like \citet{kirchenbauer2023watermark},
we use exponential minimum sampling to implement the decoder in $\generate$, which (after defining a suitable corresponding 
test statistic) enables an alternative distortion-free and robust watermarking strategy to inverse transform sampling.
In particular, for $\xi \in [0,1]^N =: \Xi$ and $\mu \in \Delta([N])$, define the decoder by
\begin{align}\label{eqn:rho-defn-gumbel}
    \decode(\xi,\mu) \defeq \argmin_{i \in [N]} \ -\log (\xi_i)/\mu(i).
\end{align}
We show this decoder is distortion-free in Theorem~\ref{thm:stealth-gumbel},
whose proof we defer to Appendix~\ref{app:gumbel}.
\begin{theorem}\label{thm:stealth-gumbel}
    Define the decoder $\decode$ by equation~\eqref{eqn:rho-defn-gumbel} and
    let $\xi \sim \txt{Unif}([0,1]^N)$.
    Then $\decode$ is distortion-free with respect to $\xi$.
\end{theorem}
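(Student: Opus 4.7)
The plan is to recognize the decoder as an instance of the classical Gumbel/exponential-minimum trick and verify distortion-freeness via the well-known distributional fact that the argmin of independent exponentials with rates $\lambda_i$ is $i$ with probability $\lambda_i/\sum_j \lambda_j$.

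First I would reduce $\xi$ to independent exponentials. Since $\xi \sim \txt{Unif}([0,1]^N)$ has independent coordinates, the transformation $E_i \defeq -\log(\xi_i)$ yields $E_1,\dots,E_N \iid \txt{Exp}(1)$. Substituting into the definition of the decoder gives
\[
    \decode(\xi,\mu) = \argmin_{i \in [N]} E_i/\mu(i).
\]
For any index $i$ with $\mu(i) > 0$, the scaled variable $E_i/\mu(i)$ is distributed as $\txt{Exp}(\mu(i))$, and independence across coordinates is preserved. For indices with $\mu(i) = 0$, the ratio is $+\infty$ almost surely, so such indices are never selected by the argmin, which is consistent with the target probability $\mu(i)=0$; I would handle this as a trivial special case up front and then restrict attention to the support of $\mu$.

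Next I would invoke the standard exponential-race fact: if $Z_i \sim \txt{Exp}(\lambda_i)$ are independent with $\lambda_i \geq 0$ and $\sum_j \lambda_j > 0$, then
\[
    \P\bigl(\argmin_i Z_i = y\bigr) \;=\; \frac{\lambda_y}{\sum_{j} \lambda_j}.
\]
A short self-contained derivation (which I would include for completeness) conditions on $Z_y = t$ and integrates against the density $\lambda_y e^{-\lambda_y t}$ times $\prod_{j \neq y} \P(Z_j > t) = \exp(-(\sum_{j\neq y}\lambda_j)t)$, giving $\lambda_y/\sum_j \lambda_j$. Applying this with $\lambda_i = \mu(i)$ and noting that $\sum_j \mu(j) = 1$ since $\mu \in \Delta([N])$ yields $\P(\decode(\xi,\mu) = y) = \mu(y)$, which by Definition~\ref{defn:distortion-free} is exactly the distortion-free property.

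The only mild subtlety, and what I would call out as the one place requiring care rather than a real obstacle, is the boundary behavior of $\mu$: ties in the argmin have probability zero because the joint distribution of the $E_i/\mu(i)$ over the support of $\mu$ is absolutely continuous, and zero-probability tokens are handled by the $+\infty$ convention above. Aside from this bookkeeping, the proof is a direct computation with exponential random variables.
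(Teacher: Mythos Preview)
Your proposal is correct and takes essentially the same approach as the paper: both transform $\xi_i$ to independent exponentials and carry out the standard exponential-race integral to obtain $\P(\decode(\xi,\mu)=y)=\mu(y)$. The only organizational difference is that the paper packages this computation as a supporting lemma (Lemma~\ref{lemma:core-result-gumbel}) giving the stronger joint statement $\P(\decode(\xi,\mu)=y,\,-\log(\xi_y)/\mu(y)\geq t)=\mu(y)e^{-t}$, which it reuses later for Lemma~\ref{lemma:sound-gumbel}; your direct argument proves exactly what the theorem requires and nothing more.
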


For the sake of analysis, we define the alignment cost 
as a slight variation of the proposal of \citet{aaronson2023openai} (see Section~\ref{sec:gumbel-practice}) by
\begin{align}
    d(y,\xi) \defeq -\sum_{i=1}^{\len{y}} \log \xi_{i,y_i}, \label{eqn:d-defn-gumbel}
\end{align}
again defining the test statistic $\phi$ by Algorithm~\ref{algorithm:test}.
Similar to Lemma~\ref{lemma:sound-transform} for ITS, we exactly characterize 
the difference in the expected values of the alignment cost on watermarked versus non-watermarked
text in terms of the watermark potential of the text.
We defer the proof of Lemma~\ref{lemma:sound-gumbel} to Appendix~\ref{app:gumbel}.

\begin{lemma}\label{lemma:sound-gumbel}
    Let $n \in \N$. Define $\decode$ by equation~\eqref{eqn:rho-defn-gumbel} 
    and $d$ by equation~\eqref{eqn:d-defn-gumbel}.
    Let $\xi,\xi' \iid \txt{Unif}(\Xi^n)$ with $Y = \generate(\xi;n,p,\decode)$.
    Then almost surely for all $i \in [n]$ we have
    \begin{align*}
        \Ep[d(Y_i,\xi_i') - d(Y_i,\xi_i) \mid Y] = 1 - p(Y_i \mid Y_{:i-1}) = \alpha(Y_{i-1:i}). 
    \end{align*}
  \end{lemma}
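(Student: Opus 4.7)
The plan is to compute the two terms $\Ep[d(Y_i,\xi_i') \mid Y]$ and $\Ep[d(Y_i,\xi_i) \mid Y]$ separately, exploiting the structure of exponential minimum sampling. On a single token, equation~\eqref{eqn:d-defn-gumbel} simplifies to $d(Y_i,\xi_i') = -\log (\xi_i')_{Y_i}$ and $d(Y_i,\xi_i) = -\log (\xi_i)_{Y_i}$, so the whole claim reduces to evaluating two expectations of $-\log$ of a single coordinate of a uniform vector on $[0,1]^N$.

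For the resampled key, since $\xi'$ is independent of $\xi$ (and therefore of $Y$), the coordinate $(\xi_i')_{Y_i}$ is conditionally $\txt{Unif}([0,1])$ given $Y$, so a direct integration gives $\Ep[-\log (\xi_i')_{Y_i} \mid Y] = 1$. This takes care of the ``nonwatermarked'' side immediately.

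The main work is in analyzing $\Ep[-\log(\xi_i)_{Y_i} \mid Y]$, and this is where the key property of exponential minimum sampling enters. Set $E_k \defeq -\log (\xi_i)_k$ for $k\in[N]$, so the $E_k$ are i.i.d.\ $\txt{Exp}(1)$. Writing $\mu_k \defeq p(k \mid Y_{:i-1})$, the rescaled variables $T_k \defeq E_k/\mu_k$ are independent $\txt{Exp}(\mu_k)$, and by the definition~\eqref{eqn:rho-defn-gumbel} of the decoder we have $Y_i = \argmin_k T_k$. The standard fact about independent exponentials is that $M \defeq \min_k T_k$ is $\txt{Exp}(1)$-distributed, the argmin is distributed as $\P(\argmin = k) = \mu_k$, and crucially $M$ and $\argmin$ are \emph{independent}. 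From $E_{Y_i} = \mu_{Y_i} \cdot M$ (by definition of the argmin), conditioning on $Y_{:i}$ then yields $\Ep[E_{Y_i} \mid Y_{:i}] = \mu_{Y_i}\Ep[M] = p(Y_i \mid Y_{:i-1})$. Since $\xi_i$ is independent of $Y_{i+1:n}$ given $Y_{:i}$ (those later tokens are generated using $\xi_{i+1:n}$), we may replace the conditioning on $Y_{:i}$ by conditioning on all of $Y$ without changing the value.

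Subtracting the two expectations produces $1 - p(Y_i \mid Y_{:i-1})$, which matches $\alpha(Y_{i-1:i})$ by Definition~\ref{defn:water-ent}. The only nontrivial step is the decomposition $E_{Y_i} = \mu_{Y_i} \cdot M$ with $M \sim \txt{Exp}(1)$ independent of $Y_i$; once that property of exponential racing is invoked, the remainder collapses to two one-line calculations of $\Ep[M]$ and $\Ep[-\log U]$, both equal to $1$.
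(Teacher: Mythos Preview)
Your proposal is correct and follows essentially the same route as the paper. The paper isolates the key computation as a separate supporting lemma (Lemma~\ref{lemma:core-result-gumbel}), showing by direct integration that $\P(\decode(\xi,\mu)=y,\,-\log(\xi_y)/\mu(y)\geq t)=\mu(y)e^{-t}$, which is exactly the ``min is $\txt{Exp}(1)$ and independent of the argmin'' fact you invoke as standard; from there both arguments finish identically. Your write-up is in fact slightly more explicit than the paper's about why the conditioning on $Y_{:i}$ can be extended to all of $Y$.
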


Summing the result of Lemma~\ref{lemma:sound-gumbel} over $i \in [m]$ implies
for any $j \in [n]$ that
\begin{align*}
    \Ep[d(Y,\xi_{(j+1:j+m)\% n}') - d(Y,\xi_{1:m}) \mid Y] = m \alpha(Y).
\end{align*}
Thus, defining the test statistic $\phi$ 
by Algorithm~\ref{algorithm:test} with respect to the alignment cost $d$ from Eqn~\eqref{eqn:d-defn-gumbel},
we can again upper bound the $p$-value output by $\detect$ in Lemma~\ref{lemma:perm-test-gumbel} 
using a standard concentration argument and taking a union bound over $j \in [n]$.
We defer the proof of Lemma~\ref{lemma:perm-test-gumbel} to Appendix~\ref{app:gumbel}.
Once again, we actually prove a more general result that allows $\wt{Y}$ to be any length $k$ subsequence of $Y$.

\begin{lemma}\label{lemma:perm-test-gumbel}
    Let $m,n \in \N$ with $n \geq m$. Define $\decode$ by equation~\eqref{eqn:rho-defn-gumbel}, $d$ by equation~\eqref{eqn:d-defn-gumbel},
    and $\phi$ by Algorithm~\ref{algorithm:test} with $k = m$.
    Let $\xi,\xi' \iid \txt{Unif}(\Xi^n)$ with $Y = \generate(\xi;n,p,\decode)$ and $\wt{Y} = Y$.
    Then almost surely
    \begin{align*}
        \P(\phi(\wt{Y},\xi') \leq \phi(\wt{Y},\xi) \mid \wt{Y}) \leq 2n\exp\left(-\min\{m\alpha(\wt{Y})^2/8,m\alpha(\wt{Y})/4\}\right).
    \end{align*}
\end{lemma}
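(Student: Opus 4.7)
The plan is to exploit the fact that, conditional on $Y$, both the ``true'' alignment cost $A := d(\wt{Y}, \xi_{1:m})$ and each ``shifted'' alignment cost $B_j := d(\wt{Y}, \xi'^j)$ decompose into sums of $m$ independent exponential random variables with explicitly known means, so that concentration of these sums converts the $m\alpha(\wt{Y})$ mean gap identified by Lemma~\ref{lemma:sound-gumbel} into the desired high-probability separation. Since $\xi_{1:m}$ is among the subsequences considered in the minimization defining $\phi(\wt{Y}, \xi)$, we have the deterministic inequality $\phi(\wt{Y}, \xi) \leq A$, so by a union bound
\begin{align*}
\P(\phi(\wt{Y}, \xi') \leq \phi(\wt{Y}, \xi) \mid Y) \leq \P(\phi(\wt{Y}, \xi') \leq A \mid Y) \leq \sum_{j=1}^n \P(B_j \leq A \mid Y).
\end{align*}

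To identify the conditional distributions given $Y$: since $\xi'$ is independent of $Y$ with i.i.d.\ uniform coordinates and the indices $(i+j)\%n$ are distinct across $i \in [m]$ (using $m \leq n$), the $m$ summands $-\log \xi'_{(i+j)\%n, Y_i}$ making up $B_j$ are conditionally i.i.d.\ $\mathsf{Exp}(1)$, giving $\Ep[B_j \mid Y] = m$. For $A$, I would invoke the standard fact that in exponential minimum sampling with rates $\lambda_1, \ldots, \lambda_N$, the value of the minimum is independent of the argmin and is distributed as $\mathsf{Exp}(\sum_k \lambda_k)$. Applied with rates $\lambda_k = p(k \mid Y_{:i-1})$, this shows that, conditional on the past and on $Y_i$, $-\log \xi_{i, Y_i}$ equals $p(Y_i \mid Y_{:i-1})$ times an independent $\mathsf{Exp}(1)$ variable. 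Hence the summands in $A$ are conditionally independent across $i$ given $Y$, and $\Ep[A \mid Y] = \sum_i p(Y_i \mid Y_{:i-1}) = m(1 - \alpha(\wt{Y}))$, matching Lemma~\ref{lemma:sound-gumbel}.

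For each term in the union bound, I would split the $m\alpha(\wt{Y})$ mean gap evenly:
\begin{align*}
\P(B_j \leq A \mid Y) \leq \P(B_j \leq m - m\alpha(\wt{Y})/2 \mid Y) + \P(A \geq m(1-\alpha(\wt{Y})) + m\alpha(\wt{Y})/2 \mid Y).
\end{align*}
The first is a lower-tail bound on a sum of $m$ i.i.d.\ $\mathsf{Exp}(1)$ variables, controlled by a direct Chernoff computation using $\log(1-\eps) \leq -\eps - \eps^2/2$; the second is an upper-tail bound on a sum of independent sub-exponential variables with means in $[0,1]$, controlled by a standard Bernstein-type inequality whose two regimes produce the characteristic $\min\{t^2/m, \, t\}$ form in the exponent. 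Summing over $j \in [n]$ produces the $2n$ prefactor together with the exponent $\min\{m\alpha(\wt{Y})^2/8, \, m\alpha(\wt{Y})/4\}$. The main technical step is the conditional distributional identification of $-\log \xi_{i, Y_i}$ given $Y$, which rests on the independence of value and argmin for minima of independent exponentials; after that the concentration step is routine, with the only care being to track the sub-exponential constants so as to recover the precise exponents in the statement.
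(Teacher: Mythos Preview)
Your proposal is correct and follows essentially the same route as the paper's proof: identify the conditional law of $-\log \xi_{i,Y_i}$ given $Y$ via the value/argmin independence for exponential minima (the paper's Lemma~\ref{lemma:core-result-gumbel}), use Lemma~\ref{lemma:sound-gumbel} to establish the $m\alpha(\wt{Y})$ mean gap, split the gap evenly, apply a sub-exponential Chernoff/Bernstein bound to each tail, and union bound over the $n$ shifts. The only cosmetic difference is that you handle the lower tail of $B_j$ by a direct Chernoff calculation on i.i.d.\ $\mathsf{Exp}(1)$ variables, whereas the paper treats both tails uniformly after recording that $\mathsf{Exp}(1)$ is $(2,2)$-sub-exponential (their Observation~\ref{obs:gumbel-subexp}); this is inessential.
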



\subsubsection{Robustness to corruptions}
Showing high probability $p$-value upper bounds for corruptions of watermarked text 
that hold almost surely given the corrupted text---i.e., analogues of Lemmas~\ref{lemma:perm-test-subs} and \ref{lemma:perm-test-indel}---is more difficult, 
primarily due to the fact that the summands in the alignment metric from equation~\eqref{eqn:d-defn-gumbel} are no longer bounded and thus bounding the influence 
of each substitution and/or insertion operation on the test statistic requires more careful analysis.
Of course, we could in principle tweak the alignment metric 
by truncating the summands in order to prove the analogous results;
however, as the main intuitions would carry over from Lemmas~\ref{lemma:perm-test-subs} and \ref{lemma:perm-test-indel} 
and the results are not critical to the main thrust of the paper, we do not carry this plan out.

\subsubsection{What we run in practice}\label{sec:gumbel-practice}

As in the case of ITS, in practice we find using a slight variation of the 
alignment cost in equation~\eqref{eqn:d-defn-gumbel} performs better.
Namely, following the prescription of \citet{aaronson2023openai},
we modify the previous alignment cost to instead be
\begin{align}
    d(y,\xi) \defeq \sum_{i=1}^k \log (1 - \xi_{i,y_i}). \label{eqn:d-practice-gumbel}
\end{align}
Henceforth, we refer to this version of the watermarking strategy as {\gumb}, 
and we refer to the corresponding Levenshtein version wherein we define 
the base alignment cost $d_0$ by equation~\eqref{eqn:d-practice-gumbel} as {\gumbedit}.

In summary, for {\gumb} we use the decoder from equation~\eqref{eqn:rho-defn-gumbel}, the test statistic 
from Algorithm~\ref{algorithm:test} with the alignment cost from equation~\eqref{eqn:d-practice-gumbel}, and the watermark 
key distribution as the uniform distribution over $\Xi^n$, where recall $n$ is the length of 
the watermark key sequence and $\Xi = [0,1]^N$.
Meanwhile, {\gumbedit} differs from {\gumb} only in that we define the test statistic using the Levenshtein cost from 
Definition~\ref{defn:levenshtein-simple} with the base cost again from equation~\eqref{eqn:d-practice-gumbel}.

\section{Experimental results}\label{sec:experiments}
We empirically validate the statistical power of our watermarking strategies 
(i.e., {\its}, {\itsedit}, {\gumb}, and {\gumbedit})
via experiments with the OPT-1.3B \citep{zhang2022opt} and LLaMA-7B \citep{touvron2023llama} models.\footnote{
    We will also at times collectively refer to {\its} and {\itsedit} as the ITS watermarks and/or strategies and 
    {\gumb} and {\gumbedit} as the EXP watermarks and/or strategies.
}
We run experiments using $\generate$ rather than $\shuffle$, mainly for the sake of reproducibility;
recall however that this choice has no impact on the $p$-values we report.
We test for all watermarks using a block size $k$ (in Algorithm~\ref{algorithm:test}) 
equal to the length $m$ of the text.
Following the methodology of \citet{kirchenbauer2023watermark}, 
we generate watermarked text continuations of prompts 
sampled from the news-like subset of the C4 dataset \citep{raffel2020exploring}.
We vary the generation length $m$ (Experiment 1) and the random number sequence length $n$ (Experiment 2),
and we report median $p$-values of watermarked text over $500$ samples.\footnote{
    The median $p$-value corresponds to the significance level (i.e., Type I error rate) 
    at which the power of our watermark detector is at least $0.5$.
}

We also evaluate robustness to four kinds of paraphrasing attacks:
randomly substituting a fraction of the generated tokens 
with tokens chosen uniformly at random from the vocabulary (Experiment 3);
randomly inserting a fraction of tokens among the generated tokens (Experiment 4);
randomly deleting a fraction of the generated tokens (Experiment 5);
using another language model to translate the text from English to French 
and back (Experiment 6). The first three attacks allow us to systematically vary the level of corruption,
while the last attack is an example of an attack we might encounter in the wild.
We defer the details of the translation procedures to Appendix~\ref{app:experiments-roundtrip}.

Finally, using the Alpaca-7B model and evaluation dataset \cite{taori23alpaca}, 
we conduct a case-study on the feasibility of watermarking 
the responses of a performant instruction-tuned language model to user queries.
We also show for certain kinds of instructions that hashing-based watermarks
produce noticeably worse responses than our distortion-free watermarks, thus underlining 
the importance of the distortion-free property in practice.

In all our experiments---except for Experiment 2, where the control variable $n$ is a hyperparameter that is unique 
to our watermarks---we also replicate the watermark of \citet{kirchenbauer2023watermark} as a baseline, 
setting the greenlist fraction $\gamma = 0.25$ and varying the logit bias $\delta \in \{1.0,2.0\}$. 
We respectively refer to these versions of their watermark as {\kgwone} and {\kgwtwo} after the first three authors' last names.
We emphasize their watermark is not directly comparable to our watermarks as it is not distortion-free
(e.g., \citet{kirchenbauer2023watermark} report that even the weakest version we employ with $\delta = 1.0$ and $\gamma = 0.25$
typically increases perplexity by 5--10\%).

In their work, \citet{kirchenbauer2023watermark} report approximate $p$-values, which they obtain from 
computing the $z$-score of a certain test statistic.
To ensure a fair comparison, we use $\detect$ (with $T = 5000$) to report $p$-values for all watermarks;\footnote{
    This setting of $T$ means we never report $p$-values less than $1/5000$ (i.e., $0.0002$) in any of our experiments. 
} 
in the case of {\kgwone} and {\kgwtwo}, we run $\detect$ using the original inexact $p$-values they report as the test statistic.
We report error bars for the median $p$-value based on a bootstrapped estimate of the standard deviation
using $1000$ resamples.

Instead of recomputing the test statistic $T$ times for each prompt---as we originally prescribe 
in $\detect$---to save computation we simply sample $T$ prompts and compute the test statistic once 
for each ground-truth length $m$ completion;
we then use the empirical distribution of these test statistics as the reference distribution 
within $\detect$, which gives a proper $p$-value with respect to the null hypothesis that 
the text is an original completion from the dataset.
For reference, we include the full pseudocode for this modified version of $\detect$ in Appendix~\ref{app:experiments-pvals},
and we also plot the full distributions of $p$-values for nonwatermarked generations (i.e.,
regular samples from the language models) to verify they are indeed roughly uniform over the interval $[0,1]$.

We defer further details regarding our experimental protocol to Appendix~\ref{app:experiments}.

\subsection{Varying text and watermark key length}
We vary the length $m$ of watermarked text in Figure~\ref{fig:experiment-1}, fixing the watermark key length 
$n = 256$ for each of our watermarks and setting $\gamma = 0.4$
for {\itsedit} and $\gamma = 0.0$ for {\gumbedit} 
(see Appendix~\ref{app:experiments-hyperparam} for the details of tuning $\gamma$).
Our ITS watermarks slightly outperform {\kgwone} while our EXP watermarks slightly outperform {\kgwtwo},
despite the fact that {\kgwone} and {\kgwtwo} both distort the text distribution. 
The EXP watermarks are notably more powerful than the ITS watermarks, requiring roughly two to three times
fewer tokens to achieve a comparably low median $p$-value.
One conceivable advantage 
of the ITS watermarks over the EXP watermarks is that they have comparatively less overhead:
the watermark key for {\gumb} and {\gumbedit} is a sequence of $n$ vectors in $[0,1]^N$, where recall $N$ is the size 
of the vocabulary, while for {\its} and {\itsedit} it is simply a sequence of $n$ numbers in $[0,1]$.
All watermarking strategies perform worse on LLaMA-7B than OPT-1.3B, due to the fact that LLaMA-7B typically produces lower entropy text than OPT-1.3B.
Due to the discrete nature of the test statistic of \citet{kirchenbauer2023watermark}, i.e., the number of tokens 
in the text belonging to a ``greenlist" versus a ``redlist'', the median $p$-values 
for the {\kgwone} and {\kgwtwo} watermarks are occasionally unstable, particularly for small values of $m$.

\begin{figure}
    \centering
    \begin{subfigure}[b]{0.49\textwidth}
      \includegraphics[width=\textwidth]{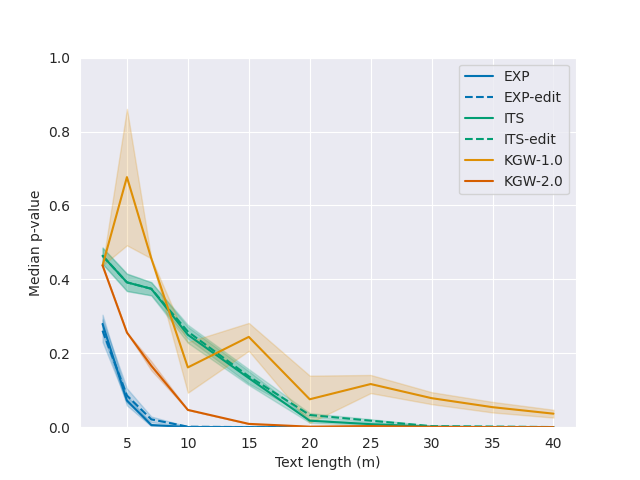}
      \label{fig:experiment-1-opt}
      \caption{OPT-1.3B}
    \end{subfigure}
    \hfill
    \begin{subfigure}[b]{0.49\textwidth}
      \includegraphics[width=\textwidth]{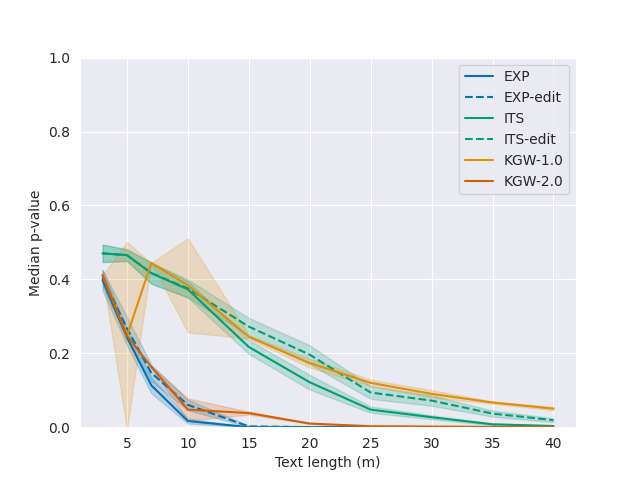}
      \label{fig:experiment-1-llama}
      \caption{LLaMA-7B}
    \end{subfigure}
  
    \caption{Median $p$-value of watermarked text relative to varying the text length $m$, 
    for OPT-1.3B and LLaMA-7B models. Our watermark strategies are competitive with 
    those of \citet{kirchenbauer2023watermark}, despite the fact that they
    distort the text distribution to generate watermarked text whereas we do not.}
    \label{fig:experiment-1}
  \end{figure}
  
We vary the length $n$ of the watermark key sequence $\xi$ in Figure~\ref{fig:experiment-2-combined}
for different lengths $m$ of watermarked text from the ITS and EXP watermarks respectively.
Recall $n$ corresponds to the total number of tokens we can generate while maintaining our 
distortion-free guarantee.
As our theory predicts, the $p$-values of watermarked text grow linearly with $n$.
The rate of growth is fairly mild and decreases rapidly with $m$;
even for $n = 4096$, which is larger than the maximum generation length of both the OPT-1.3B and LLaMA-7B models,
slightly increasing the number of tokens (by 4--8 tokens in the case of EXP, and 10--20 tokens in the case of ITS)
suffices to distinguish watermarked text with roughly the same statistical power as $n = 64$.

\begin{figure}[htbp]
    \centering
    \begin{subfigure}[b]{0.49\textwidth}
        \includegraphics[width=\textwidth]{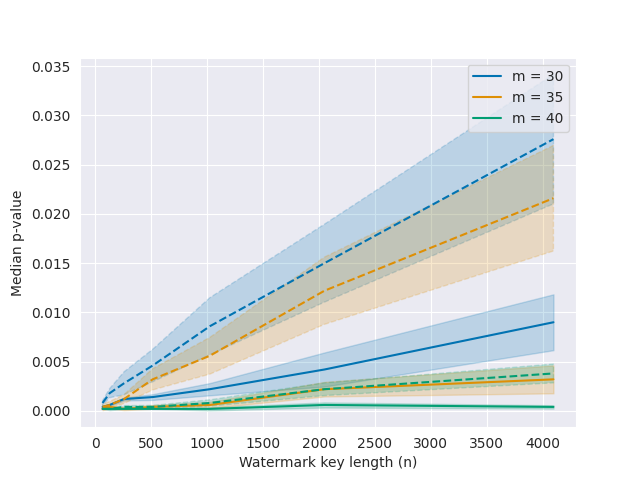}
        \caption{OPT-1.3B: {\its} (solid) and {\itsedit} (dashed)}
        \label{fig:experiment-2-opt-tr}
    \end{subfigure}
    \hfill
    \begin{subfigure}[b]{0.49\textwidth}
        \includegraphics[width=\textwidth]{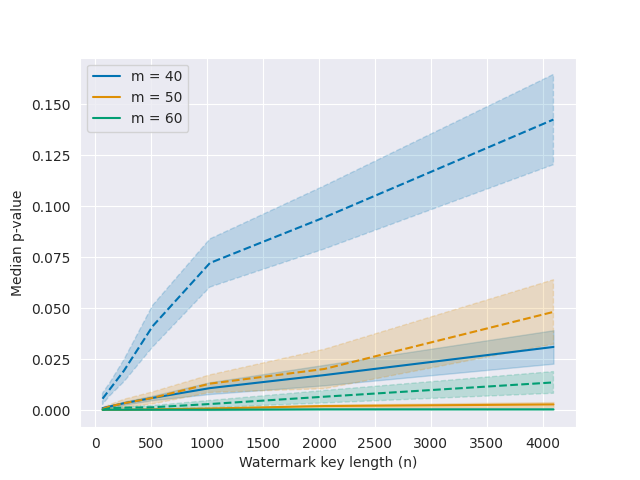}
        \caption{LLaMA-7B: {\its} (solid) and {\itsedit} (dashed)}
        \label{fig:experiment-2-llama-tr}
    \end{subfigure}
    \begin{subfigure}[b]{0.49\textwidth}
      \includegraphics[width=\textwidth]{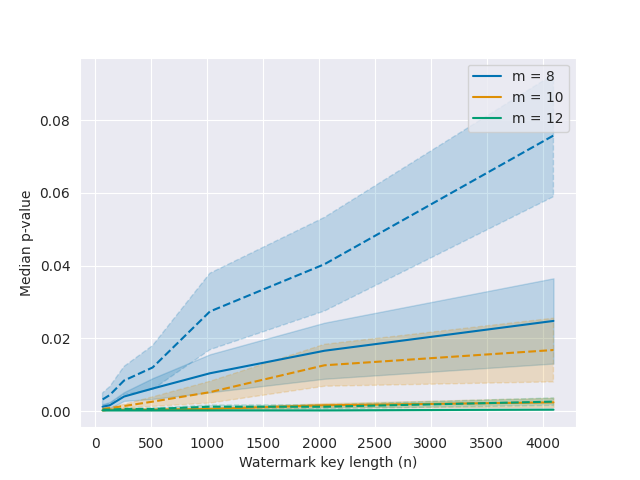}
      \caption{OPT-1.3B: {\gumb} (solid) and {\gumbedit} (dashed)}
      \label{fig:experiment-2-opt-gu}
  \end{subfigure}
    \hfill
    \begin{subfigure}[b]{0.49\textwidth}
        \includegraphics[width=\textwidth]{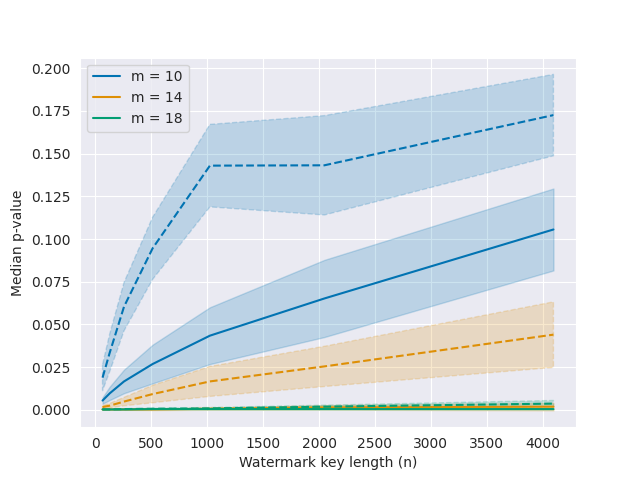}
        \caption{LLaMA-7B: {\gumb} (solid) and {\gumbedit} (dashed)}
        \label{fig:experiment-2-llama-gu}
    \end{subfigure}

    \caption{Median $p$-value of watermarked text for varying the watermark key length $n$.
    Across all watermarks for both the OPT-1.3B and LLaMA-7B models, the median $p$-values grow linearly with $n$ 
    but decay rapidly with increasing text length $m$.}
    \label{fig:experiment-2-combined}
\end{figure}

\subsection{Robustness to corruption and paraphrasing}
We now proceed to evaluate the robustness of our watermark strategies to various forms of corruption and paraphrasing.
We focus on comparing our strongest watermarks ({\gumb} and {\gumbedit}) 
against {\kgwtwo}, deferring results for all other watermarks to Appendix~\ref{app:experiments-defer}.
As larger $n$ increases the computational overhead of computing our test statistics 
and the effect of larger $n$ on statistical power is mild (as shown in Figure~\ref{fig:experiment-2-combined}),
we run all experiments with $n = 256$, which in any case is sufficiently large to ensure the watermarked text across all experiments is distortion-free.
Decreasing the insertion/deletion penalty $\gamma$ improves robustness (at least up to a point) but hurts the statistical power of the {\itsedit}  and {\gumbedit} watermarks for larger $n$, 
since reducing the penalizer for edits effectively increases the number of candidate alignments under consideration.
We run {\itsedit} and {\gumbedit} with the same choices of $\gamma$ as in the previous section.
We defer the details of tuning $\gamma$ to Appendix~\ref{app:experiments-hyperparam}.

We vary the fraction of substituted tokens in Figure~\ref{fig:experiment-3},
and we vary the fraction of inserted and deleted tokens in Figures~\ref{fig:experiment-4} and \ref{fig:experiment-5} respectively.
For the insertion experiment, we pass only the first $m$ tokens to the detector; similarly, for the deletion experiment,
we initially generate more than $m$ watermarked tokens so that even after deleting a fraction thereof, there are at least $m$ tokens remaining.
The {\gumb} and {\gumbedit} watermarks are comparably robust to substitution errors, but the latter is far more robust 
to insertion and deletion errors.

We compare our watermarks against the most robust version of {\kgwtwo}, in the sense that we hash only the previous token 
to determine the next token distribution and thus bias the distribution towards some subset of bigrams.
If instead we hash the previous $k$ tokens for $k > 1$, then substituting any one of the previous $k$ tokens will break 
the watermark signal in a particular token, and thus the statistical power of their watermark will be worse 
than what we report in our experiments.

\begin{figure}
    \centering
    \begin{subfigure}[b]{0.49\textwidth}
      \includegraphics[width=\textwidth]{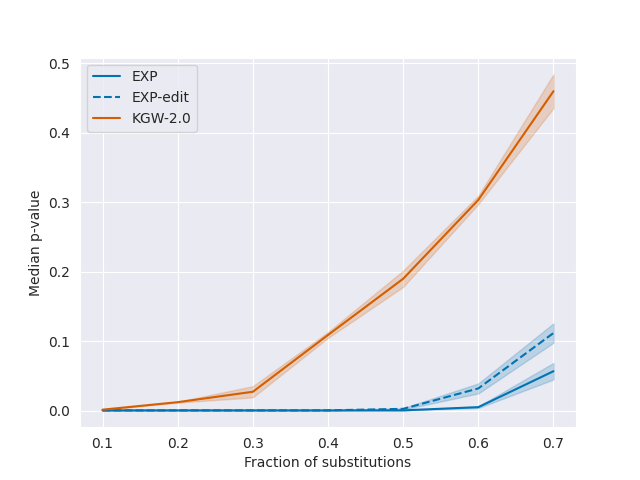}
      \label{fig:experiment-3-opt}
      \caption{OPT-1.3B}
    \end{subfigure}
    \hfill
    \begin{subfigure}[b]{0.49\textwidth}
      \includegraphics[width=\textwidth]{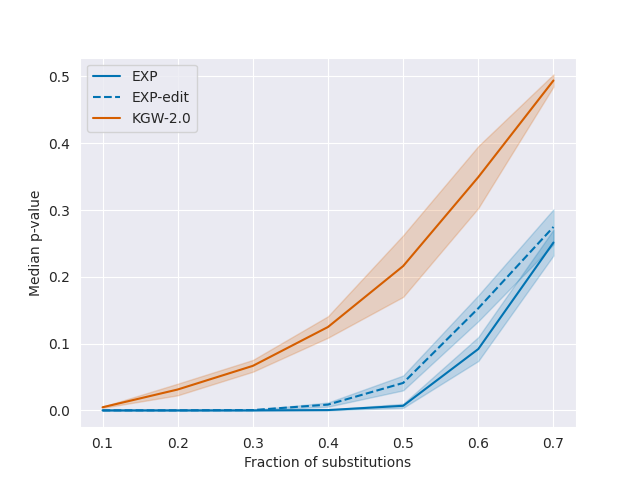}
      \label{fig:experiment-3-llama}
      \caption{LLaMA-7B}
    \end{subfigure}
  
    \caption{Median $p$-value of watermarked text relative to the fraction of substitution errors, 
      for OPT-1.3B and LLaMA-7B models with $m = 35$. Both versions of the EXP watermark significantly 
      outperform {\kgwtwo}, again despite {\kgwtwo} distorting the text distribution.
    }
    \label{fig:experiment-3}
\end{figure}

\begin{figure}
    \centering
    \begin{subfigure}[b]{0.49\textwidth}
      \includegraphics[width=\textwidth]{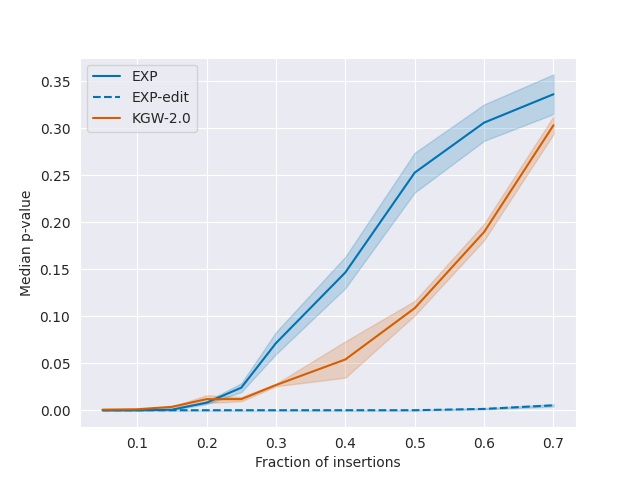}
      \label{fig:experiment-4-opt}
      \caption{OPT-1.3B}
    \end{subfigure}
    \hfill
    \begin{subfigure}[b]{0.49\textwidth}
      \includegraphics[width=\textwidth]{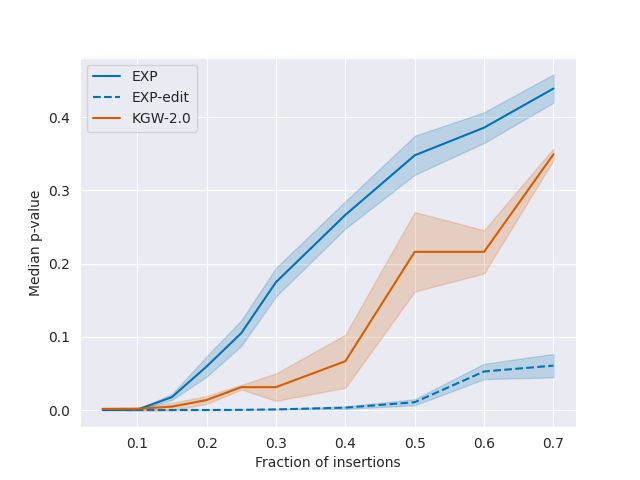}
      \label{fig:experiment-4-llama}
      \caption{LLaMA-7B}
    \end{subfigure}
  
    \caption{Median $p$-value of watermarked text relative to the fraction of insertion errors, 
    for OPT-1.3B and LLaMA-7B models with $m = 35$. 
    {\gumbedit} is by far the most robust.
    }
    \label{fig:experiment-4}
\end{figure}

\begin{figure}
    \centering
    \begin{subfigure}[b]{0.49\textwidth}
      \includegraphics[width=\textwidth]{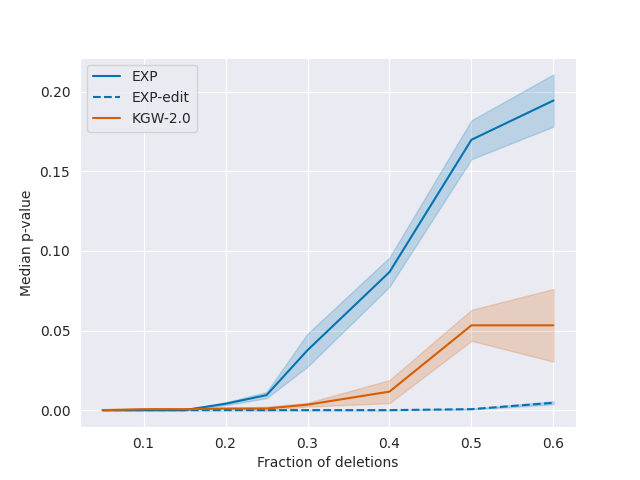}
      \label{fig:experiment-5-opt}
      \caption{OPT-1.3B}
    \end{subfigure}
    \hfill
    \begin{subfigure}[b]{0.49\textwidth}
      \includegraphics[width=\textwidth]{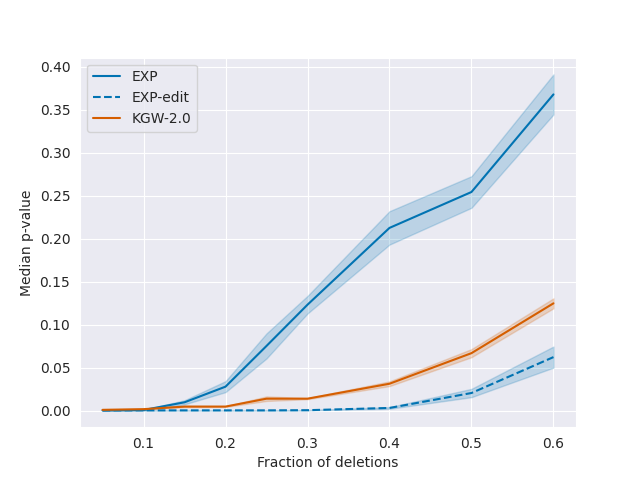}
      \label{fig:experiment-5-llama}
      \caption{LLaMA-7B}
    \end{subfigure}
  
    \caption{Median $p$-value of watermarked text relative to the fraction of deletion errors, 
    for OPT-1.3B and LLaMA-7B models with $m = 35$. {\gumbedit} is again the most robust.
    }
    \label{fig:experiment-5}
\end{figure}


Finally, in Figures~\ref{fig:experiment-6-french} and \ref{fig:experiment-6-russian} we implement a ``roundtrip translation'' attack, wherein we attempt to paraphrase watermarked texts of varying lengths
by translating the (English) texts into another language (i.e., French and Russian respectively) and back again using 
a machine translation model (details in Appendix~\ref{app:experiments-roundtrip}).
We include a representative example of the original and (re-)translated texts in Figure~\ref{fig:french-example}.
Using Russian is a noticeably more effective attack than French: none of the watermarks aside from {\gumbedit}
are able to reliably detect watermarked text with $p < 0.05$ irrespective of $m$.
In fact, for Russian the power of both {\gumb} and {\kgwtwo} stagnates (or even diminishes) with increasing text length,
perhaps due to the inherent ambiguity in translating longer texts.

In many cases, both using French and Russian, the roundtrip translation still preserves large chunks of
the original text, which suffices for watermark detection even using {\gumb}, which is substantially less robust 
to insertion and deletion errors than {\gumbedit}.
Aside from inspecting a few examples, we did not verify that the roundtrip translations preserve the basic semantics of the original text;
thus, it is possible our results provide an overly pessimistic view of the robustness of our watermarks to these attacks, 
since in practice users would presumably not publish such examples.
It is also possible that using different machine translation models---or more generally, different forms 
of automated paraphrasing---might be far more effective in evading watermark 
detection than those we employed.
We publish the full set of watermarked generations for each watermarking strategy, 
along with their (roundtrip) translations, as part of our code release.


\begin{figure}
    \centering
    \includegraphics[width=\textwidth]{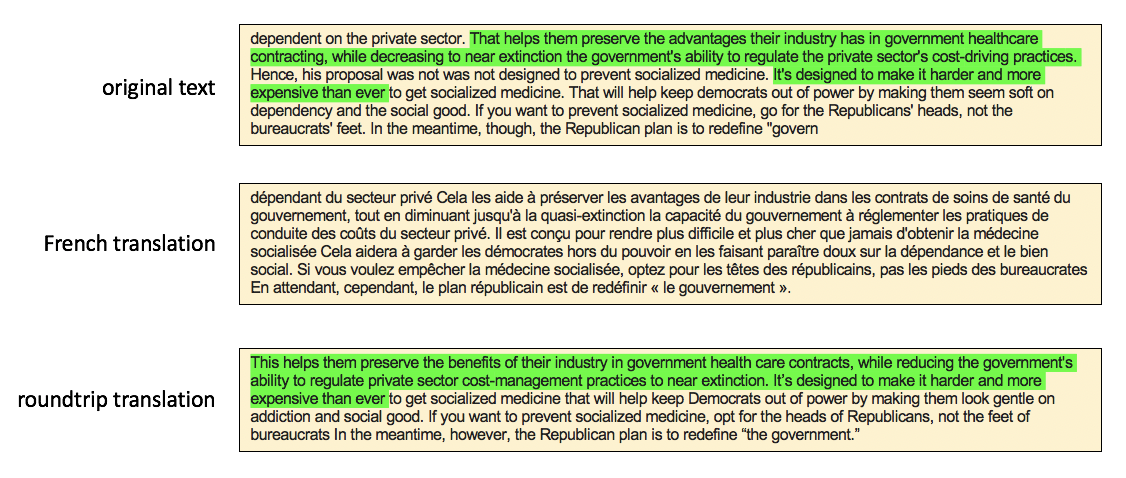}
    \caption{
      An illustrative example of a roundtrip translation attack via French. Given the first $50$ tokens of 
    the roundtrip translation (highlighted in green, in addition to the closest matching snippets to these tokens from the original text), 
    $\detect$ returns $\est{p} \leq 0.0002$.
    }
    \label{fig:french-example}
\end{figure}

\begin{figure}
    \centering
    \begin{subfigure}[b]{0.49\textwidth}
      \includegraphics[width=\textwidth]{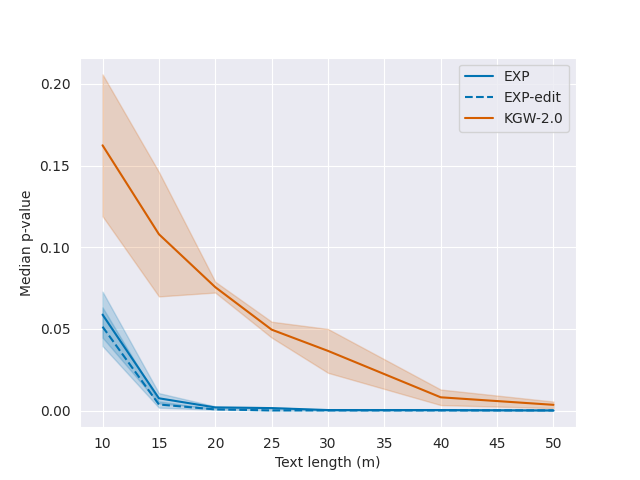}
      \label{fig:experiment-6-opt-french}
      \caption{OPT-1.3B}
    \end{subfigure}
    \hfill
    \begin{subfigure}[b]{0.49\textwidth}
      \includegraphics[width=\textwidth]{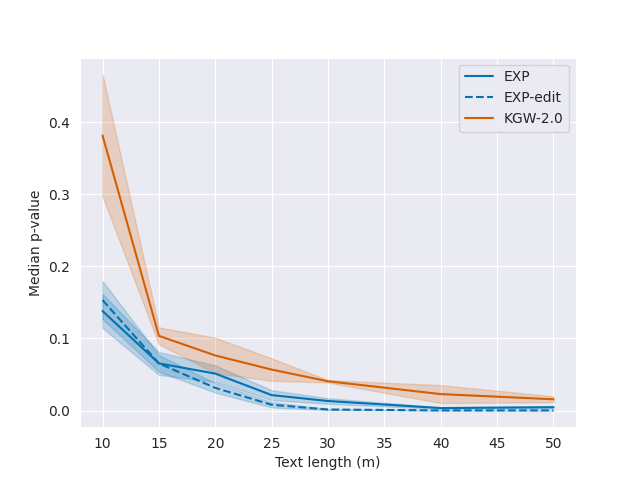}
      \label{fig:experiment-6-llama-french}
      \caption{LLaMA-7B}
    \end{subfigure}
  
    \caption{Median $p$-value of watermarked text relative to the text length, after roundtrip translation via French, 
    for OPT-1.3B and LLaMA-7B models with $m = 35$.
    {\gumb} performs comparably to {\gumbedit},
    indicating that the roundtrip translation attack tends to preserve at least some snippets of the original text.
    }
    \label{fig:experiment-6-french}
\end{figure}

\begin{figure}
    \centering
    \begin{subfigure}[b]{0.49\textwidth}
      \includegraphics[width=\textwidth]{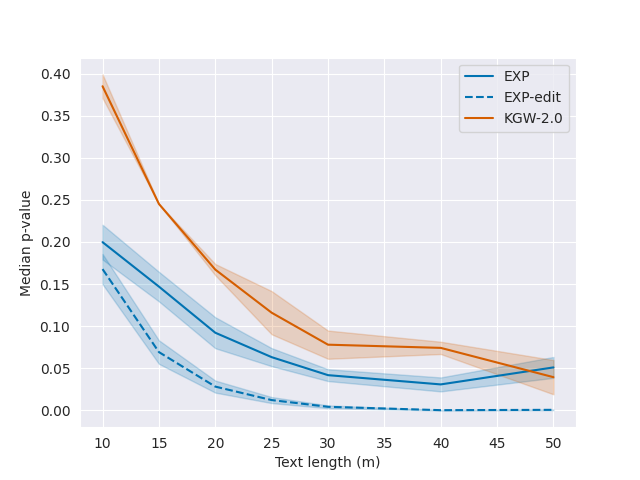}
      \label{fig:experiment-6-opt-russian}
      \caption{OPT-1.3B}
    \end{subfigure}
    \hfill
    \begin{subfigure}[b]{0.49\textwidth}
      \includegraphics[width=\textwidth]{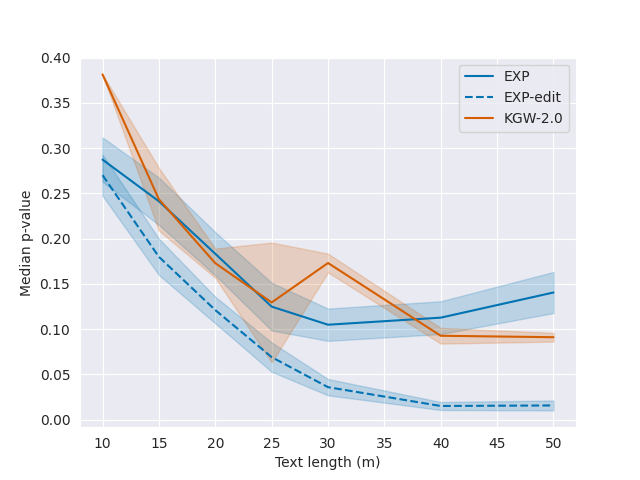}
      \label{fig:experiment-6-llama-russian}
      \caption{LLaMA-7B}
    \end{subfigure}
  
    \caption{Median $p$-value of watermarked text relative to the text length, after roundtrip translation via Russian, 
    for OPT-1.3B and LLaMA-7B models with $m = 35$.
    In contrast to French, {\gumbedit} noticeably outperforms {\gumb}.
    Overall, the attack is noticeably more effective than using French.
    }
    \label{fig:experiment-6-russian}
\end{figure}

\subsection{Case study: instruction following}\label{sec:exp-instruct}
In the wild, most users interact with language models by prompting the model with instructions 
(e.g., ``give me code for...''), and the most widely-used language models (e.g., ChatGPT) 
are specifically fine-tuned to follow such instructions.
Thus, using the instruction fine-tuned Alpaca-7B model,
we presently conduct a case study on the effectiveness of watermarking a performant instruction following model.
In particular, we sample $200$ instructions from the Alpaca-7B evaluation dataset and generate watermarked responses 
of at most $200$ tokens for each.
We then compute conditionally valid $p$-values for each response using the original version of $\detect$
with $T = 500$.
We also replicate the roundtrip translation attack from Experiment 6.
We publish the full set of watermarked generations for each method, 
along with their (roundtrip) translations, and the instruction prompts as part of our code release.

We plot the distribution of $p$-values for the {\gumbedit} and {\kgwtwo} watermarks in Figure~\ref{fig:experiment-7},
as well as the $p$-values versus the watermark potential of the watermarked text in Figure~\ref{fig:experiment-7-potential}.
In general, the Alpaca-7B responses have considerably lower per-token watermark potential than both the OPT-1.3B and LLaMA-7B
models, and thus the statistical power of our watermark is worse despite the responses typically being longer than in 
the previous experiments (i.e., Experiments 1 and 6).
In particular, based on the same random sample of $200$ prompts (from the Alpaca evaluation set in the case of Alpaca-7B, and from the 
news-like subset of the C4 dataset in the cases of LLaMA-7B and OPT-1.3B),
the average per-token watermark potential of text from Alpaca-7B is $0.28$, compared to $0.59$
for LLaMA-7B and $0.67$ for OPT-1.3B.
Unlike the previous experiments, {\kgwtwo} noticeably outperforms the {\gumbedit} watermark. 
Figure~\ref{fig:experiment-7-potential} indicates this difference in performance 
is largely due to the fact {\kgwtwo} distorts the distribution of the text and produces
responses with noticeably larger watermark potential than regular responses from the model.
For responses whose unnormalized watermark potential (i.e., watermark potential multiplied by the number of tokens in the response, to 
account for the varying lengths of the responses) exceeds roughly $60$, both watermarks tend to yield $p$-values close to zero.
Paraphrasing the responses via roundtrip translation attacks into both French and Russian degrades 
the statistical power of both watermarks, as we show in Figures~\ref{fig:experiment-7-french} 
and \ref{fig:experiment-7-russian}.

\begin{figure}
    \centering
    \begin{subfigure}[b]{0.49\textwidth}
      \includegraphics[width=\textwidth]{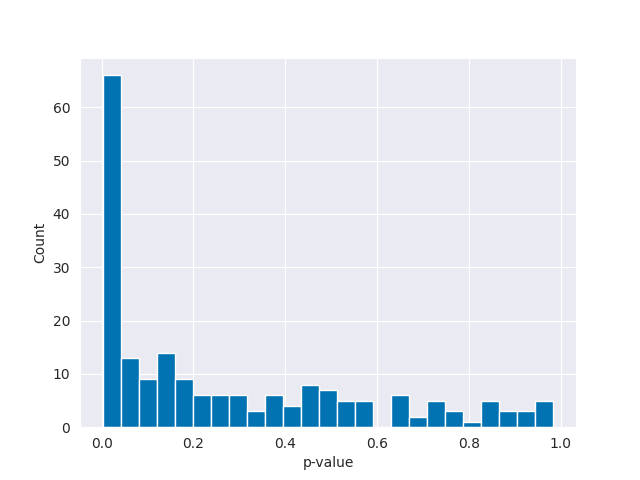}
      \label{fig:experiment-7-gu}
      \caption{{\gumbedit}}
    \end{subfigure}
    \hfill
    \begin{subfigure}[b]{0.49\textwidth}
      \includegraphics[width=\textwidth]{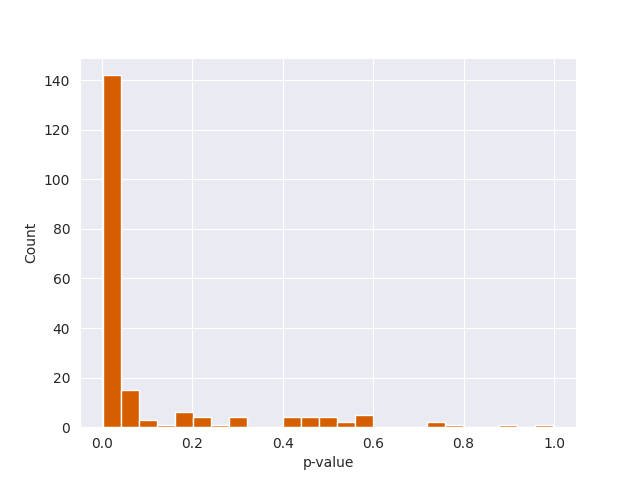}
      \label{fig:experiment-7-ki20}
      \caption{{\kgwtwo}}
    \end{subfigure}
  
    \caption{Histogram of $p$-values of watermarked text from Alpaca-7B. {\kgwtwo} is noticeably better 
    than {\gumbedit}, though again the results are not strictly comparable as {\kgwtwo} is not distortion-free.
    }
    \label{fig:experiment-7}
\end{figure}

\begin{figure}
    \centering
    \begin{subfigure}[b]{0.49\textwidth}
      \includegraphics[width=\textwidth]{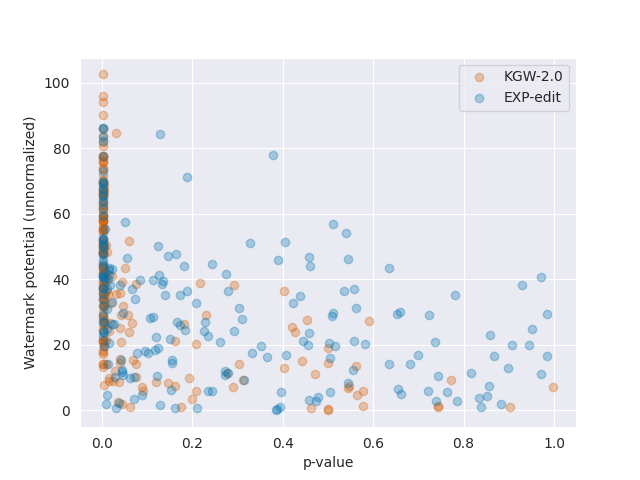}
      \label{fig:experiment-7-scatter}
      \caption{Scatterplot of $p$-values.}
    \end{subfigure}
    \hfill
    \begin{subfigure}[b]{0.49\textwidth}
      \includegraphics[width=\textwidth]{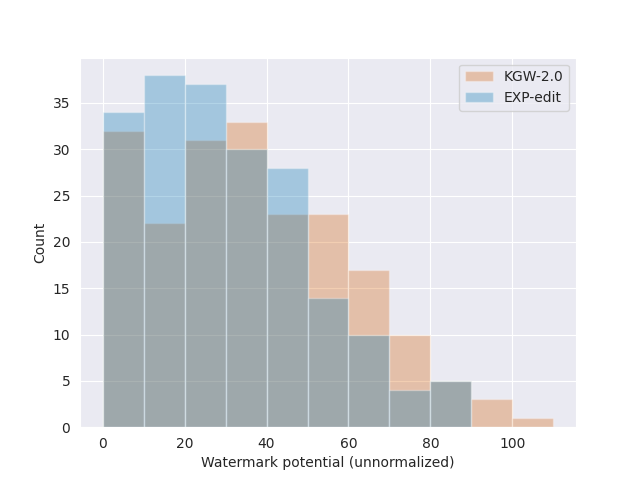}
      \label{fig:experiment-7-histogram}
      \caption{Histogram of watermark potential.}
    \end{subfigure}
  
    \caption{Watermark potential versus statistical power of {\gumbedit} versus {\kgwtwo}. {\kgwtwo} noticeably distorts the 
    text distribution, tending to produce higher watermark potential text overall than the original language model (and consequently, {\gumbedit}).
    }
    \label{fig:experiment-7-potential}
\end{figure}

\begin{figure}
    \centering
    \begin{subfigure}[b]{0.49\textwidth}
      \includegraphics[width=\textwidth]{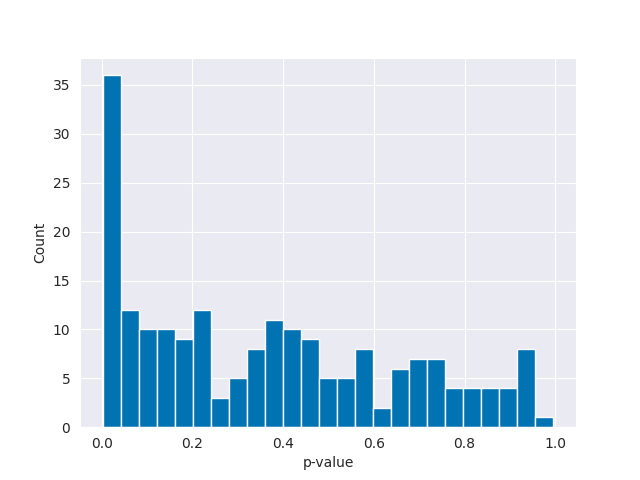}
      \label{fig:experiment-7-gu-french}
      \caption{{\gumbedit}}
    \end{subfigure}
    \hfill
    \begin{subfigure}[b]{0.49\textwidth}
      \includegraphics[width=\textwidth]{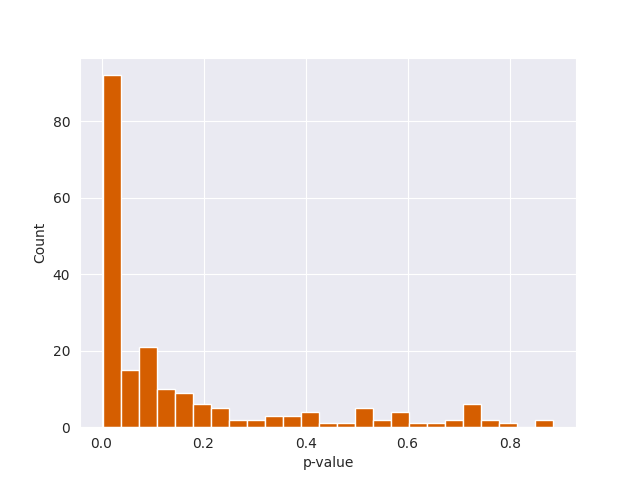}
      \label{fig:experiment-7-ki20-french}
      \caption{{\kgwtwo}}
    \end{subfigure}
  
    \caption{Histogram of $p$-values of watermarked text after roundtrip translation via French. {\kgwtwo} outperforms {\gumbedit}, albeit by noticeably distorting the text distribution.
    }
    \label{fig:experiment-7-french}
\end{figure}

\begin{figure}
    \centering
    \begin{subfigure}[b]{0.49\textwidth}
      \includegraphics[width=\textwidth]{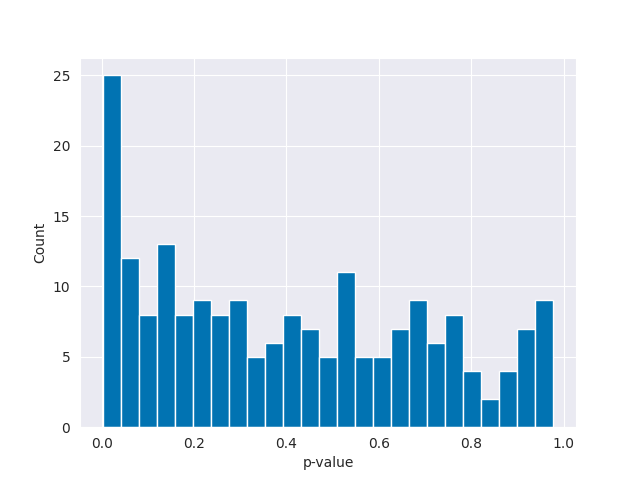}
      \label{fig:experiment-7-gu-russian}
      \caption{{\gumbedit}}
    \end{subfigure}
    \hfill
    \begin{subfigure}[b]{0.49\textwidth}
      \includegraphics[width=\textwidth]{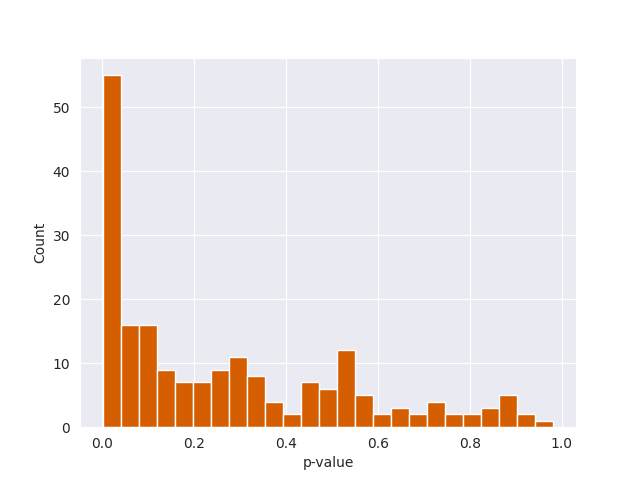}
      \label{fig:experiment-7-ki20-russian}
      \caption{{\kgwtwo}}
    \end{subfigure}
  
    \caption{Histogram of $p$-values of watermarked text after roundtrip translation via Russian. {\kgwtwo} again does significantly better 
    than {\gumbedit.}
    }
    \label{fig:experiment-7-russian}
\end{figure}

Finally, recall the main distinguishing feature of our watermark compared to \citet{kirchenbauer2023watermark} and \citet{aaronson2023openai} is that we do not hash previous tokens to determine 
the distribution of the next token.
To demonstrate the pitfalls of hashing, we implement a version of the watermark \citet{aaronson2023openai} proposes by modifying the $\generate$ 
method of {\gumb} to obtain the vector $\xi_i \in [0,1]^N$ from seeding a random number generator using the previous $k$ tokens
instead of using the watermark key;
we call this version {\gumbhash}.
We then prompt Alpaca-7B with requests for various kinds of lists.
Because Alpaca-7B tends to separate items in lists
by the same recurring token, e.g., a comma or a newline character, 
and because this recurring token determines the next token, for $k = 1$ the lists degenerate into repetition (Figure~\ref{fig:experiment-7-hashing}).\footnote{
    The authors would like to pat themselves on the back by drawing the reader's attention to the fact that the title of this paper is not among those suggested by Alpaca-7B.
}

From inspection, hashing with $k > 1$ substantially improves the quality of samples;
however, even using $k = 4$ can sometimes produce noticeably repetitive text.
We reiterate that while increasing $k$ may improve sample quality by making the distortions of watermarked text less noticeable,
doing so harms the robustness of the watermark (e.g., replacing just $20\%$ of the tokens would suffice to evade detection for $k = 4$).
Moreover, using a more robust hash function does not avoid this trade-off between robustness and distortion-freeness, 
as there is a direct trade-off between the likelihood of a hash collision and the robustness of the hash.
In addition to Figure~\ref{fig:experiment-7-hashing}, 
we include more examples (for both $k = 1$ and $k = 4$) and different prompts in Appendix~\ref{app:experiments-instruct} and our code release.

\begin{figure}
    \centering
    \begin{subfigure}[b]{0.49\textwidth}
      \includegraphics[width=\textwidth]{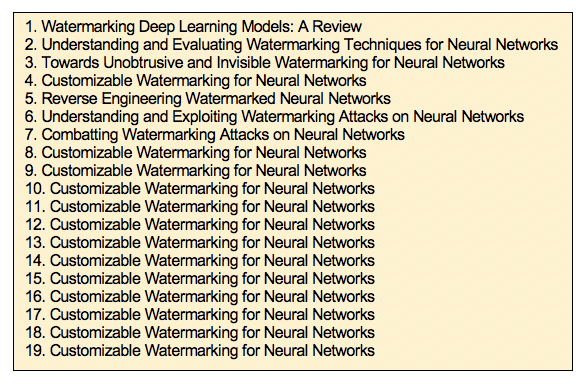}
      \label{fig:experiment-7-hashing-hash}
      \caption{{\gumbhash}}
    \end{subfigure}
    \hfill
    \begin{subfigure}[b]{0.49\textwidth}
      \includegraphics[width=\textwidth]{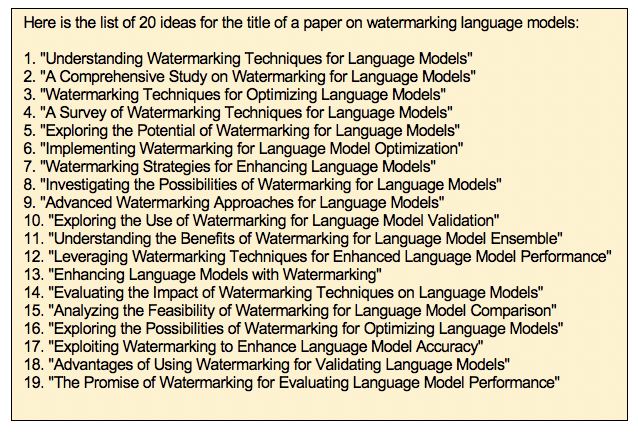}
      \label{fig:experiment-7-hashing-ctrl}
      \caption{{\gumb}}
    \end{subfigure}
  
    \caption{
        Example responses from Alpaca-7B to the prompt: ``Give me 20 ideas for the title of a paper on watermarking language models.''
        We generate (a) by hashing the previous token to determine the inputs to the {\gumb} decoder, while (b) is a regular sample 
        from our {\gumb} strategy.
        Hashing causes the model to degenerate into repetition.
    }
    \label{fig:experiment-7-hashing}
\end{figure}

\section{Discussion}\label{sec:discussion}
In this paper, we give the first distortion-free watermarking strategies for language models that 
are robust to editing and/or cropping.
The key idea underpinning our approach is to leverage methods for robust sequence alignment 
to align a putative watermarked text to a watermark key sequence which the LM
provider uses to generate watermarked text.
The statistical power of our watermarks improves exponentially with respect to the length of the 
text and diminishes only linearly with respect to the length of the watermark key sequence.

The core assumption underlying watermarking
is that the LM provider and the watermark detector coordinate 
by sharing information in advance, e.g., a watermark key.
Indeed, the main inherent limitation of watermarking is that the detector must trust the LM provider to 
faithfully apply the watermark when generating text.
A second limitation, 
which is not inherent to watermarking language models but does presently apply to all known watermarks,
is that the LM provider cannot release the model weights, since then users could simply 
query the model directly instead of through the LM provider.
Planting robust watermarks directly into the weights of a language model without 
degrading the quality of the model is an important direction for future work.

\subsection{Trade-offs among watermarks}
Hashing-based watermarks \citep{aaronson2023openai,kirchenbauer2023watermark,christ2023undetectable}
incur a direct trade-off between the degree of distortion versus robustness: larger hash windows reduce distortion but 
hurt robustness.
We avoid sacrificing distortion-freeness for robustness by choosing to formulate watermark detection as a sequence alignment problem; 
however, this design choice introduces a new trade-off: the computational complexity of our watermark detection algorithms grows linearly with
the length of the watermark key sequence.
In contrast, the complexities of the watermark detection algorithms of both \citet{christ2023undetectable}
and also \citet{aaronson2023openai} and \citet{kirchenbauer2023watermark} depend (in essence) only on 
the length of the input text.
Just as the watermark key length imposes a cap on both
the total number of distortion-free watermarked tokens the LM provider may generate for a single query as well as 
the expected total they can generate across multiple queries before reusing a part of the key sequence, the
the window size imposes a cap on the number of (distortion-free) tokens one can expect to generate using a hashing-based 
watermark without incurring a hash collision.
Whether this apparent tension between detection complexity, robustness and (approximate) distortion-freeness 
is due to fundamental trade-offs is an interesting open question.

To illustrate how these trade-offs manifest in practice,
suppose an LM provider responds with $m = 100$ tokens to 
a sequence of $T = 10$ user queries (which may be adaptively chosen).
Recall from Section~\ref{sec:multi-query} that
we must set the key length $n = \omega(mT^2)$ to achieve approximate distortion-freeness using our watermarks,
where here $mT^2 = 10000$;
for $n = 10000$, the runtimes\footnote{
    We report average runtimes and the associated standard deviations across $5$ calls on an Apple M2 Macbook Pro Laptop. We include benchmarking scripts
    with our code release.
} of our implementation of the test statistics for 
{\its}, {\itsedit}, {\gumb}, {\gumbedit} are $0.004 \pm 0.0002$, $0.60 \pm 0.01$, $2.20 \pm 0.01$ and $3.21 \pm 0.01$ seconds respectively.\footnote{
    In principle, running $\detect$ requires recomputing the test statistic for each resampled watermark key in order 
    to obtain an exact $p$-value. However, as we discuss in Section~\ref{app:experiments} and Appendix~\ref{app:experiments-pvals}, we can avoid this recomputation 
    and still obtain approximate $p$-values with respect to some reference distribution of unwatermarked text,
    in which case we need only compute the single test statistic once (using the original watermark key) during watermark detection (Algorithm~\ref{algorithm:modified-detect}).
}

In order to achieve approximate distortion-freeness using a hashing-based watermark in the same setting, the LM provider must set the window size $k$ to be sufficiently large
as a function of $m$ and $T$.
For example, \citet{christ2023undetectable} argue that the probability of incurring the same sequence of tokens twice decays exponentially with the observed 
entropy of the token sequence (which will depend on the previous tokens); thus, they dynamically adjust the window size $k$ during generation to ensure the observed entropy of the constituent 
tokens in the window is sufficiently large such that the probability of a hash collision with any of the other windows is negligible across all $mT$ tokens.
Specifically, letting $h$ denote the expected observed per-token entropy (i.e., log-probability),
the typical window size for a hashing-based watermark would need to be at least $k \approx \frac{\log m T}{h}$ in order to ensure approximate distortion-freeness in our setting.
Concretely, in the setting of Section~\ref{sec:exp-instruct}, we have $h \approx 0.70$ for Alpaca-7B,
in which case such a hashing-based watermark would not be robust to replacing  
more than roughly $10\%$ of watermarked tokens.

\subsection{Recommendations in practice and combining watermarks}
We conclude with some salient recommendations for practitioners aiming to watermark their deployed language models.
First, though in principle the length of the watermark key sequence $n$---which recall imposes a cap on the total 
number of distortion-free watermarked tokens the LM provider can generate---can grow (nearly) exponentially 
in the block size $k$ of the test statistic while still enabling watermark detection from as few as $k$ tokens,
in practice we find that using a fairly small watermark key sequence (e.g., $n = 256$) 
does not noticeably affect the quality of watermarked text (i.e., even when generating more than $n$ tokens total)
while allowing for fast detection and improved robustness.
In settings where robustness is important (e.g., discouraging students from using a language model for homework assistance) 
we recommend practitioners use our {\gumbedit} watermark, as it is by far the most robust watermark 
of those we tested.
Meanwhile, in settings where throughput of detection is important (e.g., scrubbing synthetic text from a large training corpus),
we recommend practitioners use our {\its} watermark: its detection is essentially an instance of maximum inner-product search, a problem 
for which there exist various fast implementations and indexing structures (e.g., via vector databases).

Finally, we remark that for certain hashing-based watermarks we can combine our watermark with the hashing-based watermark 
to generate watermarked text that is detectable
using either of the two corresponding watermark detection algorithms.
For example, we can use the hashing-based watermark of Kirchenbauer et al. (2023), 
which biases the distribution of the next token by upweighting certain logits over others,
to determine the distribution of the next token and then use our watermarks to sample from this next token distribution.
One can then later determine which detection procedure to run for a collection of putative watermarked text depending on whether throughput or robustness is a higher priority.
Similarly, we can combine our watermarks with the watermark of \citet{christ2023undetectable} by alternating tokens between 
the watermarks (and only hashing the alternate tokens).
The resulting watermarked text will be approximately distortion-free since both watermarks
are approximately distortion-free.
One can choose between the two detection procedures to optimize precision versus recall in adversarial settings; 
in particular, the watermark of \citet{christ2023undetectable} is hard to spoof (due to its multi-query undetectability guarantee) while 
our watermarks are harder to remove (due to our robustness guarantees).
Exploring such combinations of watermarks with complementary strengths is an exciting direction for future work.



\section*{Acknowledgement}
We thank Saminul Haque, Gary Cheng and Padma Kuditipudi for pointing out errors in preliminary drafts of this work
and for their helpful feedback in general.
This work is supported by an Open Philanthropy Project Award (OpenPhil) and an NSF Frontier Award (NSF Grant no. 1805310).

\bibliography{references}
\bibliographystyle{abbrvnat}

\appendix

\section{Proof of Lemma~\ref{lemma:lower-bound-all}}\label{app:lb}
\begin{proof}
    To show the claim, we first lower bound the probability that $Y = Y'$.
    In particular,
    \begin{align*}
        \P(Y = Y') & = \sum_y \P(Y = y) \P(Y' = y) \\
        & = \sum_y \P(Y = y) \prod_{i \in [m]} p(y_i \mid y_{:i-1}) \\
        & = \sum_y \P(Y = y) \prod_{i \in [m]} (1 - (1 - p(y_i \mid y_{:i-1}))) \\
        & \stackrel{(\star)}{\geq} \sum_{y\in \mc{V}_c} \P(Y = y) \exp\left(-c\sum_{i \in [m]} 1 - p(y_i \mid y_{:i-1})\right) \\
        & \geq \Ep \left[\exp\left(-c m\alpha(Y)\right) \indic{Y \in \mc{V}^c}\right],
    \end{align*}
    where ($\star$) follows from
    $\exp(-cx) \leq 1 - x$ for $0 \leq x \leq 1 - \exp(-c/2)$.
    It then follows immediately that we can bound the total variation distance 
    between the joint distributions of $(Y,\xi)$ and $(Y',\xi)$ by
    \begin{align*}
        D_{TV}((Y,\xi)||(Y',\xi)) & \leq \P((Y,\xi) \neq (Y',\xi)) \\
        & \leq 1 - \Ep \left[\exp\left(-c m\alpha(Y)\right)\indic{Y \in \mc{V}^c}\right].
    \end{align*}

    Observe for any event $A$ that
    \begin{align*}
        D_{TV}((Y,\xi)||(Y',\xi)) \geq \P((Y,\xi) \in A) - \P((Y',\xi) \in A),
    \end{align*}
    and thus, combining the previous two displays, we have
    \begin{align*}
        \P((Y,\xi) \in A) + \P((Y',\xi) \notin A) & \geq \P((Y,\xi) \in A) + \P((Y,\xi) \notin A) - D_{TV}((Y,\xi)||(Y',\xi)) \\
        & \geq \Ep \left[\exp\left(-c m\alpha(Y)\right)\indic{Y \in \mc{V}^c}\right].
    \end{align*}
    The desired result thus follows from letting $A$ be the event that $h$ predicts $-1$.
\end{proof}
\section{Analysis of inverse transform sampling}\label{app:transform}
We first introduce the following supporting lemma.
Recall $C_0 = \txt{Var}(\eta(\txt{Unif}([N])))$ for $\eta(i) = (i-1)/(N-1)$.

\begin{lemma}\label{lemma:core-result-cov}
    Let $\mu \in \Delta([N])$. Let $(U,\pi) \sim \txt{Unif}([0,1]) \times \txt{Unif}(\Pi)$ and
    $Y = \decode((U,\pi),\mu)$. 
    Then $\frac{1}{C_0}\txt{Cov}(U,\eta(\pi(Y)) \mid Y) =  1-\mu(Y)$ almost surely.
\end{lemma}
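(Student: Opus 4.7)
The plan is to reduce the conditional covariance to a direct moment computation by conditioning on the rank $R := \pi(Y) \in [N]$ of the sampled token under the permutation. The key preliminary observation is that the posterior $\pi \mid Y = y$ remains uniform on $\Pi$: by Bayes' rule, $\Prb(\pi = \sigma \mid Y = y) \propto \Prb(\pi = \sigma)\,\Prb(Y = y \mid \pi = \sigma)$, and Theorem~\ref{thm:stealth} (applied with $\pi = \sigma$ fixed, so the randomness is only over $U$) gives $\Prb(Y = y \mid \pi = \sigma) = \mu(y)$, which is constant in $\sigma$. Consequently $R \mid Y = y$ is uniform on $[N]$ and, conditional further on $R = r$, the restriction of $\pi$ to $[N] \setminus \{y\}$ is a uniformly random bijection onto $[N] \setminus \{r\}$.

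Next I would use the interval characterization from the proof of Theorem~\ref{thm:stealth}: given $\pi$ and $Y = y$, the variable $U$ is uniform on $[\mu(A_{<r}),\mu(A_{\leq r}))$ with $r = \pi(y)$ and $A_{<r} = \{y' : \pi(y') < r\}$. This immediately yields $\Ep[U \mid Y = y, \pi] = \mu(A_{<r}) + \mu(y)/2$. Exchangeability within $[N] \setminus \{y\}$ gives $\Prb(\pi(y') < r \mid Y = y, R = r) = (r-1)/(N-1) = \eta(r)$ for each $y' \neq y$, so $\Ep[\mu(A_{<r}) \mid Y = y, R = r] = (1 - \mu(y))\,\eta(r)$ and therefore $\Ep[U \mid Y = y, R = r] = (1 - \mu(y))\,\eta(r) + \mu(y)/2$. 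Averaging against the uniform law of $R$ yields $\Ep[U \mid Y = y] = \Ep[\eta(R) \mid Y = y] = 1/2$ together with
\begin{align*}
    \Ep[U\,\eta(R) \mid Y = y] &= (1 - \mu(y))\,\Ep[\eta(R)^2] + \frac{\mu(y)}{4} \\
    &= (1 - \mu(y))\left(C_0 + \frac{1}{4}\right) + \frac{\mu(y)}{4},
\end{align*}
where I have used $\Ep[\eta(R)^2] = \txt{Var}(\eta(R)) + (1/2)^2 = C_0 + 1/4$. Subtracting $\Ep[U \mid Y]\,\Ep[\eta(R) \mid Y] = 1/4$ leaves $\txt{Cov}(U,\eta(\pi(Y)) \mid Y = y) = C_0(1 - \mu(y))$, which upon dividing by $C_0$ is the stated identity.

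The only subtle step is establishing uniformity of $\pi \mid Y$, since $Y$ depends jointly on $U$ and $\pi$ through a thresholding that couples them in a $\pi$-dependent way. I would handle this by first conditioning on $\pi$ alone---where the conditional law of $U$ given $Y = y$ is transparent from the interval description---and then marginalizing $\pi$ via Bayes, exploiting that the acceptance interval has $\pi$-independent length $\mu(y)$. Once uniformity of $\pi \mid Y$ is in hand, the remainder of the argument is routine moment arithmetic against the known laws of $R$ and $U$.
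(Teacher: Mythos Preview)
Your proposal is correct and follows essentially the same approach as the paper's proof: both first establish via Bayes' rule and Theorem~\ref{thm:stealth} that $\pi \mid Y$ remains uniform (so $\pi(Y)\mid Y\sim\txt{Unif}([N])$), then use the interval characterization of $U\mid \pi,Y$ together with exchangeability of the remaining labels to compute $\Ep[U\mid Y,\pi(Y)]=(1-\mu(Y))\eta(\pi(Y))+\mu(Y)/2$, and finally reduce the covariance to $(1-\mu(Y))\,\txt{Var}(\eta(\pi(Y))\mid Y)=C_0(1-\mu(Y))$. The only cosmetic difference is that the paper writes the conditional mean as $1/2+(\eta(\pi(Y))-1/2)(1-\mu(Y))$ and reads off the variance form directly, whereas you expand $\Ep[U\eta(R)\mid Y]$ explicitly before subtracting; these are the same computation.
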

\begin{proof}
    We first characterize the conditional distribution of $\pi$ given $Y$
    and the conditional distribution of $U$ given both $\pi$ and $Y$, 
    where recall $\pi$ and $Y$ are discrete.
    Applying Bayes' formula and Theorem~\ref{thm:stealth}, we have
    \begin{align}
        \P(\pi \mid Y) = \frac{\P(Y \mid \pi) \P(\pi)}{\P(Y)}
        \stackrel{(\star)}{=} \frac{\mu(Y) \P(\pi)}{\P(Y)}
        = \P(\pi). \label{eqn:helper-display-1}
    \end{align}
    Also, defining the interval
    \begin{align*}
        I(Y,\pi) \defeq \left[\mu(\{y : \pi(y) < \pi(Y)\}),\mu(\{y : \pi(y) \leq \pi(Y)\})\right],
    \end{align*}
    for any interval $I \subset [0,1]$ we have
    \begin{align}
        \P(U \in I \mid Y,\pi)
        \stackrel{(a)}{=} \frac{\P(Y \mid U \in I,\pi)\P(U \in I)\P(\pi)}{\mu(Y)\P(\pi)}
        \stackrel{(b)}{=} \frac{\abs{I \cap I(Y,\pi)}}{\mu(Y)}
        \stackrel{(c)}{=} \frac{\abs{I \cap I(Y,\pi)}}{\abs{I(Y,\pi)}}, \label{eqn:helper-display-2}
    \end{align}
    where ($a$) follows from Bayes' formula and the independence of $U$ and $\pi$;
    ($b$) follows from the definition~\eqref{eqn:rho-defn} of the decoder $\decode$; 
    and ($c$) follows from $I(Y,\pi) \subset [0,1]$ having width equal to
    $\mu(Y)$.
    The displays~\eqref{eqn:helper-display-1} and~\eqref{eqn:helper-display-2} respectively imply $\pi \mid Y \sim \txt{Unif}(\Pi)$
    and $U \mid \pi,Y \sim \txt{Unif}(I(Y,\pi))$, from which it follows that
    \begin{align*}
        \Ep\left[U \mid Y,\pi(Y)\right] & = \Ep\left[\mu(\{y:\pi(y)<\pi(Y)\})  + \frac{\abs{I(Y,\pi)}}{2} \;\middle|\; Y,\pi(Y)\right] \\
        & = \frac{\left(\pi(Y)-1\right)\left(1-\mu(Y)\right)}{n-1} + \frac{\mu(Y)}{2} \\
        & = 1/2 + \left(\eta(\pi(Y))-1/2\right)\left(1-\mu(Y)\right).
    \end{align*}
    By symmetry, we have $\Ep[U] = \Ep[\eta(\pi(Y))] = 1/2$, the former 
    because $\P(Y \mid U) = \P(Y \mid 1-U)$ for any $U$ and 
    the latter because recall $\pi \mid Y$ is uniform over $\Pi$.
    Thus, marginalizing the preceding display over $\pi(Y)$ gives
    \begin{align*}
        \txt{Cov}(U,\eta(\pi(Y)) \mid Y) & = \Ep \left[\left(U-1/2\right)\left(\eta(\pi(Y)) - 1/2\right) \mid Y\right] \\
        & = (1 - \mu(Y)) \txt{Var}(\eta(\pi(Y)) \mid Y),
    \end{align*}
    from which the desired result follows immediately from recalling
    $\pi(Y) \mid Y \sim \txt{Unif}([N])$ and the definition of the constant $C_0$.
\end{proof}
\subsection{Proof of Lemma~\ref{lemma:sound-transform}}
\begin{proof}
    Recall by definition
    \begin{align*}
        d(Y_i,\xi_i) = -(U_i - 1/2) \cdot (\eta(\pi_i(Y_i)) - 1/2),
    \end{align*}
    where (as in the proof of Lemma~\ref{lemma:core-result-cov}) we have $\Ep[U_i \mid Y] = \Ep[\eta(\pi_i(Y_i)) \mid Y] = 1/2$.
    Lemma~\ref{lemma:core-result-cov} thus implies $\Ep[d(Y_i,\xi_i) \mid Y] = -C_0 \cdot (1 - p(Y_i \mid Y_{:i-1}))$,
    while trivially $\Ep[d(Y_i,\xi_j') \mid Y] = 0$ as $Y$ and $\xi'$ are independent.
    The result follows immediately.
\end{proof}
\subsection{Proof of Lemma~\ref{lemma:perm-test}}
We prove the following more general result, from which Lemma~\ref{lemma:perm-test} follows as a corollary.
\begin{lemma}
    Let $m,n \in \N$ with $n \geq m$, where $m$ is the generation length and $n$ is the watermark key length. 
    Define the decoder $\decode$ by equation~\eqref{eqn:rho-defn}, alignment score $d$ by equation~\eqref{eqn:d-defn},
    and $\phi$ by Algorithm~\ref{algorithm:test} with block size $k \leq m$.
    Let $\xi,\xi' \iid \txt{Unif}(\Xi^n)$ with $Y = \generate(\xi;n,p,\decode)$.
    Let $\wt{Y}$ be a substring of $Y$ of length at least $k$ that is conditionally independent of 
    $\xi$ and $\xi'$ given $Y$, i.e., $\wt{Y} = Y_{\tau+1:\tau+\ell}$ for $\ell \geq k$.
    Then for $\est{\alpha} \defeq 1 - \frac{1}{k}\sum_{i=\tau+1}^{\tau+k} p(Y_i \mid Y_{:i-1})$, almost surely
    \begin{align*}
        \P(\phi(\wt{Y},\xi') \leq \phi(\wt{Y},\xi) \mid \wt{Y},Y) \leq 2n\exp\left(-k C_0^2\est{\alpha}^2/2\right).
    \end{align*}
\end{lemma}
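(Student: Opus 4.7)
The plan is to exploit the natural alignment between $\wt{Y}$ and $\xi$ induced by generation, together with Hoeffding's inequality and a union bound. Since $\wt{Y} = Y_{\tau+1:\tau+\ell}$, its first length-$k$ window $\wt{y}^1 = Y_{\tau+1:\tau+k}$ was produced by applying $\decode$ to $(\xi_{\tau+1}, \ldots, \xi_{\tau+k})$. Writing $\wt{d} \defeq d(\wt{y}^1, \xi^{\tau+1})$, the definition of $\phi$ as a minimum over window/offset pairs yields the deterministic bound $\phi(\wt{Y}, \xi) \leq \wt{d}$, so it suffices to control $\wt{d}$ from above and $\phi(\wt{Y}, \xi')$ from below.

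For the upper bound on $\wt{d}$: Lemma~\ref{lemma:sound-transform} gives $\Ep[d(Y_i, \xi_i) \mid Y] = -C_0(1 - p(Y_i \mid Y_{:i-1}))$, whence $\Ep[\wt{d} \mid Y] = -kC_0\est{\alpha}$. Each summand lies in $[-\tfrac{1}{4}, \tfrac{1}{4}]$, and conditionally on $Y$ the coordinates $\xi_{\tau+1}, \ldots, \xi_{\tau+k}$ are independent: in $\generate$, $\xi_i$ influences $Y$ only through $Y_i = \decode(\xi_i, p(\cdot \mid Y_{:i-1}))$, so the posterior $P(\xi \mid Y)$ factorizes across indices. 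Hoeffding's inequality applied conditionally on $Y$ therefore gives $\P(\wt{d} \geq -kC_0\est{\alpha}/2 \mid Y) \leq \exp(-kC_0^2\est{\alpha}^2/2)$.

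For the lower bound on $\phi(\wt{Y}, \xi')$: for any $(i, j)$, $d(\wt{y}^i, \xi'^j)$ is a sum of $k$ terms that are independent given $Y$ (since $\xi'$ is independent of $Y$) and mean-zero (both $U'_l - \tfrac{1}{2}$ and $\eta(\pi'_l(\cdot)) - \tfrac{1}{2}$ are symmetric around $0$), each bounded in $[-\tfrac{1}{4}, \tfrac{1}{4}]$. A second Hoeffding bound gives $\P(d(\wt{y}^i, \xi'^j) \leq -kC_0\est{\alpha}/2 \mid Y) \leq \exp(-kC_0^2\est{\alpha}^2/2)$; a union bound over the $n$ choices of $j$ (and, when $\ell > k$, over the $\ell - k + 1$ choices of $i$) yields the corresponding tail bound for $\phi(\wt{Y}, \xi')$.

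Combining the two displays, if neither bad event occurs then $\phi(\wt{Y}, \xi) \leq \wt{d} < -kC_0\est{\alpha}/2 < \phi(\wt{Y}, \xi')$, which is the complement of the event being bounded; a final union bound yields the claim (the stated prefactor $2n$ corresponds to the setting $\ell = k$, in which the number of window/offset pairs is just $n$ so the total is $(1+n)\exp(\cdot) \leq 2n\exp(\cdot)$). The main technical obstacle is the conditional independence in the first step, because the summands of $\wt{d}$ use the very $\xi_{\tau+l}$ that drove generation and so are marginally heavily coupled with $Y$; the point is that this coupling is exactly neutralized by conditioning on $Y$, and the conditional distributions computed in Lemma~\ref{lemma:core-result-cov} furnish the means and ranges required by Hoeffding.
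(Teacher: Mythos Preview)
Your proposal is correct and follows essentially the same route as the paper: upper-bound $\phi(\wt{Y},\xi)$ by the alignment cost $\wt d$ at the true offset $\tau$, use Lemma~\ref{lemma:sound-transform} for the conditional means, apply Hoeffding (via the conditional independence of the $\xi_i$ given $Y$ that you justify explicitly) to each side around the threshold $-kC_0\est\alpha/2$, and finish with a union bound over the $n$ key offsets. Your parenthetical that the stated prefactor $2n$ matches only the case $\ell=k$ (with an extra $\ell-k+1$ factor needed otherwise) is accurate; the paper's proof likewise restricts attention to the single text window $\wt{Y}_{1:k}$ and union-bounds only over $j\in[n]$.
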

\begin{proof}
    Recall by definition
    \begin{align}
        d(y,(u,\pi)) = -\sum_{i=1}^{\len{y}} (u_i - 1/2) \cdot (\eta(\pi_i(y_i)) - 1/2), \label{eqn:d-defn-review}
    \end{align}
    Lemma~\ref{lemma:sound-transform} and the conditional independence of $\tau$ and $\xi$ given $Y$ 
    imply for any $j \in [n]$ that 
    \begin{align*}
        \Ep[d(\wt{Y}_{1:k},\xi_{(j+1:j+k)\% n}') \mid Y,\wt{Y}] - \Ep[d(\wt{Y}_{1:k},\xi_{\tau+1:\tau+k})\mid Y,\wt{Y}] = k C_0 \est{\alpha}.
    \end{align*}
    
    Each summand in equation~\eqref{eqn:d-defn-review} lies between $-1/4$ and $1/4$,
    and also $(U_i,\pi_i)$ is conditionally independent of $U_{-i}$ and $\pi_{-i}$ given $Y$.
    Thus, Hoeffding's inequality \cite[Proposition 2.5]{wainwright19} implies for $j \in [n]$ that 
    \begin{align*}
        & \P\left(d(\wt{Y},\xi_{(j+1:j+k)\% n}') \leq d(\wt{Y},\xi_{\tau+1:\tau+k}) \mid Y,\wt{Y}\right) \\
        & \leq \P\left(d(\wt{Y},\xi_{1:m}) - \Ep[d(\wt{Y},\xi_{1:m})] \geq kC_0\est{\alpha}/2 \mid Y,\wt{Y}\right) \\
        &  \qquad + \P\left(\Ep[d(\wt{Y},\xi_{j+1:j+m}')] - d(\wt{Y},\xi_{j+1:j+m}') \geq k C_0 \est{\alpha}/2\mid Y,\wt{Y}\right) \\
        & \leq 2\exp\left(-m C_0^2\est{\alpha}^2/2\right).
    \end{align*}
    Recalling the definition of the test statistic $\phi$ via Algorithm~\ref{algorithm:test},
    the main claim then follows from taking a union bound over all $j \in [n]$.
\end{proof}

\subsection{Proof of Lemma~\ref{lemma:perm-test-subs}}
\begin{proof}
    We begin with the following observation for a single token.
    \begin{observation}\label{obs:robust-perm-test}
        Let $P \in \Delta([N])$. Let $(U,\pi) \sim \txt{Unif}([0,1]) \times \txt{Unif}(\Pi)$ and
        $Y = \decode((U,\pi),P)$. Let $\wt{Y} \in [N]$ be conditionally independent of $(U,\pi)$ given $Y$.
        If $\wt{Y} \neq Y$, then almost surely
        \begin{align*}
            \txt{Cov}(U,\eta(\pi(\wt{Y})) \mid Y,\wt{Y}) = -\frac{1}{N-1}\txt{Cov}(U,\eta(\pi(Y)) \mid Y,\wt{Y}).
        \end{align*}
    \end{observation}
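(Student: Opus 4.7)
The plan is to reduce the observation to an identity about fixed labels and then close it with a summation argument. Since $\wt{Y}$ is conditionally independent of $(U,\pi)$ given $Y$, conditioning additionally on $\{\wt{Y} = \wt{y}\}$ with $\wt{y} \neq Y$ leaves the conditional law of $(U,\pi)$ unchanged. So both conditional covariances in the statement reduce to deterministic functions of $Y$ in which $\wt{y}$ plays the role of a fixed label distinct from $Y$; in particular, the right-hand side equals $-C_0(1-P(Y))/(N-1)$ by Lemma~\ref{lemma:core-result-cov}.

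Next, I would exploit the deterministic identity $\sum_{\wt{y} \in [N]} \eta(\pi(\wt{y})) = \sum_{i=1}^N \eta(i) = N/2$, which holds for every permutation $\pi$. Multiplying by $U$ and taking expectation conditional on $Y$ gives $\sum_{\wt{y}} \Ep[U\eta(\pi(\wt{y})) \mid Y] = (N/2)\,\Ep[U \mid Y] = N/4$, where $\Ep[U \mid Y] = 1/2$ follows from marginalizing the formula for $\Ep[U \mid Y, \pi(Y)]$ derived inside the proof of Lemma~\ref{lemma:core-result-cov} over $\pi(Y) \sim \txt{Unif}([N])$. Because $\pi \mid Y \sim \txt{Unif}(\Pi)$ we also have $\Ep[\eta(\pi(\wt{y})) \mid Y] = 1/2$ for every $\wt{y}$, so subtracting $1/4$ per term converts expectations into covariances and gives $\sum_{\wt{y} \in [N]} \txt{Cov}(U,\eta(\pi(\wt{y})) \mid Y) = 0$. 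Isolating the $\wt{y} = Y$ term, which by Lemma~\ref{lemma:core-result-cov} equals $C_0(1-P(Y))$, then yields $\sum_{\wt{y} \neq Y} \txt{Cov}(U,\eta(\pi(\wt{y})) \mid Y) = -C_0(1-P(Y))$.

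To finish, one argues that each of the $N-1$ summands above is equal, so each equals $-C_0(1-P(Y))/(N-1)$ as claimed. Justifying this equality is the main obstacle. The naive label-swap symmetry---replacing $\pi$ by $\pi \circ \tau$ for a transposition $\tau$ on $[N] \setminus \{Y\}$---preserves the uniform law of $\pi$, but it also shifts the lower endpoint $\mu(\{y':\pi(y')<\pi(Y)\})$ of the interval on which $U$ is conditionally uniform whenever $P(\wt{y}_1) \neq P(\wt{y}_2)$ for the swapped labels, so a pure symmetry argument is insufficient. Closing the gap requires a direct combinatorial computation of $\Ep_\pi[\eta(\pi(\wt{y})) \cdot \mu(\{y':\pi(y')<\pi(Y)\}) \mid Y]$ via the joint law of $(\pi(\wt{y}), \pi(Y), \pi(y'))$ for a uniform $\pi$---splitting the sum defining $\mu(\{y':\pi(y')<\pi(Y)\})$ into the $y' = \wt{y}$ contribution and the contributions from $y' \neq \wt{y}, Y$---after which one verifies that the resulting expression reduces to the claimed closed form once the mean $1/4$ is subtracted to form the covariance.
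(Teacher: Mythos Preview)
Your first three steps are sound: conditioning on $\wt{Y}=\wt{y}$ leaves the law of $(U,\pi)$ given $Y$ unchanged; the identity $\sum_{\wt{y}}\eta(\pi(\wt{y}))=N/2$ gives $\sum_{\wt{y}\in[N]}\txt{Cov}(U,\eta(\pi(\wt{y}))\mid Y)=0$; and isolating $\wt{y}=Y$ via Lemma~\ref{lemma:core-result-cov} yields $\sum_{\wt{y}\neq Y}\txt{Cov}(U,\eta(\pi(\wt{y}))\mid Y)=-C_0(1-P(Y))$.

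The gap is exactly where you locate it, and it cannot be closed: the $N-1$ summands are \emph{not} equal in general, so the identity as stated is false and no combinatorial computation will produce it. Carrying out the computation you sketch---writing $\Ep[U\mid Y,\pi]=P(\{y':\pi(y')<\pi(Y)\})+P(Y)/2$ and splitting the sum over $y'\neq Y$ into $y'=\wt{y}$ versus $y'\notin\{\wt{y},Y\}$---gives
\begin{align*}
\txt{Cov}(U,\eta(\pi(\wt{y}))\mid Y)=P(\wt{y})\,(A-B)+(1-P(Y))\,(B-1/4),
\end{align*}
where $A=\Ep\brk{\indic{I<J}\eta(I)}$ over $(I,J)$ uniform on ordered pairs of distinct elements of $[N]$ and $B=\Ep\brk{\indic{K<J}\eta(I)}$ over $(I,J,K)$ uniform on ordered distinct triples. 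Conditioning on $I$ shows $B=1/4$ (since $\P(K<J\mid I)=1/2$), and a short calculation gives $A-1/4=-\frac{N+1}{12(N-1)}=-C_0$. Hence
\begin{align*}
\txt{Cov}(U,\eta(\pi(\wt{y}))\mid Y)=-C_0\,P(\wt{y}),
\end{align*}
which equals the claimed $-C_0(1-P(Y))/(N-1)$ only when every label $\wt{y}\neq Y$ has the same mass under $P$. For a concrete counterexample take $N=3$, $P=(1/2,\,3/10,\,1/5)$, $Y=1$: the two covariances are $-1/20$ and $-1/30$, whereas the claimed common value is $-1/24$.

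The paper's own argument takes a different route---it forms a mixture $X$ equal to $\eta(\pi(Y))$ with probability $1/N$ and to $\eta(\pi(\wt{Y}))$ otherwise, observes that $X\mid Y,\wt{Y}$ is marginally uniform on $\{\eta(1),\dots,\eta(N)\}$, and concludes $\txt{Cov}(U,X\mid Y,\wt{Y})=0$---but this last step is unjustified: a fixed marginal for $X$ does not make $X$ conditionally independent of $U$, since both depend on $\pi$ given $Y$. The same counterexample shows $\txt{Cov}(U,X\mid Y,\wt{Y})\neq 0$ there as well.
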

    \begin{proof}[Proof of Observation~\ref{obs:robust-perm-test}]
        Observe the conditional distribution of $\pi(\wt{Y})$ given $Y$ is uniform over 
        $[N] \setminus \{\pi(Y)\}$.
        Let $X$ be a random variable that is equal to $\eta(\pi(Y))$
        with probability $1/N$ and otherwise equal to $\eta(\pi(\wt{Y}))$.
        Observe $X$ is independent of $Y$ and thus also $U$ by assumption---in particular, $(N-1)X + 1 \mid Y \sim \txt{Unif}([N])$
        irrespective of the value of $Y$.
        The claim thus follows from rearranging terms in the equality
        \begin{align*}
            0 = \txt{Cov}(U,X \mid Y,\wt{Y}) = \frac{1}{N} \txt{Cov}(U,\eta(\pi(Y)) \mid Y,\wt{Y}) + \frac{N-1}{N}\txt{Cov}(U,\eta(\pi(\wt{Y})) \mid Y,\wt{Y}).
        \end{align*}
    \end{proof}

    Lemma~\ref{lemma:sound-transform} and Observation~\ref{obs:robust-perm-test} 
    together imply for any $j \in [n]$ that 
    \begin{align*}
        \Ep[d(\wt{Y},\xi_{j+1:j+m}') \mid \wt{Y},Y] - \Ep[d(\wt{Y},\xi_{1:m}) \mid \wt{Y},Y] = m C_0\wt{\alpha}(Y,\wt{Y}),
    \end{align*}
    i.e., by adding the two results together using Observation~\ref{obs:robust-perm-test} to account for 
    the influence of each substituted token on the expectation.
    Using the same concentration argument as in the proof of Theorem~\ref{lemma:perm-test},
    we then have 
    \begin{align*}
        & \P\left(d(\wt{Y},\xi_{j+1:j+m}') \leq d(\wt{Y},\xi_{1:m})\mid \wt{Y},Y\right) \\
        & \leq \P\left(d(\wt{Y},\xi_{1:m}) - \Ep[d(\wt{Y},\xi_{1:m})] \geq m\wt{\alpha}(Y,\wt{Y})/2 \mid \wt{Y},Y\right) \\
        &  \qquad + \P\left(\Ep[d(\wt{Y},\xi_{j+1:j+m}')] - d(\wt{Y},\xi_{j+1:j+m}') \geq m \wt{\alpha}(Y,\wt{Y})/2\mid \wt{Y},Y\right) \\
        & \leq 2\exp\left(-m C_0^2\wt{\alpha}(Y,\wt{Y})^2/2\right).
    \end{align*}
    Recalling the definition of the test statistic $\phi$ via Algorithm~\ref{algorithm:test},
    the main claim then follows from taking a union bound over all $j \in [n]$ and recalling $k = m$ by assumption.
\end{proof}
\subsection{Proof of Lemma~\ref{lemma:perm-test-indel}}
\begin{proof}
    We begin with the following useful facts about edit distance. Throughout, let 
    $\mc{S}(y)$ denote the set of substrings of a string $y \in \mc{V}^*$, including the empty string.
    \begin{observation}\label{obs:edit-dist-basic}
        Let $y,\wt{y} \in \mc{V}^*$. Then $d_\txt{edit}(y,\wt{y})$ is the length of the smallest 
        sequence of insertion and/or deletion operations to obtain $\wt{y}$ from $y$. 
    \end{observation}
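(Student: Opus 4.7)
The plan is to induct on $\len{y} + \len{\wt y}$. Let $e(y,\wt y)$ denote the minimum number of insertion/deletion operations needed to transform $y$ into $\wt y$, and let $d = d_\txt{edit}$. I would show $e(y,\wt y) = d(y,\wt y)$ by simultaneously proving $e(y,\wt y) \leq d(y,\wt y)$ and $e(y,\wt y) \geq d(y,\wt y)$ at each inductive step.

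For the base cases, if $\wt y$ is empty then the only way to reach $\wt y$ from $y$ using only insertions/deletions is to delete all $\len{y}$ symbols of $y$; by the definition of $d$, this matches $d(y,\wt y) = \len{y}$, and symmetrically when $y$ is empty. For the inductive step, I would split on whether $y_1 = \wt y_1$. When $y_1 \neq \wt y_1$, any valid edit sequence must at some point either delete the leading symbol of $y$ or insert the leading symbol of $\wt y$; by reordering operations (insertions/deletions on disjoint positions commute up to index shifts), one can assume without loss of generality that this operation happens first, after which the remaining sequence converts either $y_{2:}$ to $\wt y$ or $y$ to $\wt y_{2:}$. This yields $e(y,\wt y) = 1 + \min\{e(y_{2:},\wt y), e(y,\wt y_{2:})\}$, matching the recursive definition of $d$.

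The main obstacle is the matching case $y_1 = \wt y_1$, where I need to argue $e(y,\wt y) = e(y_{2:},\wt y_{2:})$. The $\leq$ direction is easy: any edit sequence transforming $y_{2:}$ into $\wt y_{2:}$ extends to one transforming $y$ into $\wt y$ of the same length, by leaving the shared first symbol untouched. The $\geq$ direction requires an exchange argument. Given an optimal edit sequence for $(y,\wt y)$, if it never touches the leading symbol of $y$ then it restricts to an edit sequence of the same length between $y_{2:}$ and $\wt y_{2:}$. Otherwise the sequence must delete the leading $y_1$ at some step; because $\wt y_1 = y_1$ is present in the target, some later step must insert a matching symbol at position $1$. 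Swapping this deletion and insertion for a ``no-op'' on the shared leading symbol (and adjusting indices of intervening operations) produces an edit sequence of length two less, contradicting optimality. Hence an optimal sequence exists that fixes the first symbol, giving the desired equality.

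Combining the two inductive cases with the base cases closes the induction and establishes $e(y,\wt y) = d_\txt{edit}(y,\wt y)$ for all $y,\wt y \in \mc{V}^*$.
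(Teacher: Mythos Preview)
Your proposal and the paper's proof share the same inductive skeleton—induction on $\len{y}+\len{\wt y}$, case-splitting on whether the leading symbols agree—but they differ in what they actually establish. The paper only constructs, in each branch of the recursion, an edit sequence of length $d_\txt{edit}(y,\wt y)$; in your notation it shows $e(y,\wt y)\le d_\txt{edit}(y,\wt y)$ and never argues the reverse inequality. That suffices for how the observation is used downstream (only existence of such a sequence is ever invoked), but it leaves the ``smallest'' part of the stated claim unaddressed. Your proposal is more complete in that you explicitly tackle both directions by showing that $e$ satisfies the same recurrence as $d_\txt{edit}$.

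One small wrinkle in your exchange argument for the matching case: you assert that if an optimal sequence deletes the leading $y_1$, then ``some later step must insert a matching symbol at position $1$.'' This need not hold—$\wt y_1$ might instead be a surviving $y_i$ with $i>1$ that happens to equal $y_1$ (e.g.\ $y=aa$, $\wt y=a$, deleting the first $a$). The patch is routine: either reason via the identity $e(y,\wt y)=\len{y}+\len{\wt y}-2\cdot\mathrm{LCS}(y,\wt y)$ together with the standard fact $\mathrm{LCS}(y,\wt y)=1+\mathrm{LCS}(y_{2:},\wt y_{2:})$ when $y_1=\wt y_1$, or observe that swapping which copy of the repeated symbol survives does not change the edit count. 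Your approach is sound once this case is handled.
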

    \begin{proof}[Proof of Observation~\ref{obs:edit-dist-basic}]
        We proceed via induction on the sum $\len{y} + \len{\wt{y}}$. The base case 
        where $y$ and $\wt{y}$ are both empty is trivial.
        Now suppose the claim holds all strings whose lengths sum to at most 
        $\len{y} + \len{\wt{y}} - 1$.
        Recalling the definition of $d_\txt{edit}$ (Definition~\ref{defn:edit-dist}),
        there are three cases.

        First, suppose $d_\txt{edit}(y,\wt{y}) = d_\txt{edit}(y_{2:},\wt{y}_{2:})$. Then by induction 
        there exists a sequence of $d_\txt{edit}(y,\wt{y})$ insertion and/or deletion operations to obtain 
        $\wt{y}_{2:}$ from $y_{2:}$. Because $y_{1} = \wt{y}_{1}$, the same sequence suffices to obtain $\wt{y}$
        from $y$ and thus the claim follows.

        Second, suppose $d_\txt{edit}(y,\wt{y}) = 1 + d_\txt{edit}(y_{2:},\wt{y})$. Again by induction,
        there exists a sequence of $d_\txt{edit}(y,\wt{y}) - 1$ insertion and/or deletion operations to obtain 
        $\wt{y}$ from $y_{2:}$.
        It follows immediately (i.e., by first deleting $y_1$) there exists a sequence of $d_\txt{edit}(y,\wt{y})$ such operations 
        to obtain $\wt{y}$ from $y$, and so the claim holds.

        The third case follows by symmetry with the second case.
    \end{proof}
    \begin{observation}\label{obs:edit-dist-0}
        Let $y,\wt{y} \in \mc{V}^*$. Then for any $\tau < \len{y}$, we have
        \begin{align*}
            d_\txt{edit}(y,\wt{y}) \geq 
            \min_{y' \in \mc{S}(\wt{y})} d_\txt{edit}(y_{:\tau},y') + \min_{y' \in \mc{S}(\wt{y})} d_\txt{edit}(y_{\tau+1:},y').
        \end{align*}
    \end{observation}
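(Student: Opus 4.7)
The plan is to use Observation~\ref{obs:edit-dist-basic} to identify $d_\txt{edit}(y, \wt{y})$ with the minimum length $L$ of a sequence $S$ of insertion and deletion operations transforming $y$ into $\wt{y}$, and then to split any such optimal $S$ according to the cut point $\tau$. Concretely, I would fix optimal $S$ and track the ancestry of tokens: each token in the final string $\wt{y}$ is either inherited from a token of $y_{:\tau}$, inherited from a token of $y_{\tau+1:}$, or an insertion (which I assign to the half into which it was inserted, adopting a fixed convention for insertions that land exactly at the boundary). The tokens descending from or inserted into $y_{:\tau}$ form a prefix $y'_1$ of $\wt{y}$, and the rest form the suffix $y'_2$, so that $\wt{y} = y'_1 y'_2$ and both $y'_1, y'_2 \in \mc{S}(\wt{y})$.

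Given this decomposition, the operations of $S$ partition into subsequences $S_1$ and $S_2$ acting solely on the first and second half of $y$ respectively, with $S_i$ transforming the $i$th half into $y'_i$. Applying Observation~\ref{obs:edit-dist-basic} to each half yields $d_\txt{edit}(y_{:\tau}, y'_1) \leq |S_1|$ and $d_\txt{edit}(y_{\tau+1:}, y'_2) \leq |S_2|$, and summing with $|S_1| + |S_2| = L$ gives
\begin{align*}
d_\txt{edit}(y, \wt{y}) \;\geq\; d_\txt{edit}(y_{:\tau}, y'_1) + d_\txt{edit}(y_{\tau+1:}, y'_2).
\end{align*}
The claim then follows, since each right-hand term can only decrease when we minimize over all $y' \in \mc{S}(\wt{y})$.

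The main obstacle is making rigorous the assertion that $S$ splits into two valid subsequences acting independently on the two halves. This requires fixing an unambiguous convention for boundary insertions and checking that operations within one half commute with those in the other, in the sense that restricting $S$ to either half produces a well-defined sequence whose intermediate position references remain valid on the corresponding substring of $y$. The bookkeeping is intuitive but fiddly, and it is the only nontrivial step of the argument; everything else is a direct application of Observation~\ref{obs:edit-dist-basic} and monotonicity of the minimum.
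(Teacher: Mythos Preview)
Your proposal is correct and follows essentially the same approach as the paper: both take an optimal edit sequence via Observation~\ref{obs:edit-dist-basic}, partition its operations according to whether they act on $y_{:\tau}$ or $y_{\tau+1:}$, obtain a prefix--suffix decomposition $\wt{y} = \wt{y}_\txt{pre}\,\wt{y}_\txt{suf}$, and bound each half by the corresponding minimum over substrings of $\wt{y}$. If anything, you are more careful than the paper about the bookkeeping for boundary insertions and the well-definedness of the split, which the paper simply asserts.
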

    \begin{proof}[Proof of Observation~\ref{obs:edit-dist-0}]
        Observation~\ref{obs:edit-dist-basic} implies there exists a sequence of $d_\txt{edit}(y,\wt{y})$ insertion 
        and/or deletion operations to obtain $\wt{y}$ from $y$.
        We may partition this sequence of operations into sequences based respectively on whether they occur on $y_{:\tau}$ or $y_{\tau+1:}$.
        Let $\wt{y}_\txt{pre}$ be the result of performing the first sequence of operations on $y_{:\tau}$ and let 
        $\wt{y}_\txt{suf}$ be the result of performing the second sequence of operations on $y_{\tau+1:}$.
        Then $\wt{y}$ is the concatenation of $\wt{y}_\txt{pre}$ and $\wt{y}_\txt{suf}$, and so the claim follows from the fact that
        \begin{align*}
            d_\txt{edit}(y,\wt{y}) & = d_\txt{edit}(y_{:\tau},\wt{y}_\txt{pre}) + d_\txt{edit}(y_{\tau+1:},\wt{y}_\txt{suf}) \\
            & \geq \min_{y' \in \mc{S}(\wt{y})} d_\txt{edit}(y_{:\tau},y') + \min_{y' \in \mc{S}(\wt{y})} d_\txt{edit}(y_{\tau+1:},y').
        \end{align*}
    \end{proof}
    \begin{observation}\label{obs:edit-dist-1}
        Let $y,\wt{y} \in \mc{V}^*$ and $\xi \in \Xi^*$. Then 
        $d_\gamma(y,\xi) \leq \gamma d_\txt{edit}(y,\wt{y}) + d_\gamma(\wt{y},\xi)$.
    \end{observation}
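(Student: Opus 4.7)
I will prove the observation by induction on $d_\txt{edit}(y,\wt{y})$. The base case $d_\txt{edit}(y,\wt{y}) = 0$ is immediate: Observation~\ref{obs:edit-dist-basic} implies $y = \wt{y}$, so $d_\gamma(y,\xi) = d_\gamma(\wt{y},\xi)$ and the inequality holds with equality.

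For the inductive step with $d_\txt{edit}(y,\wt{y}) = k \ge 1$, I will invoke Observation~\ref{obs:edit-dist-basic} once more to exhibit a sequence of $k$ single-token insertion/deletion operations transforming $y$ into $\wt{y}$. Letting $y^\star$ denote the string obtained after applying just the first operation, we have $d_\txt{edit}(y,y^\star) = 1$ and $d_\txt{edit}(y^\star,\wt{y}) \le k-1$. By the inductive hypothesis applied to $(y^\star,\wt{y},\xi)$, we obtain $d_\gamma(y^\star,\xi) \le \gamma(k-1) + d_\gamma(\wt{y},\xi)$, so it suffices to establish the single-step inequality
\begin{equation*}
    d_\gamma(y,\xi) \;\le\; \gamma + d_\gamma(y^\star,\xi) \qquad \text{whenever $y$ and $y^\star$ differ by one insertion or deletion.}
\end{equation*}

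To establish the single-step inequality, I will run a secondary induction on $\len{y} + \len{y^\star} + \len{\xi}$. Writing the longer of $y,y^\star$ as $u \cdot a \cdot v$ and the shorter as $u \cdot v$ for some token $a$ and some prefix/suffix $u,v$, I will peel off the first character of $\xi$ (or of the strings if the edit occurs at the beginning) and use the recursion in Definition~\ref{defn:levenshtein}. When the edited token is not at position $1$ of either argument, one of the three branches of the recursion for $d_\gamma(y,\xi)$ can be paired with the same branch for $d_\gamma(y^\star,\xi)$ and the secondary induction closes the gap. When the edit occurs at position $1$, I will use the third branch of the recursion for $d_\gamma(y,\xi)$ (or its symmetric version): this introduces one unmatched-token operation contributing $\gamma + \min_{\xi'} d_0(a,\xi')$ to the cost. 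For the specific base cost $d_0$ from equation~\eqref{eqn:d-defn}, taking $u_1 = 1/2$ shows $\min_{\xi'} d_0(a,\xi') \le 0$ for every $a \in \mc{V}$, so this increment is bounded by $\gamma$ as required.

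The main obstacle is the case analysis for the single-step inequality: the insertion/deletion can occur at an arbitrary position within the string, and the relevant branch of the recursive definition depends on that position. Being careful to match branches of $d_\gamma(y,\xi)$ with the appropriate branches of $d_\gamma(y^\star,\xi)$---and to handle the ``boundary'' case where the edit touches the first character of the string---is the most delicate part. Once this bookkeeping is carried out, the bound $\gamma$ per edit aggregates via the outer induction to give the desired $\gamma \cdot d_\txt{edit}(y,\wt{y})$.
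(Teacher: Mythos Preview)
Your overall plan coincides with the paper's: reduce to a single edit via Observation~\ref{obs:edit-dist-basic}, then establish the one-step bound $d_\gamma(y,\xi)\le\gamma+d_\gamma(y^\star,\xi)$. Your secondary induction on $\len{y}+\len{y^\star}+\len{\xi}$ for the case where the edit is \emph{not} at position $1$ is correct and is arguably cleaner than the paper's device of unrolling the recursion until the first differing index.

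The gap is in the boundary case. When the edit is at position $1$ and $y$ is the \emph{shorter} string---say $y=v$ and $y^\star=a\cdot v$---there is no ``symmetric version'' of the third branch you can apply to $d_\gamma(y,\xi)$: the third branch peels a token off $y$, and here $y$ has no extra token to peel. Applying the third branch to $d_\gamma(y^\star,\xi)$ instead only yields $d_\gamma(y^\star,\xi)\le\gamma+d_\gamma(y,\xi)$, which is the wrong direction. To handle this insertion case you must case-split on which branch realizes the minimum for $d_\gamma(y^\star,\xi)$; in particular, if it is the third branch then $d_\gamma(y^\star,\xi)=d_\gamma(y,\xi)+\min_{\xi'}d_0(a,\xi')+\gamma$, and rearranging gives the desired bound only when $\min_{\xi'}d_0(a,\xi')\ge-2\gamma$, i.e.\ when $\gamma$ is bounded below. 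The paper spends most of its proof on exactly this insertion case and explicitly invokes the standing hypothesis $\gamma>1/2$ from Lemma~\ref{lemma:perm-test-indel}. Your proposal neither recognizes the asymmetry nor mentions any lower bound on $\gamma$, and without one the observation is actually false: take $y=\emptyset$, $\wt{y}=(a)$, $\xi=(\xi_1)$, and $\gamma=0$, so that $d_\gamma(y,\xi)=0$ while $d_\gamma(\wt{y},\xi)\le\min_{\xi'}d_0(a,\xi')=-1/4$.
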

    \begin{proof}[Proof of Observation~\ref{obs:edit-dist-1}]
        The case $d_\txt{edit}(y,\wt{y}) = 0$ is trivial as we then have $y = \wt{y}$.
        Now suppose $d_\txt{edit}(y,\wt{y}) = 1$, and let $i$ be the first index such that 
        $y_i \neq \wt{y}_i$.
        Then, unrolling the recursive definition of $d_\gamma(\wt{y}_{i:},\xi_{j:})$, there must exist 
        $c \in \R$ and an index $j$ such that both
        $d_\gamma(\wt{y},\xi) = c + d_\gamma(\wt{y}_{i:},\xi_{j:})$ and 
        $d_\gamma(y,\xi) \leq c + d_\gamma(y_{i:},\xi_{j:})$.
        Moreover, from the definition of edit distance, either ${y}_{i+1:} = \wt{y}_{i:}$
        or vice versa.

        We claim $d_\gamma(y_{i:},\xi_{j:}) \leq d_\gamma(\wt{y}_{i:},\xi_{j:}) + \gamma$.
        If $y_{i+1:} = \wt{y}_{i:}$, then the claim obtains as 
        \begin{align*}
            d_\gamma(y_{i:},\xi_{j:}) 
            & \leq d_\gamma(y_{i+1:},\xi_{j:}) + \min_{\xi' \in \Xi} d_0(y_i,\xi') + \gamma \\
            & \stackrel{(\star)}{\leq} d_\gamma(y_{i+1:},\xi_{j:}) + \gamma \\
            & = d_\gamma(\wt{y}_{i:},\xi_{j:}) + \gamma,
        \end{align*}
        with $(\star)$ following from the fact that $d_0(y_i,\xi') = 0$ for $\xi' = (1/2,\pi)$ irrespective of 
        $y_i$ and $\pi$.

        Otherwise, if $y_{i:} = \wt{y}_{i+1:}$, then from unrolling the recursive definition of $d_\gamma(\wt{y}_{i:},\xi_{j:})$
        there must exist some index $j' \geq j$ such that either
        \begin{align*}
            d_\gamma(\wt{y}_{i:},\xi_{j:}) = d_\gamma(\wt{y}_{i+1:},\xi_{j':}) + \gamma + \min_{\xi' \in \Xi} d_0(\wt{y}_i,\xi') + \sum_{j \leq \ell < j'} \gamma + \min_{y' \in \mc{V}} d_0(y',\xi_\ell)
        \end{align*}
        or 
        \begin{align*}
            d_\gamma(\wt{y}_{i:},\xi_{j:}) = d_\gamma(\wt{y}_{i+1:},\xi_{j'+1:}) + d_0(\wt{y}_i,\xi_{j'}) + \sum_{j \leq \ell < j'} \gamma + \min_{y' \in \mc{V}} d_0(y',\xi_\ell).
        \end{align*}
        In the first case, we have $\gamma + \min_{\xi' \in \Xi} d_0(\wt{y}_i,\xi') > 0$ since $\gamma > 1/2$ by assumption, and so the claim follows as 
        \begin{align*}
            d_\gamma(y_{i:},\xi_{j:}) & \leq d_\gamma(y_{i:},\xi_{j':}) + \sum_{j \leq \ell < j'} \gamma + \min_{y' \in \mc{V}} d_0(y',\xi_\ell) \\
            & = d_\gamma(\wt{y}_{i+1},\xi_{j':}) + \sum_{j \leq \ell < j'} \gamma + \min_{y' \in \mc{V}} d_0(y',\xi_\ell) \\
            & < d_\gamma(\wt{y}_{i:},\xi_{j:}).
        \end{align*}
        In the second case, we have $d_0(\wt{y}_j)$ the claim follows as 
        \begin{align*}
            d_\gamma(y_{i:},\xi_{j:}) & \leq d_\gamma(y_{i:},\xi_{j'+1:}) + \sum_{j \leq \ell < j'+1} \gamma + \min_{y' \in \mc{V}} d_0(y',\xi_\ell) \\
            & = d_\gamma(\wt{y}_{i+1},\xi_{j'+1:}) + \sum_{j \leq \ell < j'+1} \gamma + \min_{y' \in \mc{V}} d_0(y',\xi_\ell) \\
            & \leq d_\gamma(\wt{y}_{i:},\xi_{j:}) + \gamma.
        \end{align*}
        
        Thus, assuming $d_\txt{edit}(y,\wt{y}) \leq 1$, we have shown $d_\gamma(y_{i:},\xi_{j:}) \leq d_\gamma(\wt{y}_{i:},\xi_{j:}) + \gamma$,
        from which it follows that $d_\gamma(y,\xi) \leq d_\gamma(\wt{y},\xi) + \gamma$.
        The general result follows immediately by applying Observation~\ref{obs:edit-dist-basic} and 
        summing the bound for a single edit over the (smallest) sequence of 
        edits to obtain $\wt{y}$ from $y$. 
    \end{proof}
Proceeding with the main proof, define for convenience the quantity 
\begin{align*}
    \est{\alpha}_\tau \defeq \frac{1}{k}\sum_{i=1}^k p(Y_{\tau+i} \mid Y_{:\tau+i-1}).
\end{align*}
Observe 
\begin{align}
    \alpha(Y) = \frac{k}{m}\sum_{\tau = 0}^{m/k-1} \est{\alpha}_{k \tau}, \label{eqn:pigeon-1}
\end{align}
while Observation~\ref{obs:edit-dist-0} together with our assumption that $d_\txt{edit}(Y,\wt{Y}) \leq \eps m$ implies 
\begin{align}
    \frac{k}{m}\sum_{\tau = 0}^{m/k-1} \min_{Y' \in \mc{S}(\wt{Y})} d_\txt{edit}(Y_{k\tau+1:k\tau+k},Y') \leq k \eps. \label{eqn:pigeon-2}
\end{align}
The displays~\eqref{eqn:pigeon-1} and \eqref{eqn:pigeon-2} together imply there exists an index 
$\tau$ and $Y' \in \mc{S}(\wt{Y})$ such that $\est{\alpha}_\tau - \frac{1}{k}\min_{Y' \in \mc{S}(\wt{Y})} d_\txt{edit}(Y_{\tau+1:\tau+k},Y') \geq \alpha(Y) - \eps$.
Reusing the same concentration argument as in the proof of Theorem~\ref{lemma:perm-test}, for $t \geq 0$ we have
\begin{align*}
    \P\left(d_0(Y_{\tau+1:\tau+k},\xi_{\tau+1:\tau+k}) \geq -k \left(C_0\est{\alpha}_\tau + t\right)\mid Y\right)
    & \leq \exp\left(-2 k t^2\right),
\end{align*}
and thus from Observation~\ref{obs:edit-dist-1} it follows that
\begin{align*}
    \P\left(d_\gamma(Y',\xi_{\tau+1:\tau+k}) \geq -k \left(C_0\alpha(Y) - \gamma \eps + t \right)\mid \wt{Y},Y\right) \leq \exp\left(-2 k t^2\right).
\end{align*}
Letting $t = (C_0 \alpha - \gamma \eps)/2$ and recalling the definition of the test statistic, we have
\begin{align}
    \P\left(\phi(\wt{Y},\xi) \geq -k \left(C_0\alpha(Y) - \gamma \eps\right)/2 \mid \wt{Y},Y\right) \leq \exp\left(-k (C_0\alpha(Y) - \gamma \eps)_{+}^2/2\right). \label{eqn:indel-main-1}
\end{align}

All that remains to bound the probability of $\phi(\wt{Y},\xi')$ exceeding the 
threshold from the above display.
To this end, define the set-valued map $\mc{N}_\beta(y) \defeq \{y' \ : \ d_\txt{edit}(y,y') \leq \beta/(4\gamma - 1)\}$.
Then we make the following observation.
\begin{observation}\label{obs:edit-dist-2}
    For any $y \in \mc{V}^*$ and $\xi \in \Xi^*$, there exists $y' \in \mc{N}_{\len{\xi}}(y)$ 
    such that 
    \begin{align*}
        d_\gamma(y,\xi) = \gamma \cdot d_\txt{edit}(y,y') + d_0(y',\xi).
    \end{align*}
\end{observation}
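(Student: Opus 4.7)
My plan is to read the witness $y'$ off from an optimal alignment realizing $d_\gamma(y,\xi)$. Unrolling the recursion in Definition~\ref{defn:levenshtein}, the minimum defining $d_\gamma(y,\xi)$ is attained by a sequence of moves, each drawn from the three cases---match (case 1), insert (case 2), or delete (case 3)---with counts $n_1, n_2, n_3$ satisfying $n_1 + n_2 = \len{\xi}$ and $n_1 + n_3 = \len{y}$. I construct $y' \in \mc{V}^{\len{\xi}}$ by walking through the alignment: on a case-1 move, append $y_i$ to $y'$; on a case-2 move at $\xi$-index $j$, append $\arg\min_{y'' \in \mc{V}} d_0(y'',\xi_j)$; on a case-3 move, drop $y_i$. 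Then $\len{y'}=\len{\xi}$ and $y'$ is obtained from $y$ via $n_2$ insertions and $n_3$ deletions, so $d_\txt{edit}(y,y') \leq n_2 + n_3$.

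To establish the equality $d_\gamma(y,\xi) = \gamma \cdot d_\txt{edit}(y,y') + d_0(y',\xi)$, I would sandwich the two sides. The upper direction follows from Observation~\ref{obs:edit-dist-1}: $d_\gamma(y,\xi) \leq \gamma \cdot d_\txt{edit}(y,y') + d_\gamma(y',\xi) \leq \gamma \cdot d_\txt{edit}(y,y') + d_0(y',\xi)$, the last step because $\len{y'}=\len{\xi}$ allows an all-case-1 alignment. For the lower direction, summing the per-move costs of the optimal alignment gives
\begin{align*}
    d_\gamma(y,\xi) = d_0(y',\xi) + \gamma(n_2+n_3) + \sum_{\text{case-3 moves}}\min_{\xi' \in \Xi} d_0(y_i,\xi'),
\end{align*}
where the slack sum is nonpositive since $d_0(y_i,(1/2,\pi)) = 0$ for any $\pi$ (as already used in the proof of Observation~\ref{obs:edit-dist-1}). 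Chaining the two inequalities and invoking optimality of the Levenshtein alignment then forces both the slack to vanish and $d_\txt{edit}(y,y') = n_2 + n_3$, delivering the equality.

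For membership $y' \in \mc{N}_{\len{\xi}}(y)$, i.e., the edit-distance cap $d_\txt{edit}(y,y') \leq \len{\xi}/(4\gamma-1)$, I would bound $n_2+n_3$ by comparing $d_\gamma(y,\xi)$ against a canonical ``match-then-pad'' alternative alignment (matching $\min(\len{y},\len{\xi})$ positions and padding the rest with insertions or deletions). Its cost is controlled via the per-token range $d_0 \in [-1/4,1/4]$, and combining with the equality and the standing hypothesis $\gamma > 1/2$ inherited from Lemma~\ref{lemma:perm-test-indel} delivers the desired linear bound.

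I expect the main obstacle to be the lower direction of the equality: the per-move decomposition only yields $d_\gamma(y,\xi) \leq d_0(y',\xi) + \gamma(n_2+n_3)$ directly, and one must argue from optimality of the Levenshtein alignment that there is no residual slack between $d_\txt{edit}(y,y')$ and $n_2+n_3$ nor in the $\sum\min_{\xi'}d_0$ term---otherwise some alternative alignment witnessed by a different matching would contradict the minimality of the chosen one.
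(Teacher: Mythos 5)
Your construction of $y'$ by walking the optimal alignment is the right object, and your overall scheme (unroll the Levenshtein recursion into a global alignment, read off $y'$, then sandwich) is morally the iterative counterpart of the paper's own proof, which proceeds by induction on $\len{y}+\len{\xi}$ and builds $y'$ one recursion step at a time. So the two proofs are analyzing the same structure; you trade the inductive bookkeeping for an explicit move-count accounting.

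The obstacle you flag in your last paragraph, however, is not a loose end that optimality can close---it is a genuine obstruction to the equality. In the per-move identity
\begin{align*}
    d_\gamma(y,\xi) = d_0(y',\xi) + \gamma(n_2+n_3) + \sum_{\text{case-3 moves}}\min_{\xi' \in \Xi} d_0(y_i,\xi'),
\end{align*}
each summand in the slack term equals $-1/4$ \emph{identically}: with $d_0(y_i,(u,\pi)) = -(u-1/2)(\eta(\pi(y_i))-1/2)$, one may always pick $u\in\{0,1\}$ and a permutation sending $y_i$ to an extreme index, so $\min_{\xi'} d_0(y_i,\xi') = -1/4$ regardless of $y_i$ and regardless of which alignment you chose. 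Hence the slack is $-n_3/4$, and it does not vanish for any alignment with $n_3>0$. Your target equality would then force $\gamma\, d_\txt{edit}(y,y') = \gamma(n_2+n_3) - n_3/4$. But $d_\txt{edit}(y,y')$ is an integer and $d_\txt{edit}(y,y') \le n_2+n_3$, so the left side is at most $\gamma(n_2+n_3)$ and lies in $\gamma\mathbb{Z}$, while the right side is strictly smaller than $\gamma(n_2+n_3)$ and generically not in $\gamma\mathbb{Z}$. No appeal to minimality of the Levenshtein alignment can reconcile this, because the $-1/4$ per deletion is a structural constant, not a slack created by a suboptimal matching.

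The honest conclusion is that the exact equality in the observation is not what your construction delivers: you get a one-sided bound of the form $d_\gamma(y,\xi) \ge \gamma\, d_\txt{edit}(y,y') + d_0(y',\xi) - n_3/4$ (together with the reverse inequality from Observation~\ref{obs:edit-dist-1}). That weaker statement is in fact all that the downstream argument in Lemma~\ref{lemma:perm-test-indel} uses, since there one only needs to lower-bound $d_\gamma(\wt{Y},\xi')$ by the minimum of $d_0(y',\xi')$ over a polynomially-sized neighborhood. I would recommend you restate the observation as an inequality with the explicit $n_3/4$ correction and propagate that through the counting argument, rather than trying to rescue the equality. Your plan for the neighborhood bound (comparing against a match-then-pad alignment and using $d_0\in[-1/4,1/4]$ together with $\gamma>1/2$) is sound in spirit, but you should check the constants: that comparison gives $(\gamma-1/4)(n_2+n_3)\le k/2$, i.e.\ $n_2+n_3 \le 2k/(4\gamma-1)$, which is a factor of two looser than the $\len{\xi}/(4\gamma-1)$ that $\mc{N}_{\len{\xi}}$ demands, so you will need either a sharper comparison or to carry the larger radius through the union bound.
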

\begin{proof}
    We proceed via induction. The base case where $y$ and $\xi$ both have 
    length $1$ follows trivially by taking $y' = y$; in particular,
    $\gamma > 1/2$ implies $d(y,\xi) \leq \gamma + \min_{y'} d(y',\xi)$ and
    likewise $d(y,\xi) \leq \gamma + \min_{\xi'} d(y,\xi')$.
    Now suppose the result holds so long as $\len{y} + \len{\xi} \leq n-1$.
    We claim that the result must then also hold if the lengths sum to at most $n$.

    We prove this inductive claim by considering three exhaustive cases.
    First, suppose that $d_\gamma(y,\xi) = d_\gamma(y_{2:},\xi_{2:}) + d(y_1,\xi_1)$.
    By our induction hypothesis, there exists
    $\hat{y} \in \mc{N}_{\len{\xi}-1}(y_{2:})$ such that 
    $d_\gamma(y_{2:},\xi_{2:}) = \gamma \cdot d_\txt{edit}(y_{2:},\hat{y}) + d(\hat{y},\xi_{2:})$.
    The desired result then obtains with $y'$ as the concatenation of
    $y_1$ and $\hat{y}$.
    Second, suppose $d_\gamma(y,\xi) = d_\gamma(y,\xi_{2:}) + \min_{\xi' \in \Xi} d(y_1,\xi') + \gamma$.
    By our induction hypothesis, there exists $\hat{y} \in \mc{N}_{\len{\xi}=1}(y)$
    such that $d_\gamma(y_{2:},\xi) = \gamma \cdot d_\txt{edit}(y_{2:},\hat{y}) + d(\hat{y},\xi_{2:})$.
    The result obtains with $y' = \hat{y}$.
    Finally, suppose $d_\gamma(y,\xi) = d_\gamma(y_{2:},\xi) + d(y'',\xi_1) + \gamma$ for some $y'' \in \mc{V}$.
    By our induction hypothesis, there exists $\hat{y} \in \mc{N}_{\len{\xi}-1}(y)$
    such that $d_\gamma(y_{2:},\xi) = \gamma \cdot d_\txt{edit}(y_{2:},\hat{y}) + d(\hat{y},\xi)$.
    The result then obtains by concatenating $y''$ with $\est{y}$.
\end{proof}
Let $\mc{I}_j \defeq \{(j+i)\%n\}_{i=1}^k$.
For any $0 \leq i \leq \len{\wt{Y}} - k$ and $j \in [n]$,
Observations~\ref{obs:edit-dist-1} and~\ref{obs:edit-dist-2} together imply that 
\begin{align}
    d_\gamma(\wt{Y}_{i+1:i+k},\xi_{\mc{I}_j}') 
    & = \min_{y \in \mc{N}_k(\wt{Y}_{i+1:i+k})} \gamma \cdot d_\txt{edit}(\wt{Y}_{i+1:i+k},y) + d_0(y,\xi_{\mc{I}_j}') \label{eqn:edit-min}\\
    & \stackrel{(\star)}{=} \min_{y \in \mc{N}_{k/4(\gamma-1)}(\wt{Y}_{i+1:i+k})} \gamma \cdot d_\txt{edit}(y,\wt{Y}_{i+1:i+k}) + d_0(y,\xi_{\mc{I}_j}'), \label{eqn:edit-reduce}
\end{align}
where ($\star$) follows from the fact that $d_\txt{edit}(\wt{Y}_{i+1:i+k},y) > k/4(\gamma-1)$ implies 
\begin{align*}
    \gamma \cdot d_\txt{edit}(\wt{Y}_{i+1:i+k},y) + d_0(y,\xi_{\mc{I}_j}') \geq k/4 > d_0(\wt{Y}_{i+1:i+k},\xi_{\mc{I}_j}'),
\end{align*}
and therefore the minimizer in equation~\eqref{eqn:edit-min} must be an element of $\mc{N}_{k/4(\gamma-1)}(\wt{Y}_{i+1:i+k})$.

By construction, $\mc{N}_\beta(y)$ consists of the set of strings obtainable from $y$ by a sequence of 
at most $\beta$ insertion and/or deletion operations.
Now define another set-valued map $\mc{N}_{\beta,-}(y)$ as the restriction of $\mc{N}_\beta(y)$
such that we may only insert a particular token into $y$ (which token is immaterial).
As the specific identity of each token we insert into $y$ can only influence the value of $d_\gamma$ by $\pm 1/2$,
for any $\beta$ it follows that 
\begin{align*}
    \min_{y \in \mc{N}_{\beta}(\wt{Y}_{i+1:i+k})} \gamma \cdot d_\txt{edit}(y,\wt{Y}_{i+1:i+k}) + d_0(y,\xi_{\mc{I}_j}')
    \geq \min_{y \in \mc{N}_{\beta,-}(\wt{Y})} d_0(y,\xi_{\mc{I}_j}'),
\end{align*}
and so, letting $\beta = k/4(\gamma-1)$, from equation~\eqref{eqn:edit-reduce} we have 
\begin{align*}
    d_\gamma(\wt{Y}_{i+1:i+k},\xi_{\mc{I}_j}') \geq \min_{y \in \mc{N}_{\beta,-}(\wt{Y}_{i+1:i+k})} d_0(y,\xi_{\mc{I}_j}')
\end{align*}
Let $\wt{Y}(i,\ell)$ denote the $\ell$-th element of $\mc{N}_{\beta,-}(\wt{Y}_{i+1:i+k})$ for some $\wt{Y}$-measurable indexing.
From the independence of $\wt{Y}$ and $\xi'$, we have $\Ep[d_0(\wt{Y}(i,\ell),\xi_{\mc{I}_j}) \mid \wt{Y}] = 0$
for any $\ell$ and $j$.
The cardinality of $\mc{N}_{\beta,-}(\wt{Y}_{i+1:i+k})$ is equal to the number of possible combinations of locations for $\beta$ 
insertion and/or deletion operations on $\wt{Y}$, of which there are at most $(k+\beta)^\beta \leq (2k)^\beta$.
Thus, applying the same concentration argument as in the proof of Theorem~\ref{lemma:perm-test}
and taking a union bound over all $i \leq m - k$, $j \leq n$ and $\ell \leq (2k)^\beta$, we have
\begin{align}
    \P(\phi(\wt{Y},\xi') \leq -\alpha(Y)/2 + \gamma \eps \mid \wt{Y},Y) \leq m n(2k)^{k/(4\gamma-1)}\exp(-k C_0^2 (\alpha(Y) - \gamma\eps)_+^2/2). \label{eqn:indel-main-2}
\end{align}
Combining the displays~\eqref{eqn:indel-main-1} and~\eqref{eqn:indel-main-2} via another union bound 
gives the desired result.
\end{proof}

\section{Analysis of exponential minimum sampling}\label{app:gumbel}
To prove the main theorems, we introduce the following supporting lemma.
The result is well known and we restate it here only for completeness.
\begin{lemma}\label{lemma:core-result-gumbel}
    Let $\mu \in \Delta([N])$ and $\xi \sim \txt{Unif}([0,1]^N)$. Then for any 
    $y \in [N]$ we have
    \begin{align*}
        \P(\decode(\xi,\mu) = y,-\log (\xi_y)/\mu(y) \geq t) = \mu(y) \exp(-t).
    \end{align*} 
\end{lemma}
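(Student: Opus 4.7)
The plan is to reduce the statement to a standard fact about independent exponential random variables. For each $i \in [N]$, let $E_i \defeq -\log(\xi_i)/\mu(i)$. Since $\xi_i \sim \txt{Unif}([0,1])$ independently across $i$, and since $-\log U \sim \txt{Exp}(1)$ for $U \sim \txt{Unif}([0,1])$, the variables $E_i$ are mutually independent with $E_i \sim \txt{Exp}(\mu(i))$. Rewriting the definition~\eqref{eqn:rho-defn-gumbel} of $\decode$, the event $\{\decode(\xi,\mu) = y\}$ is exactly $\{E_y = \min_i E_i\}$, i.e., $\{E_y \leq E_i \text{ for all } i \neq y\}$. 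So the quantity of interest is $\P(E_y \leq E_i \text{ for all } i \neq y,\ E_y \geq t)$.

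I would then compute this probability directly by conditioning on the value of $E_y$. The density of $E_y$ is $\mu(y)\exp(-\mu(y)s)$ on $s \geq 0$, and conditional on $E_y = s$ the independent event $\{E_i \geq s \text{ for all } i \neq y\}$ has probability $\prod_{i \neq y}\exp(-\mu(i)s) = \exp(-s(1-\mu(y)))$, using $\sum_i \mu(i) = 1$. Therefore
\begin{align*}
\P\bigl(\decode(\xi,\mu) = y,\ E_y \geq t\bigr)
&= \int_t^{\infty} \mu(y)\exp(-\mu(y)s)\,\exp\bigl(-s(1-\mu(y))\bigr)\,ds \\
&= \mu(y)\int_t^{\infty} \exp(-s)\,ds = \mu(y)\exp(-t),
\end{align*}
which is the desired identity.

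There is no real obstacle here; the only subtlety is ensuring the argmin is almost surely unique so that the event $\{\decode(\xi,\mu) = y\}$ coincides with the non-strict inequality event above, which follows since $E_i$ are continuous random variables and ties occur with probability zero (and tokens with $\mu(i) = 0$ may be excluded since their $E_i = +\infty$ almost surely, contributing nothing). Setting $t = 0$ also recovers $\P(\decode(\xi,\mu) = y) = \mu(y)$, giving an alternate route to Theorem~\ref{thm:stealth-gumbel} as a sanity check.
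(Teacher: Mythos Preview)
Your proposal is correct and follows essentially the same approach as the paper's proof: both transform $-\log(\xi_i)/\mu(i)$ into independent $\txt{Exp}(\mu(i))$ variables, condition on the value of $E_y$, multiply the survival probabilities of the remaining coordinates, and integrate using $\sum_i \mu(i)=1$. Your handling of the edge cases (argmin uniqueness via continuity, and $\mu(i)=0$ giving $E_i=+\infty$) is slightly more explicit than the paper's, which simply restricts the product to $i \in \txt{supp}(\mu)$ and assumes $\mu(y)>0$ up front.
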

\begin{proof}
    Suppose $\mu(y) > 0$ as otherwise the claim is trivial.
    Recalling $\xi_i \iid \txt{Unif}([0,1])$, for any $\lambda > 0$ we have
    $-\lambda \log \xi_i \iid \txt{Exp}(\lambda)$,
    i.e.,
    \begin{align*}
        \P(-\lambda \log \xi_i \geq t) = \P(\xi_i \leq \exp(-\lambda t)) = \exp(-\lambda t). 
    \end{align*}
    Thus, the claim follows as
    \begin{align*}
        & \P(\decode(\xi,\mu) = y,-\log (\xi_y)/\mu(y) \geq t) \\
        & = \P(y = \argmin_i -\log(\xi_i)/\mu(i),-\log (\xi_y)/\mu(y) \geq t) \\
        & \stackrel{(\star)}{=} \int_{u \geq t} \mu(y)\exp(-\mu(y) u) \cdot \Pi_{i \in \txt{supp}(\mu),i \neq y} \P(-\log (\xi_i)/\mu(i) > u) \\
        & = \int_{u \geq t} \mu(y)\exp(-\mu(y) u) \cdot \Pi_{i \in \txt{supp}(\mu),i \neq y} \exp(-\mu(i) u) \\
        & = \mu(y) \int_{u \geq t} \Pi_{i \in \txt{supp}(\mu)} \exp(-\mu(i) u) \\
        & = \mu(y) \int_{u \geq t} \exp(-u) \\
        & = \mu(y)\exp(-t),
    \end{align*}
    where in ($\star$) we use the fact that the density of $-\log (\xi_y)/\mu(y)$ at $u$ is $\mu(y) \exp(-\mu(y)u)$.
\end{proof}

\subsection{Proof of Theorem~\ref{thm:stealth-gumbel}}
\begin{proof}
    The result follows immediately from integrating the result of Lemma~\ref{lemma:core-result-gumbel}
    over $t \geq 0$.
\end{proof}

\subsection{Proof of Lemma~\ref{lemma:sound-gumbel}}
\begin{proof}
    Lemma~\ref{lemma:core-result-gumbel} implies 
    $-\log(\xi_i) / p(Y_i \mid Y_{:i-1}) \mid Y \sim \txt{Exp}(1)$,
    and thus $\Ep[-\log(\xi_i) \mid Y] = p(Y_i \mid Y_{:i-1})$.
    Meanwhile, as $\xi_i' \sim \txt{Unif}([0,1])$ independently of $Y$, 
    we have 
    \begin{align*}
        \P(- \log \xi_i' \geq t \mid Y) = \P(\xi_i' \leq \exp(-t)) = \exp(-t),
    \end{align*}
    implying $-\log(\xi_i') \mid Y \sim \txt{Exp}(1)$ and so $\Ep [-\log(\xi_i') \mid Y] = 1$.
    The result follows immediately, recalling $\alpha(Y_{i-1:i}) = 1 - p(Y_i \mid Y_{i-1})$
    by definition.
\end{proof}

\subsection{Proof of Lemma~\ref{lemma:perm-test-gumbel}}
We prove the following general result, from which Lemma~\ref{lemma:perm-test-gumbel} follows as a corollary.
\begin{lemma}
    Let $m,n \in \N$ with $n \geq m$, where $m$ is the generation length and $n$ is the watermark key length. 
    Define the decoder $\decode$ by equation~\eqref{eqn:rho-defn-gumbel}, alignment score $d$ by equation~\eqref{eqn:d-defn-gumbel},
    and $\phi$ by Algorithm~\ref{algorithm:test} with block size $k \leq m$.
    Let $\xi,\xi' \iid \txt{Unif}(\Xi^n)$ with $Y = \generate(\xi;n,p,\decode)$.
    Let $\wt{Y}$ be a substring of $Y$ of length at least $k$ that is conditionally independent of 
    $\xi$ and $\xi'$ given $Y$, i.e., $\wt{Y} = Y_{\tau+1:\tau+\ell}$ for $\ell \geq k$.
    Then for $\est{\alpha} \defeq 1 - \frac{1}{k}\sum_{i=\tau+1}^{\tau+k} p(Y_i \mid Y_{:i-1})$, almost surely
    \begin{align*}
        \P(\phi(\wt{Y},\xi') \leq \phi(\wt{Y},\xi) \mid \wt{Y},Y) \leq 2n\exp\left(-\min\{k\est{\alpha}^2/8,k\est{\alpha}/4\}\right).
    \end{align*}
\end{lemma}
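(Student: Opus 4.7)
The plan is to mirror the proof of Lemma~\ref{lemma:perm-test}, using Lemma~\ref{lemma:sound-gumbel} to compute the expected gap in alignment cost between the true key and a random shift, and then applying concentration to control the fluctuations. The key difference from the ITS case is that the summands in the cost $d$ from equation~\eqref{eqn:d-defn-gumbel} are unbounded, so I must use a sub-exponential (Bernstein-type) tail bound rather than Hoeffding.

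First, let $\wt{Y} = Y_{\tau+1:\tau+\ell}$ with $\ell \geq k$ and, for each $0 \leq i \leq \ell-k$ and $j \in [n]$, let $\mc{I}_{i,j} = \{(\tau+i+s)\%n\}_{s=1}^k$ and $\mc{J}_j = \{(j+s)\%n\}_{s=1}^k$. Summing Lemma~\ref{lemma:sound-gumbel} over the block starting at $\tau+1$ gives
\begin{align*}
\Ep[d(\wt{Y}_{1:k},\xi'_{\mc{J}_j}) - d(\wt{Y}_{1:k},\xi_{\tau+1:\tau+k}) \mid Y] = k\est{\alpha}.
\end{align*}
For the more general substring bound I would additionally average over blocks of $\wt{Y}$ as in the ITS proof, but the $\wt{Y}=Y$ case is already the essence.

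Second, I need to split the difference into the two sums $S' \defeq \sum_{s=1}^k -\log\xi'_{s,\wt{Y}_s}$ and $S \defeq \sum_{s=1}^k -\log\xi_{\tau+s,\wt{Y}_s}$, and concentrate each separately. By Lemma~\ref{lemma:core-result-gumbel} (together with the fact that $\xi'$ is independent of $Y$), conditional on $Y$ the random variables $-\log\xi'_{s,\wt{Y}_s}$ are i.i.d.\ $\txt{Exp}(1)$ while $-\log\xi_{\tau+s,\wt{Y}_s}/p(Y_{\tau+s}\mid Y_{:\tau+s-1})$ are independent $\txt{Exp}(1)$. A short calculation with the MGF of a centered $\txt{Exp}(1)$ variable (Taylor-expanding $-t-\log(1-t)$) shows that $Z-\Ep Z$ satisfies a sub-exponential MGF bound $\Ep[e^{t(Z-\Ep Z)}] \leq e^{t^2\nu^2/2}$ for $|t|\leq 1/b$ with constants that ultimately yield $1/8$ and $1/4$ in the exponent.

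Third, applying the standard Bernstein bound for sub-exponential sums to each tail at deviation $k\est{\alpha}/2$, one obtains
\begin{align*}
\P(S' \leq k - k\est{\alpha}/2 \mid Y) + \P(S \geq \textstyle\sum_s p(Y_{\tau+s}\mid Y_{:\tau+s-1}) + k\est{\alpha}/2 \mid Y) \leq 2\exp(-\min\{k\est{\alpha}^2/8, k\est{\alpha}/4\}),
\end{align*}
since these two events together cover $\{S \geq S'\}$. Finally, a union bound over the $n$ offsets $j \in [n]$ (and, in the general substring version, over the at most $m$ starting positions $i$, absorbed into the definition of $\phi$) gives the stated factor of $2n$.

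The main obstacle is the sub-exponential step: the unboundedness of $-\log\xi$ rules out the Hoeffding argument used for ITS, and the two sums $S$ and $S'$ have different distributions (uniform versus decoder-coupled), so a little care is needed to handle both tails with matching constants. Once the sub-exponential MGF bounds are in hand, the rest of the proof is a routine union bound and mirrors the structure of Lemmas~\ref{lemma:perm-test} and~\ref{lemma:sound-gumbel}.
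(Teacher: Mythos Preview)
Your proposal is correct and follows essentially the same approach as the paper: compute the expected gap via Lemma~\ref{lemma:sound-gumbel}, establish that the summands are sub-exponential (the paper formalizes this as an observation that $\txt{Exp}(1)$ is $(2,2)$-subexponential, exactly as you outline via the MGF), apply a Bernstein-type bound to each of the two tails at level $k\est{\alpha}/2$, and union-bound over the $n$ offsets. The only minor point you glossed over is the conditional independence of the summands $-\log\xi_{\tau+s,\wt{Y}_s}$ given $Y$, which the paper invokes explicitly to justify summing the sub-exponential bounds.
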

\begin{proof}
    Recall by definition
    \begin{align*}
        d(y,\xi) = -\sum_{i=1}^{\len{y}} \log \xi_{i,y_i}.
    \end{align*}
    Lemma~\ref{lemma:sound-gumbel} and the conditional independence of $\tau$ and $\xi$ given $Y$ 
    imply for any $j \in [n]$ that 
    \begin{align*}
        \Ep[d(\wt{Y},\xi_{(j+1:j+k)\% n}') \mid \wt{Y},Y] - \Ep[d(\wt{Y},\xi_{\tau+1:\tau+k}) \mid \wt{Y},Y] = k \est{\alpha}.
    \end{align*}
    
    From Lemma~\ref{lemma:core-result-gumbel}, we have $-\log \xi_{\tau+i,\wt{Y}_i} \mid \wt{Y},Y \sim \txt{Exp}(\gamma_i)$
    for some $\gamma_i \leq 1$ for all $i \in [m]$.
    Also, from the independence of $\wt{Y}$ and $\xi'$,
    we have $-\log \xi_{j,\wt{Y}_i}' \mid \wt{Y},Y \sim \txt{Exp}(1)$
    for all $i \in [m]$ and $j \in [n]$.
    The following observation thus implies
    $-\log \xi_{i,\wt{Y}_i} \mid \wt{Y},Y$ and $-\log \xi_{j,\wt{Y}_i}' \mid \wt{Y},Y$ 
    are both $(2,2)$-subexponential random variables.
    \begin{observation}\label{obs:gumbel-subexp}
        Let $X \sim \txt{Exp}(1)$. Then $X$ is a $(2,2)$ subexponential random variable.
    \end{observation}
    \begin{proof}[Proof of Observation~\ref{obs:gumbel-subexp}]
        For $t < 1/2$, we have
        \begin{align*}
            \Ep[e^{t (X-\Ep[X])}] & = \int_0^\infty e^{t(x-1)} e^{-x} \, dx \\
            & \stackrel{(a)}{=} \frac{e^{-t}}{1-t} \\
            & \stackrel{(b)}{\leq} (1 - t + t^2)(1 + t + 2t^2) \\
            & \stackrel{(c)}{\leq} (1 + 2t^2) \\
            & \leq e^{2t^2},
        \end{align*}
        where (a) follows from the fact that $t < 1$ 
        (otherwise, the integral would not be finite);
        (b) follows from Taylor expanding $e^{-t}$ and $1/(1-t)$
        and applying the fact that $t < 1/2$ to bound the higher-order terms;
        and (c) again follows from $t < 1/2$.
        The claim follows immediately.
    \end{proof}

    Thus, using the fact that $\xi_i$ is conditionally independent of $\xi_{-i}$ given $Y$,
    a standard Chernoff bound \cite[Proposition 2.9]{wainwright19} implies for each $j \in [n]$ that 
    \begin{align*}
        & \P\left(d(\wt{Y},\xi_{j+1:j+k}') \leq d(\wt{Y},\xi_{\tau+1:\tau+k})\mid \wt{Y},Y\right) \\
        & \leq \P\left(d(\wt{Y},\xi_{1:m}) - \Ep[d(\wt{Y},\xi_{1:m})] \geq k\est{\alpha}/2 \mid \wt{Y},Y\right) \\
        &  \qquad + \P\left(\Ep[d(\wt{Y},\xi_{j+1:j+m}')] - d(\wt{Y},\xi_{j+1:j+m}') \geq k \est{\alpha}/2 \mid \wt{Y},Y\right) \\
        & \leq 2\exp\left(-\min\{k\est{\alpha}^2/8,k\est{\alpha}/4\}\right).
    \end{align*}
    Recalling the definition of the test statistic $\phi$ via Algorithm~\ref{algorithm:test},
    the main claim then follows from taking a union bound over all $j \in [n]$.
\end{proof}
\section{Details of experiments}\label{app:experiments}
\subsection{Experimental protocol}\label{app:experiments-protocol}
In Experiments 1-6, for each watermark we first generate a sequence tokens, decode the tokens 
into text (i.e., a string) using the appropriate tokenizer for the language model, and then 
encode the text back into tokens before running $\detect$.
Each generation is coditioned on a prompt; we obtain the prompts by sampling documents from the news-like 
subset of the C4 dataset and truncating the last $m$ tokens.
We enforce a minimum prompt size of $50$ tokens in all experiments; we skip over any document that is not long enough.
The retokenization is not always equal to the original tokens generated by the model;\footnote{
  Byte-pair tokenizations of text (used by both the OPT and LLaMA) are not unique, due to the fact that they augment a base vocabulary (e.g., characters) 
  with extra tokens to represent common substrings.
}
in order to ensure $\detect$ always receives 
at least $m$ tokens, we pad its input with special pad tokens (specific to each model's tokenizer).
We also initially generate a number of buffer tokens beyond $m$, so in most cases the padding is unnecessary.
We set the number of buffer tokens to be $20$ in every experiment except for Experiment 5, where we set it to be 
$100$ in order to ensure that even after deleting tokens there are typically still at least $m$ tokens remaining.
We always truncate the number of tokens given to $\detect$ to be at most $m$, irrespective of the number of buffer tokens.

\subsection{Roundtrip translation}\label{app:experiments-roundtrip}
In Experiment 6, we perform round-trip translations from English to French and from English to Russian using the OPUS-MT collection of translation models \citep{tiedemann2020opus,tiedemann2022democratizing}. 
Specifically, we use the versions of these models hosted on the HuggingfaceHub,\footnote{https://huggingface.co/} associated with the identifiers:
\begin{itemize}
\item \texttt{Helsinki-NLP/opus-mt-tc-big-en-fr} - English to French,
\item \texttt{Helsinki-NLP/opus-mt-tc-big-fr-en} - French to English,
\item \texttt{Helsinki-NLP/opus-mt-en-ru} - English to Russian,
\item \texttt{Helsinki-NLP/opus-mt-ru-en} - Russian to English.
\end{itemize}

\subsection{Computing p-values}\label{app:experiments-pvals}
As we mention previously, to save computation we modify $\detect$ to use a fixed reference distribution to compute 
$p$-values.
For the sake of concreteness, we give the full pseudocode for the modified version of $\detect$ in Algorithm~\ref{algorithm:modified-detect};
in Experiments 1-6, we compute $p$-values using Algorithm~\ref{algorithm:reference} to construct the reference distribution using the news-like subset of
the C4 dataset as the text distribution.

\begin{algorithm}[h]
    \DontPrintSemicolon
    \SetKwInOut{Input}{Input}
    \SetKwInOut{Output}{Output}
    \SetKwInOut{Params}{Params}
    \SetKwInOut{Noise}{Noise}
    \caption{\label{algorithm:modified-detect}
        Watermarked text detection with fixed reference distribution}
    \Input{string $y \in \mc{V}^*$, seed sequence $\xi \in \Xi^n$}
    \Params{test statistic $\phi$; reference distribution $\{\phi_t\}_{t=1}^T$}
    \Output{p-value $\est{p} \in [0,1]$}
    $\est{p} \leftarrow \frac{1}{T} \sum_{t=1}^T \indic{\phi(y,\xi) < \phi_t}$\;
    \Return $\est{p}$\;
\end{algorithm}

\begin{algorithm}[h]
    \DontPrintSemicolon
    \SetKwInOut{Input}{Input}
    \SetKwInOut{Output}{Output}
    \SetKwInOut{Params}{Params}
    \SetKwInOut{Noise}{Noise}
    \caption{\label{algorithm:reference}
        Reference distribution construction}
    \Input{resample size $T \in \N$, text length $m \in \N$, watermark key sequence distribution $\nu \in \Delta(\Xi^n)$}
    \Params{test statistic $\phi$; text distribution $P$; minimum prompt length $m_0$}
    \Output{reference distribution $\{\phi_t\}_{t=1}^T \in \R^T$}
    $t \leftarrow 1$\;
    \While{$t \leq T$}{
        $Y \sim P$\;
        \If{$\len{Y} \leq m_0 + m$}{
            continue\;
        }
        $\xi^{(t)} \sim \nu$\;
        $\phi_t \leftarrow \phi(Y_{-m:},\xi^{(t)})$\;
        $t \leftarrow t + 1$\;
    }
    \Return $\{\phi_t\}_{t=1}^T$\;
\end{algorithm}

As a sanity check, we include histograms of the $p$-values we compute for nonwatermarked text for each method to verify that 
they are roughly uniformly distributed on the interval $[0,1]$ (setting $m = 50$ and sampling prompts 
from the news-like subset of the C4 dataset, as in Experiment 1).
In the cases of {\kgwone} and {\kgwtwo}, the distribution is not quite uniform 
due to the discrete nature of their test statistics.

\begin{figure}[H]
  \centering
  \begin{subfigure}[b]{0.49\textwidth}
    \includegraphics[width=\textwidth]{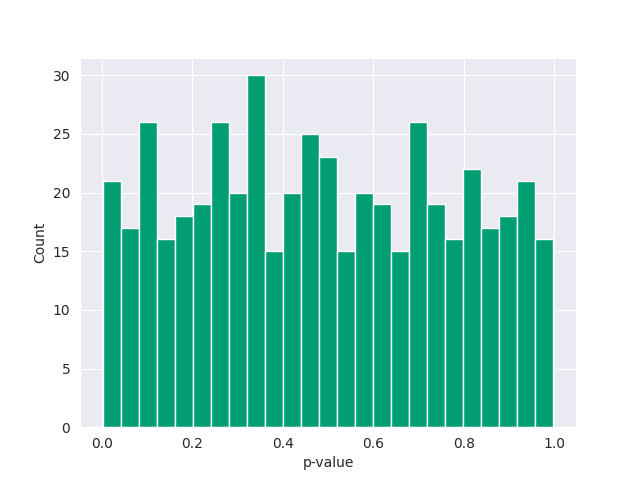}
    \caption{{OPT-1.3B}}
  \end{subfigure}
  \hfill
  \begin{subfigure}[b]{0.49\textwidth}
    \includegraphics[width=\textwidth]{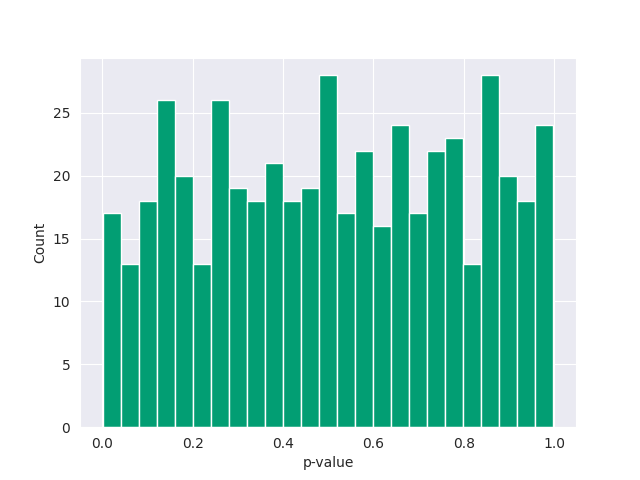}
    \caption{{LLaMA-7B}}
  \end{subfigure}
  \caption{Distribution of $p$-values for nonwatermarked text using {\its} detector.}
\end{figure}

\begin{figure}[H]
  \centering
  \begin{subfigure}[b]{0.49\textwidth}
    \includegraphics[width=\textwidth]{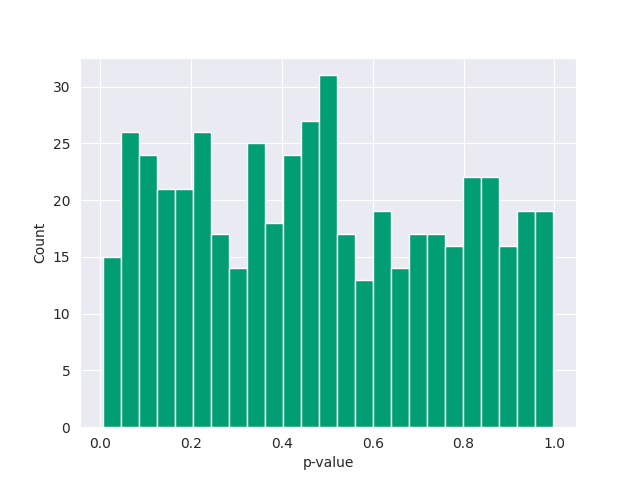}
    \caption{{OPT-1.3B}}
  \end{subfigure}
  \hfill
  \begin{subfigure}[b]{0.49\textwidth}
    \includegraphics[width=\textwidth]{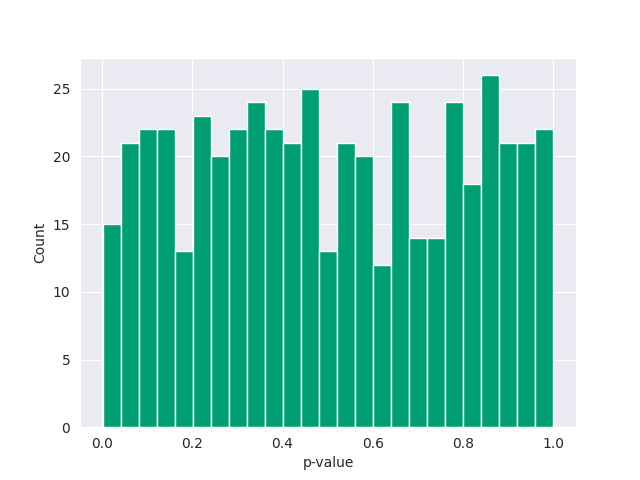}
    \caption{{LLaMA-7B}}
  \end{subfigure}
  \caption{Distribution of $p$-values for nonwatermarked text using {\itsedit} detector.}
\end{figure}

\begin{figure}[H]
  \centering
  \begin{subfigure}[b]{0.49\textwidth}
    \includegraphics[width=\textwidth]{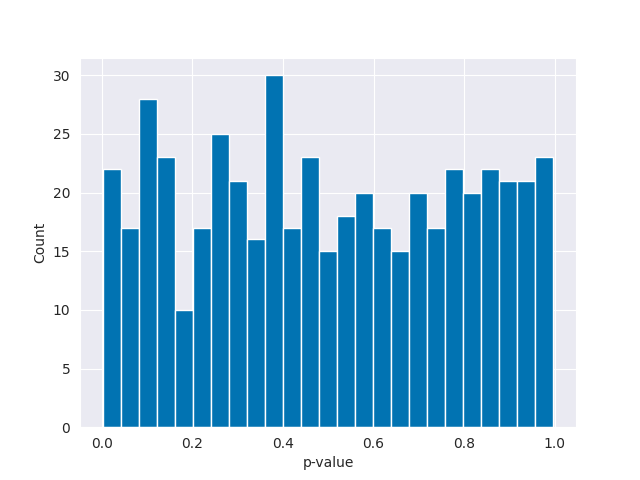}
    \caption{{OPT-1.3B}}
  \end{subfigure}
  \hfill
  \begin{subfigure}[b]{0.49\textwidth}
    \includegraphics[width=\textwidth]{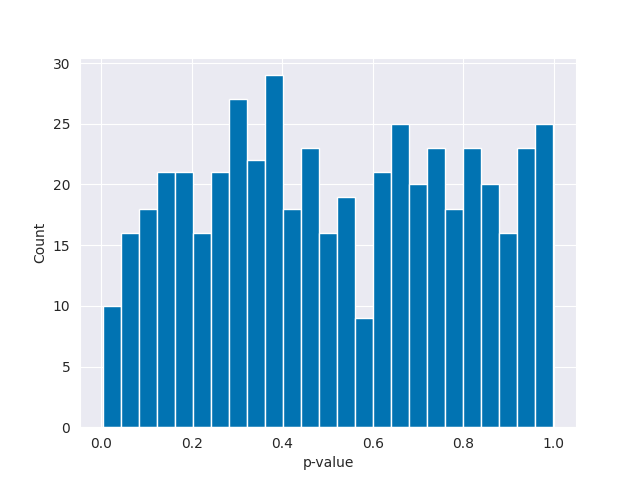}
    \caption{{LLaMA-7B}}
  \end{subfigure}
  \caption{Distribution of $p$-values for nonwatermarked text using {\gumb} detector.}
\end{figure}

\begin{figure}[H]
  \centering
  \begin{subfigure}[b]{0.49\textwidth}
    \includegraphics[width=\textwidth]{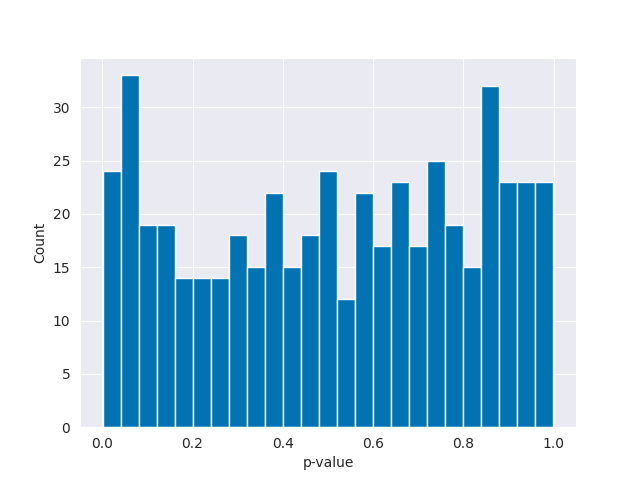}
    \caption{{OPT-1.3B}}
  \end{subfigure}
  \hfill
  \begin{subfigure}[b]{0.49\textwidth}
    \includegraphics[width=\textwidth]{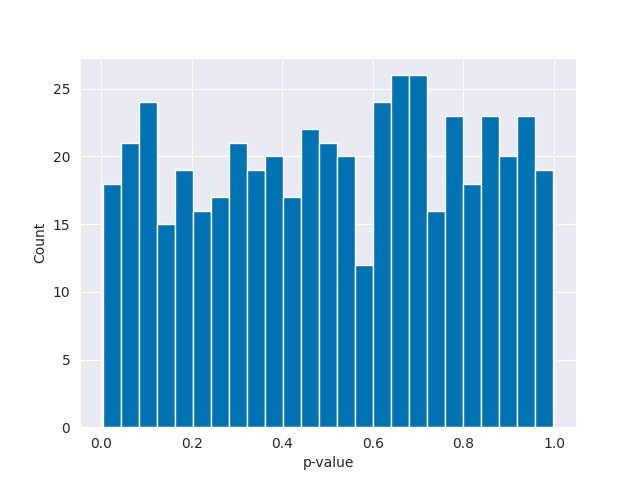}
    \caption{{LLaMA-7B}}
  \end{subfigure}
  \caption{Distribution of $p$-values for nonwatermarked text using {\gumbedit} detector.}
\end{figure}

\begin{figure}[H]
  \centering
  \begin{subfigure}[b]{0.49\textwidth}
    \includegraphics[width=\textwidth]{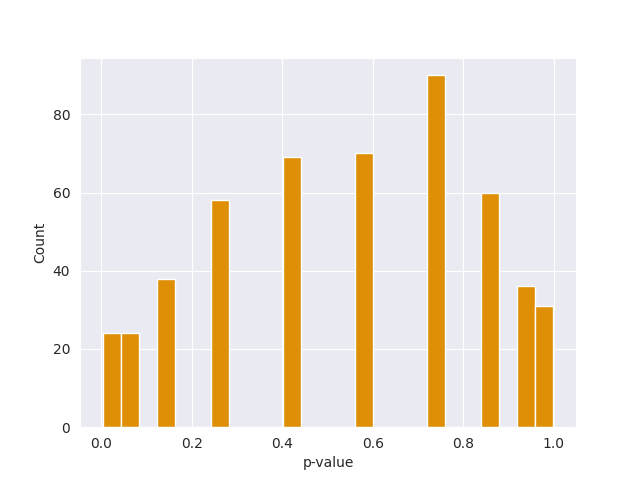}
    \caption{{OPT-1.3B}}
  \end{subfigure}
  \hfill
  \begin{subfigure}[b]{0.49\textwidth}
    \includegraphics[width=\textwidth]{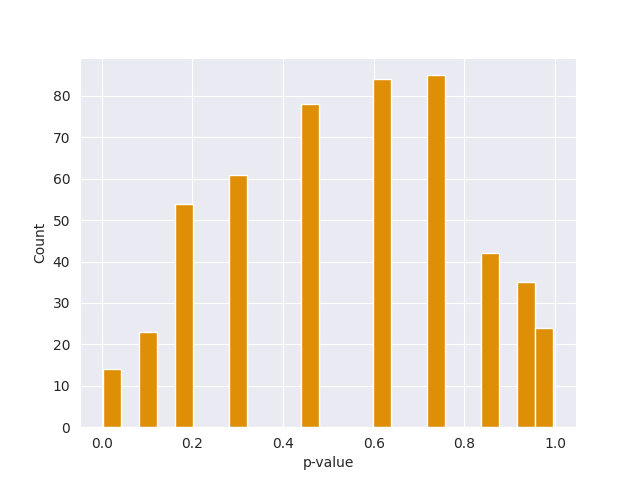}
    \caption{{LLaMA-7B}}
  \end{subfigure}
  \caption{Distribution of $p$-values for nonwatermarked text using {\kgwone} detector.}
\end{figure}

\begin{figure}[H]
  \centering
  \begin{subfigure}[b]{0.49\textwidth}
    \includegraphics[width=\textwidth]{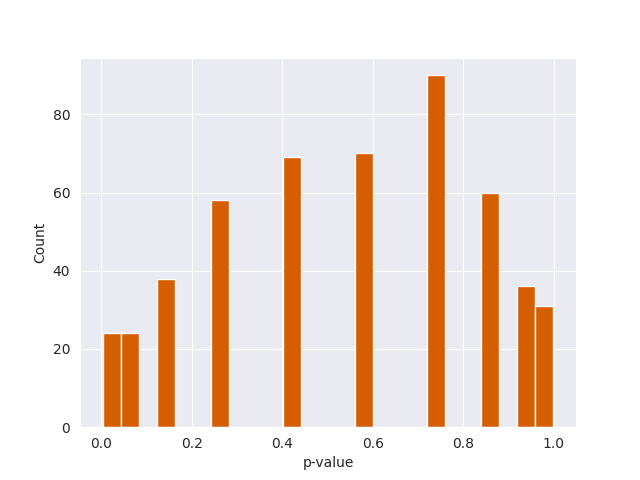}
    \caption{{OPT-1.3B}}
  \end{subfigure}
  \hfill
  \begin{subfigure}[b]{0.49\textwidth}
    \includegraphics[width=\textwidth]{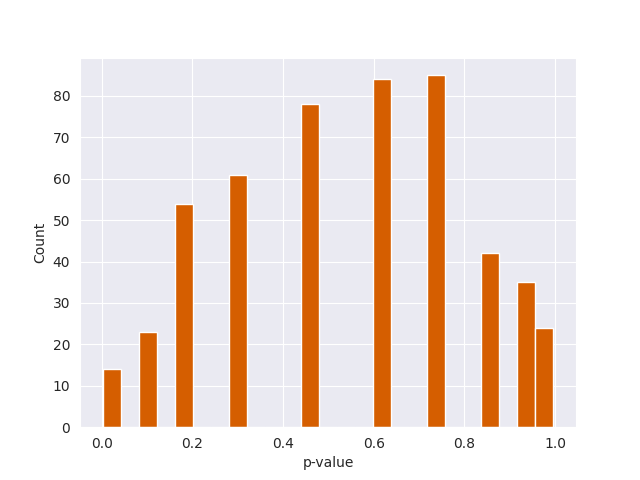}
    \caption{{LLaMA-7B}}
  \end{subfigure}
  \caption{Distribution of $p$-values for nonwatermarked text using {\kgwtwo} detector.}
\end{figure}

\subsection{Hyperparameter tuning}\label{app:experiments-hyperparam}
There are two hyperparameters involved in computing each of our watermark test statistics (i.e., Algorithm~\ref{algorithm:test}),
the block size $k$ and the alignment score $d$.
We do not tune the block size $k$ for our experiments, instead simply letting $k = m$, i.e., the text length, and the alignment score 
is also fixed for each of our watermarks, except for the hyperparameter $\gamma$ in both {\itsedit} and {\gumbedit}.
Smaller values of $\gamma$ (at least to a certain point) tend to make these watermarks more robust to insertion and deletion errors,
as Figure~\ref{fig:tuning-eps} illustrates,
but also hurts their statistical power for large values of $n$, i.e., the watermark key length, as Figure~\ref{fig:tuning-rng} illustrates.
We set $\gamma = 0.4$ for {\itsedit} and $\gamma = 0.0$ for {\gumbedit} to balance these two competing desiderata.

\begin{figure}[H]
  \centering
  \begin{subfigure}[b]{0.49\textwidth}
    \includegraphics[width=\textwidth]{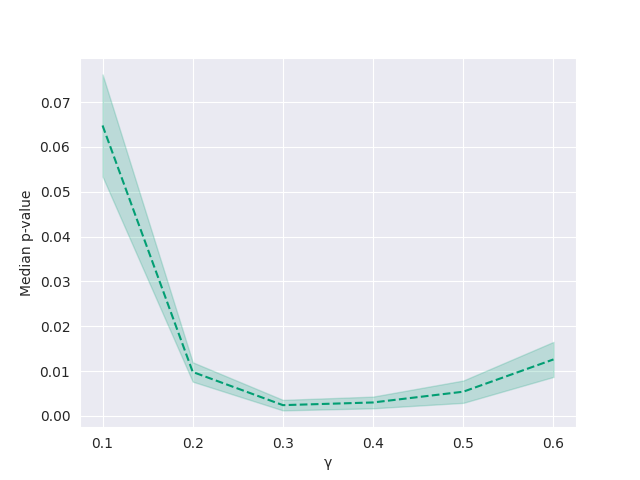}
    \caption{{\itsedit}}
  \end{subfigure}
  \hfill
  \begin{subfigure}[b]{0.49\textwidth}
    \includegraphics[width=\textwidth]{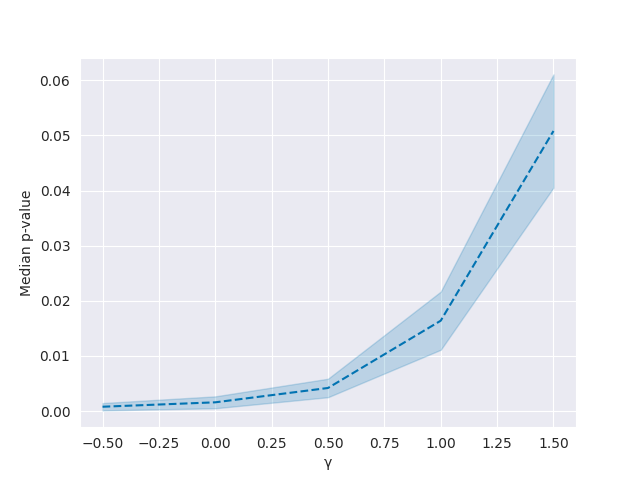}
    \caption{{\gumbedit}}
  \end{subfigure}

  \caption{Median $p$-value of watermarked text for varying $\gamma$, 
  with OPT-1.3B models and $m = 70$ for {\itsedit} and $m = 35$ for {\gumbedit},
  after corrupting the text with random insertions (fraction of inserted tokens is $0.1$
  for {\itsedit} and $0.6$ for {\gumbedit}).
  }
  \label{fig:tuning-eps}
\end{figure}

\begin{figure}[H]
  \centering
  \begin{subfigure}[b]{0.49\textwidth}
    \includegraphics[width=\textwidth]{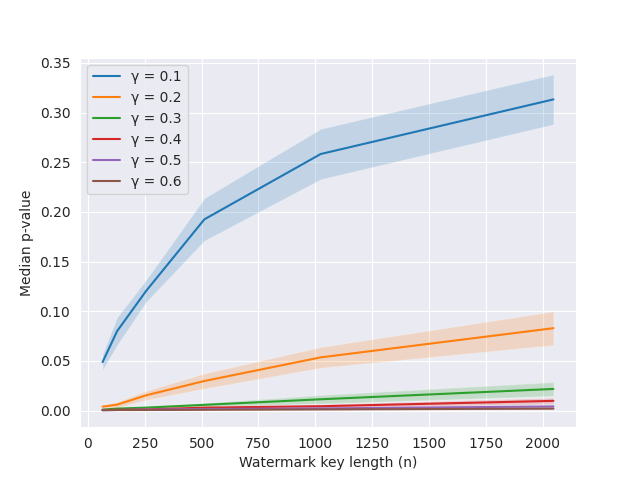}
    \caption{{\itsedit}}
  \end{subfigure}
  \hfill
  \begin{subfigure}[b]{0.49\textwidth}
    \includegraphics[width=\textwidth]{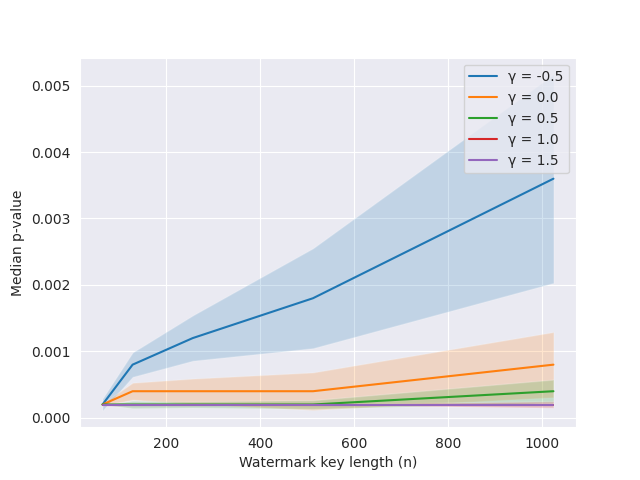}
    \caption{{\gumbedit}}
  \end{subfigure}
  \caption{Median $p$-value of watermarked text, varying $\gamma$ and $n$, 
  with OPT-1.3B model and $m = 40$ for {\itsedit} and $m = 10$ for {\gumbedit}.
  }
  \label{fig:tuning-rng}
\end{figure}

\subsection{Deferred results}\label{app:experiments-defer}
\subsubsection{Experiment 3}
\begin{figure}[H]
    \centering
    \begin{subfigure}[b]{0.49\textwidth}
      \includegraphics[width=\textwidth]{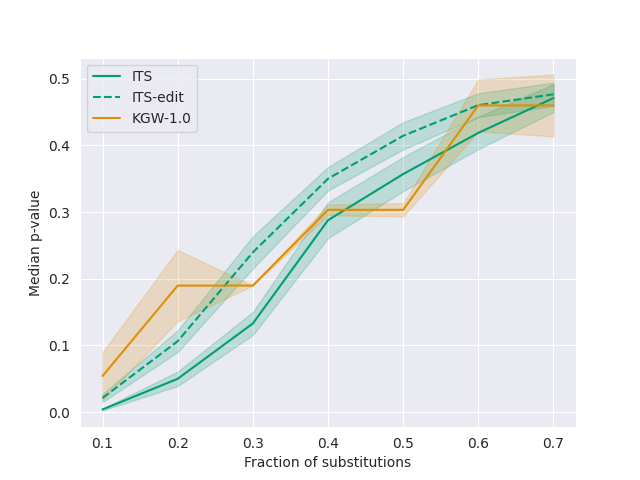}
      \caption{OPT-1.3B}
    \end{subfigure}
    \hfill
    \begin{subfigure}[b]{0.49\textwidth}
      \includegraphics[width=\textwidth]{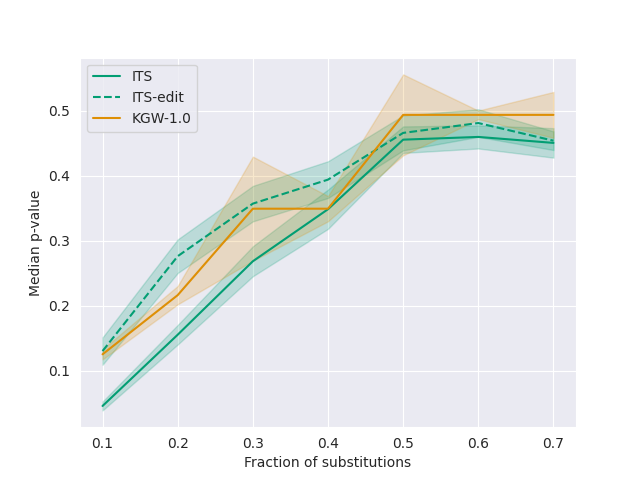}
      \caption{LLaMA 7B}
    \end{subfigure}
  
    \caption{Median $p$-value of watermarked text relative to the fraction of substitution errors, 
    for OPT-1.3B and LLaMA 7B models with $m = 35$.
    }
\end{figure}

\begin{figure}[H]
    \centering
    \begin{subfigure}[b]{0.49\textwidth}
      \includegraphics[width=\textwidth]{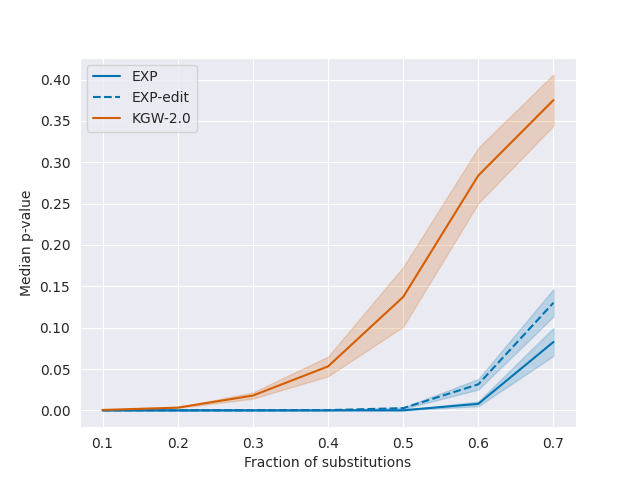}
      \caption{OPT-1.3B}
    \end{subfigure}
    \hfill
    \begin{subfigure}[b]{0.49\textwidth}
      \includegraphics[width=\textwidth]{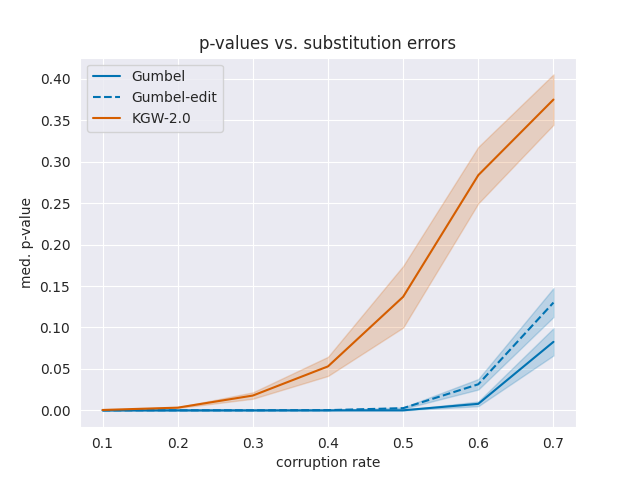}
      \caption{LLaMA 7B}
    \end{subfigure}
  
    \caption{Median $p$-value of watermarked text relative to the fraction of substitution errors, 
    for OPT-1.3B and LLaMA 7B models with $m = 70$.
    }
\end{figure}

\begin{figure}[H]
    \centering
    \begin{subfigure}[b]{0.49\textwidth}
      \includegraphics[width=\textwidth]{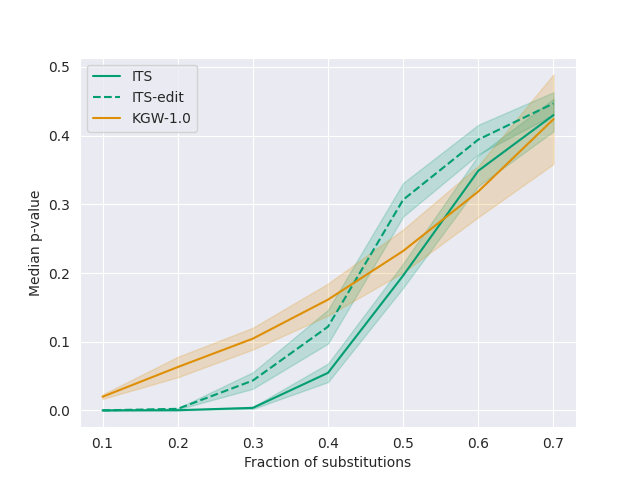}
      \label{fig:experiment-3-opt-app2}
      \caption{OPT-1.3B}
    \end{subfigure}
    \hfill
    \begin{subfigure}[b]{0.49\textwidth}
      \includegraphics[width=\textwidth]{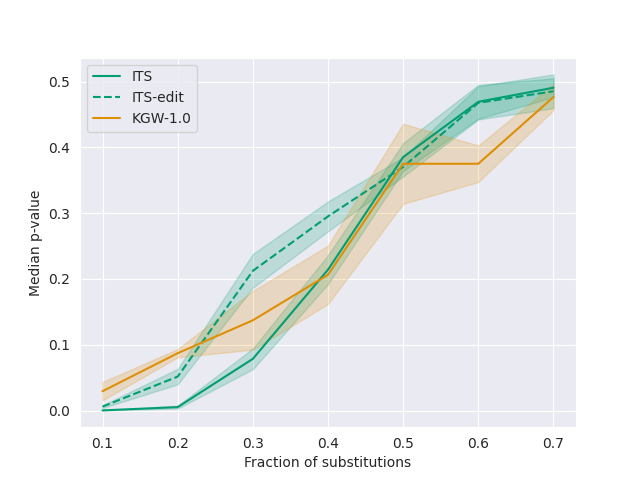}
      \label{fig:experiment-3-llama-app2}
      \caption{LLaMA 7B}
    \end{subfigure}
  
    \caption{Median $p$-value of watermarked text relative to the fraction of substitution errors, 
    for OPT-1.3B and LLaMA 7B models with $m = 70$.
    }
    \label{fig:experiment-3-app2}
\end{figure}

\subsubsection{Experiment 4}
\begin{figure}[H]
    \centering
    \begin{subfigure}[b]{0.49\textwidth}
      \includegraphics[width=\textwidth]{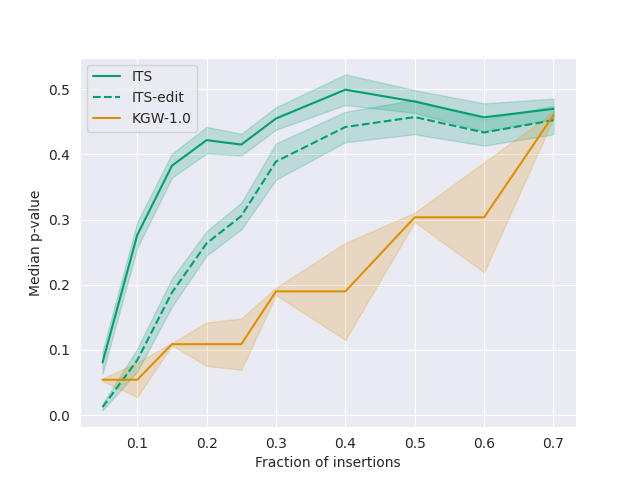}
      \caption{OPT-1.3B}
    \end{subfigure}
    \hfill
    \begin{subfigure}[b]{0.49\textwidth}
      \includegraphics[width=\textwidth]{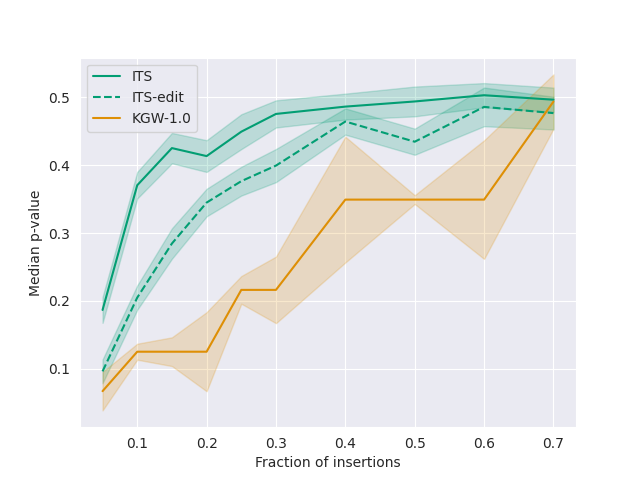}
      \caption{LLaMA 7B}
    \end{subfigure}
  
    \caption{Median $p$-value of watermarked text relative to the fraction of insertion errors, 
    for OPT-1.3B and LLaMA 7B models with $m = 35$.
    }
\end{figure}

\begin{figure}[H]
    \centering
    \begin{subfigure}[b]{0.49\textwidth}
      \includegraphics[width=\textwidth]{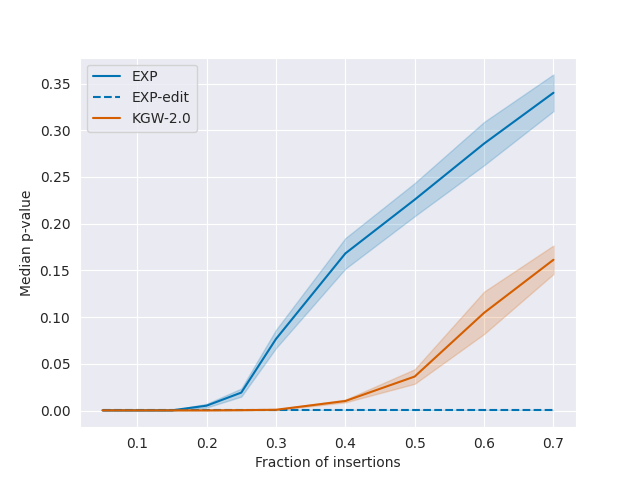}
      \caption{OPT-1.3B}
    \end{subfigure}
    \hfill
    \begin{subfigure}[b]{0.49\textwidth}
      \includegraphics[width=\textwidth]{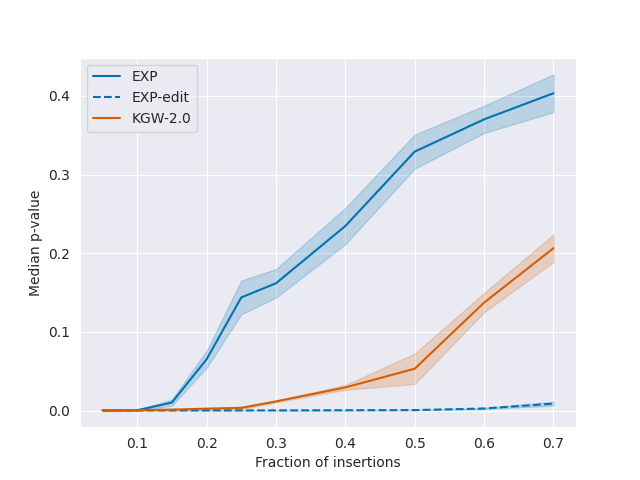}
      \caption{LLaMA 7B}
    \end{subfigure}
  
    \caption{Median $p$-value of watermarked text relative to the fraction of insertion errors, 
    for OPT-1.3B and LLaMA 7B models with $m = 70$.
    }
\end{figure}

\begin{figure}[H]
    \centering
    \begin{subfigure}[b]{0.49\textwidth}
      \includegraphics[width=\textwidth]{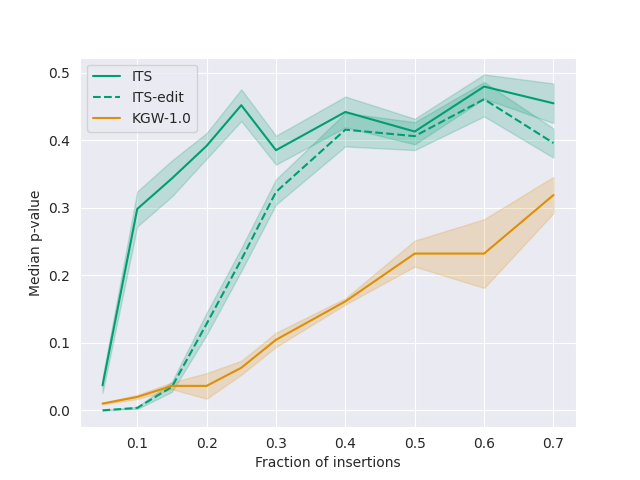}
      \caption{OPT-1.3B}
    \end{subfigure}
    \hfill
    \begin{subfigure}[b]{0.49\textwidth}
      \includegraphics[width=\textwidth]{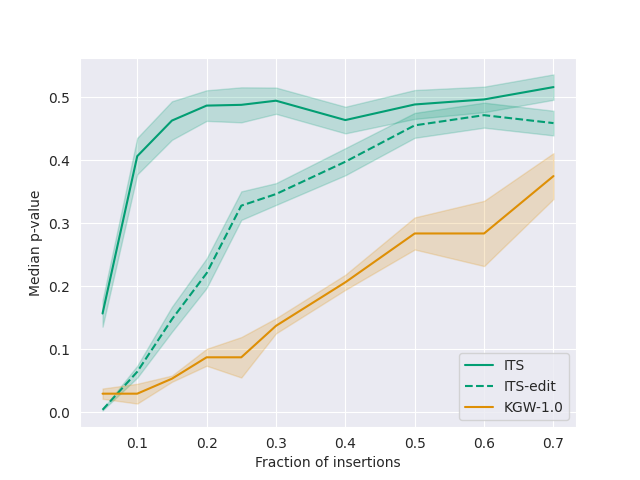}
      \caption{LLaMA 7B}
    \end{subfigure}
  
    \caption{Median $p$-value of watermarked text relative to the fraction of insertion errors, 
    for OPT-1.3B and LLaMA 7B models with $m = 70$.
    }
\end{figure}

\subsubsection{Experiment 5}
\begin{figure}[H]
    \centering
    \begin{subfigure}[b]{0.49\textwidth}
      \includegraphics[width=\textwidth]{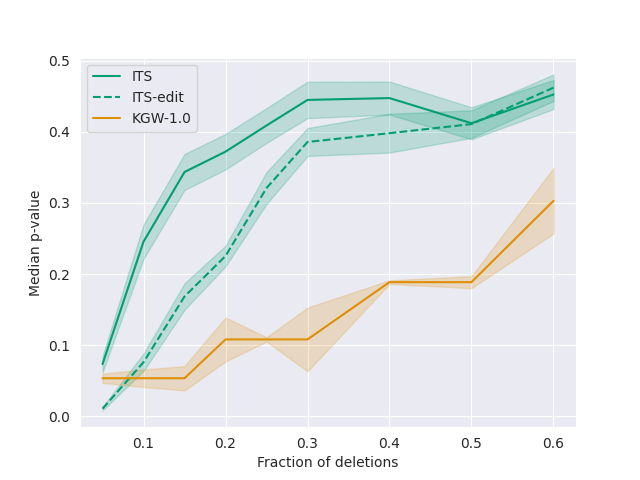}
      \caption{OPT-1.3B}
    \end{subfigure}
    \hfill
    \begin{subfigure}[b]{0.49\textwidth}
      \includegraphics[width=\textwidth]{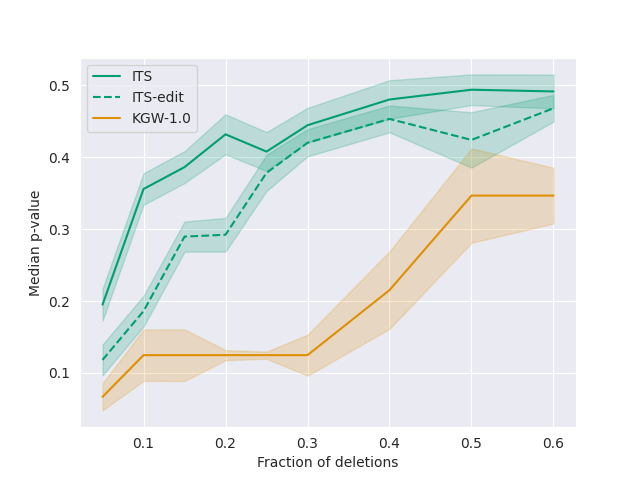}
      \caption{LLaMA 7B}
    \end{subfigure}
  
    \caption{Median $p$-value of watermarked text relative to the fraction of deletion errors, 
    for OPT-1.3B and LLaMA 7B models with $m = 35$.
    }
\end{figure}

\begin{figure}[H]
    \centering
    \begin{subfigure}[b]{0.49\textwidth}
      \includegraphics[width=\textwidth]{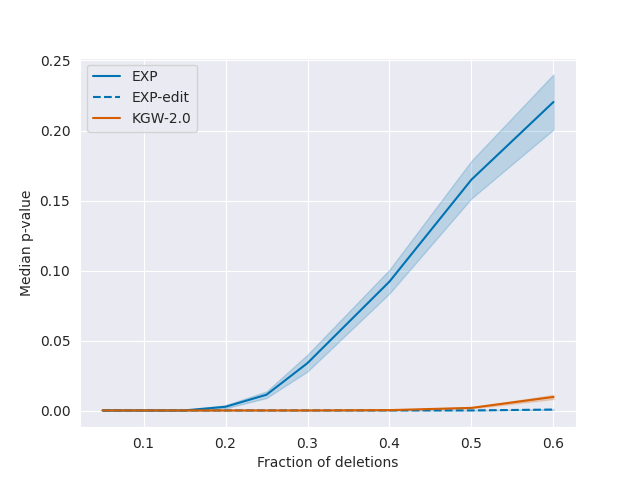}
      \caption{OPT-1.3B}
    \end{subfigure}
    \hfill
    \begin{subfigure}[b]{0.49\textwidth}
      \includegraphics[width=\textwidth]{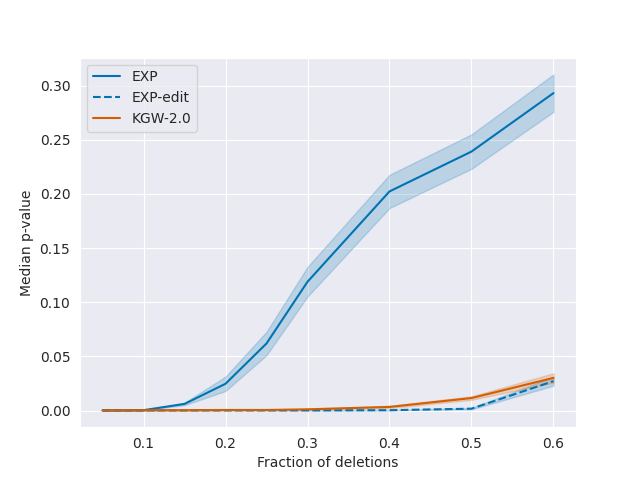}
      \caption{LLaMA 7B}
    \end{subfigure}
  
    \caption{Median $p$-value of watermarked text relative to the fraction of deletion errors, 
    for OPT-1.3B and LLaMA 7B models with $m = 70$.
    }
\end{figure}

\begin{figure}[H]
    \centering
    \begin{subfigure}[b]{0.49\textwidth}
      \includegraphics[width=\textwidth]{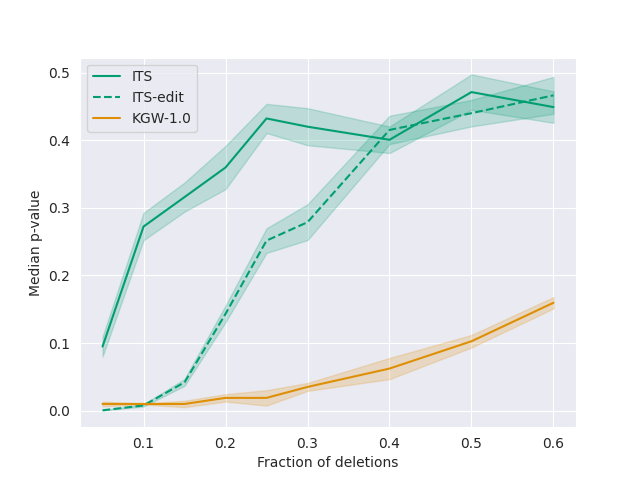}
      \caption{OPT-1.3B}
    \end{subfigure}
    \hfill
    \begin{subfigure}[b]{0.49\textwidth}
      \includegraphics[width=\textwidth]{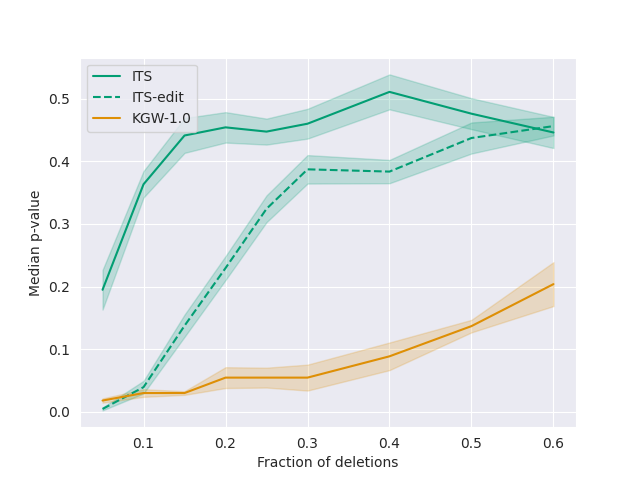}
      \caption{LLaMA 7B}
    \end{subfigure}
  
    \caption{Median $p$-value of watermarked text relative to the fraction of deletion errors, 
    for OPT-1.3B and LLaMA 7B models with $m = 70$.
    }
\end{figure}

\subsubsection{Experiment 6}

\begin{figure}[H]
    \centering
    \begin{subfigure}[b]{0.49\textwidth}
      \includegraphics[width=\textwidth]{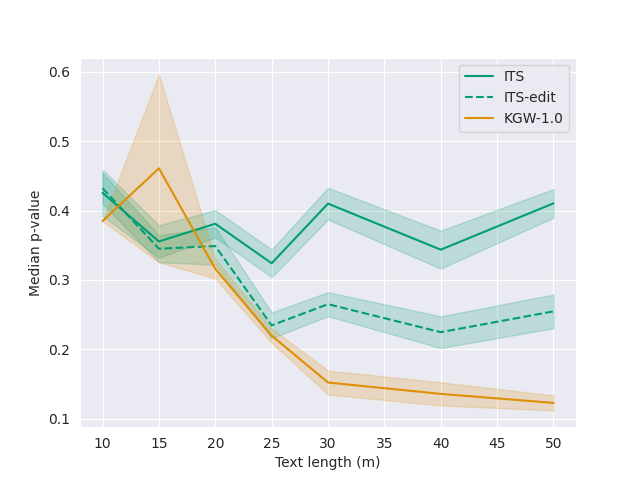}
      \caption{OPT-1.3B}
    \end{subfigure}
    \hfill
    \begin{subfigure}[b]{0.49\textwidth}
      \includegraphics[width=\textwidth]{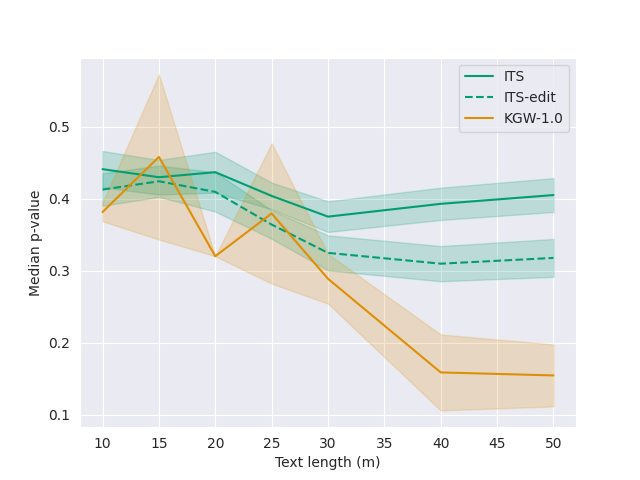}
      \caption{LLaMA 7B}
    \end{subfigure}
  
    \caption{Median $p$-value of watermarked text relative to the fraction of insertion errors, after roundtrip translation via French,
    for OPT-1.3B and LLaMA 7B models with $m = 35$.
    }
\end{figure}

\begin{figure}[H]
    \centering
    \begin{subfigure}[b]{0.49\textwidth}
      \includegraphics[width=\textwidth]{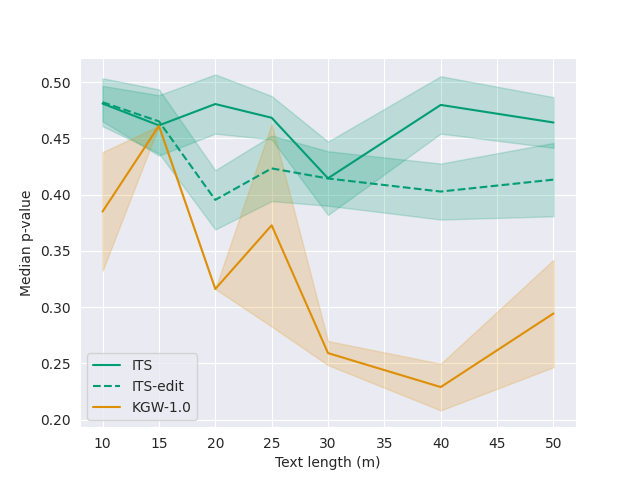}
      \caption{OPT-1.3B}
    \end{subfigure}
    \hfill
    \begin{subfigure}[b]{0.49\textwidth}
      \includegraphics[width=\textwidth]{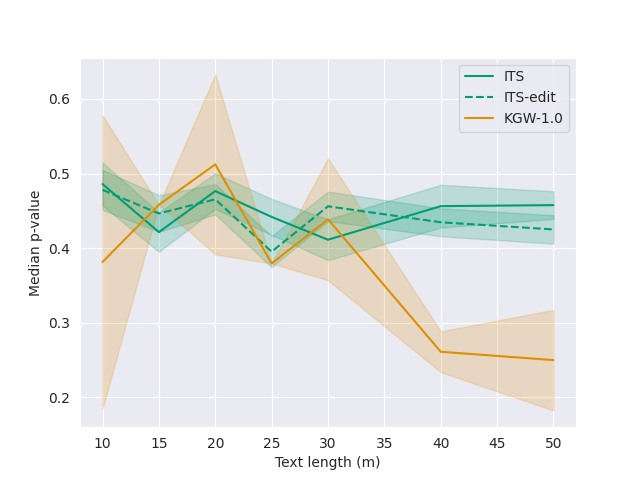}
      \caption{LLaMA 7B}
    \end{subfigure}
  
    \caption{Median $p$-value of watermarked text relative to the text length, after roundtrip translation via Russian, 
    for OPT-1.3B and LLaMA 7B models with $m = 35$.
    }
\end{figure}

\subsubsection{Instruction following case study}\label{app:experiments-instruct}
We give three examples of instructions for which hashing produces qualitatively worse responses 
than regular samples from the language model:
\begin{itemize}
  \item[1.] ``Give me 20 ideas for the title of a paper on watermarking language models.''
  \item[2.] ``Give me 20 ideas for startup names.''
  \item[3.] ``Give me a list of 20 movies.'' 
\end{itemize}
We format each of the instructions as described by \citet{taori23alpaca} before calling the model.

We compare samples from our {\gumb} watermark strategy,\footnote{
  Recall both {\gumb} and {\gumbedit} use the same $\generate$ method.
}
which are equivalent to regular samples from the language model,
to samples from {\kgwtwo} and the hashing-based version of {\gumb} we describe in the main text (i.e.,
the watermark of \citet{aaronson2023openai}), i.e., {\gumbhash}.
For both {\gumb} and {\kgwtwo}, we generate the samples using five different random seeds (the hash 
function in {\kgwtwo} is fixed in the implementation of \citet{kirchenbauer2023watermark}),
whereas in the case of {\gumbhash} we use five different hash functions (namely,
we let the previous $k$ tokens $\{y_i\}_{i=1}^k$ hash to $j + \sum_{i=1}^k y_i$ for $j \in \{0,\dots,4\}$).
We label each sample using the seed/hash we used to generate it.
We include samples from two versions of {\gumbhash}: one where we hash the previous tokens ($k = 1$) and another 
where we hash the previous four tokens ($k = 4$).
For {\kgwtwo}, we only hash the previous token since the public implementation of \citet{kirchenbauer2023watermark}
does not include the option to hash more tokens.

We find that {\gumbhash} with $k = 1$ often produces qualitatively worse responses that degenerate into 
repetition. With $k = 4$, the repetition is substantially less noticeable, though occasionally it still manifests.
In contrast, even when we only hash the previous token, the repetition of {\kgwtwo} is not nearly as noticeable as in {\gumbhash}.
We speculate this is due to stochasticity of {\kgwtwo} (i.e., {\kgwtwo} biases the distribution over the next token 
to a subset of tokens but still ultimately samples from this distribution randomly).
Of course, this stochasticity comes at a price: {\kgwtwo} was generally less powerful compared to the {\gumb} and {\gumbedit} strategies
in our other experiments.

We include sample sheets for all methods for the first instruction below. To avoid excessive clutter,
we defer the sample sheets for the remaining two instructions to our code release.

\begin{longtable}{|>{\centering\arraybackslash}m{0.1\textwidth}|>{\centering\arraybackslash}m{0.05\textwidth}|m{0.85\textwidth}|}
  \hline
  \textbf{Method} & \textbf{Seed} & \textbf{Response} \\
  \hline
  {\gumbhash} ($k = 1$) & 0 & 1. Watermarking Deep Learning Models: A Review
  2. Understanding and Evaluating Watermarking Techniques for Neural Networks
  3. Towards Unobtrusive and Invisible Watermarking for Neural Networks
  4. Customizable Watermarking for Neural Networks
  5. Reverse Engineering Watermarked Neural Networks
  6. Understanding and Exploiting Watermarking Attacks on Neural Networks
  7. Combatting Watermarking Attacks on Neural Networks
  8. Customizable Watermarking for Neural Networks
  9. Customizable Watermarking for Neural Networks
  10. Customizable Watermarking for Neural Networks
  11. Customizable Watermarking for Neural Networks
  12. Customizable Watermarking for Neural Networks
  13. Customizable Watermarking for Neural Networks
  14. Customizable Watermarking for Neural Networks
  15. Customizable Watermarking for Neural Networks
  16. Customizable Watermarking for Neural Networks
  17. Customizable Watermarking for Neural Networks
  18. Customizable Watermarking for Neural Networks
  19. Customizable Watermarking for Neural Networks \\
  \hline
  {\gumbhash} ($k = 1$) & 1 & 1. Exploiting Watermarks in Language Modeling 
  2. Watermarking Language Models: State of the Art 
  3. Keys to the Kingdom: Refining Watermarking Language Models 
  4. Exploiting Watermarks in Natural Language Processing 
  5. Keys to the Kingdom: Refining Watermarking Language Models 
  6. Keys to the Kingdom: Refining Watermarking Language Models 
  7. Keys to the Kingdom: Refining Watermarking Language Models 
  8. Keys to the Kingdom: Refining Watermarking Language Models 
  9. Keys to the Kingdom: Refining Watermarking Language Models 
  10. Keys to the Kingdom: Refining Watermarking Language Models 
  11. Keys to the Kingdom: Refining Watermarking Language Models 
  12. Keys to the Kingdom: Refining Watermarking Language Models 
  13. Keys to the Kingdom: Refining Watermarking Language Models 
  14. Keys to the Kingdom: Refining Watermarking Language Models 
  15. Keys to the Kingdom: Refining Watermarking Language Models 
  16. Keys to the Kingdom: Refining Watermarking Language Models 
  17. Keys to the Kingdom: Refining Watermarking Language Models 
  18. Keys to the Kingdom: Refining Watermarking Language Models 
  19. Keys to the Kingdom: Refining Watermarking Language Models 
  20. Keys to the Kingdom: Refining Watermarking Language Models \\
  \hline
  {\gumbhash} ($k = 1$) & 2 & 1. "Making Watermarks: Creative Approaches to Watermarking Language Models"
  2. "Making Watermarks: Creative Approaches to Watermarking Language Models"
  3. "Making Watermarks: Creative Approaches to Watermarking Language Models"
  4. "Making Watermarks: Creative Approaches to Watermarking Language Models"
  5. "Making Watermarks: Creative Approaches to Watermarking Language Models"
  6. "Making Watermarks: Creative Approaches to Watermarking Language Models"
  7. "Making Watermarks: Creative Approaches to Watermarking Language Models"
  8. "Making Watermarks: Creative Approaches to Watermarking Language Models"
  9. "Making Watermarks: Creative Approaches to Watermarking Language Models"
  10. "Making Watermarks: Creative Approaches to Watermarking Language Models"
  11. "Making Watermarks: Creative Approaches to Watermarking Language Models"
  12. "Making Watermarks: Creative Approaches to Watermarking Language Models"
  13. "Making Watermarks: Creative Approaches to Watermarking Language Models"
  14. "Making Watermarks: Creative Approaches to Watermarking Language Models"
  15. "Making Watermarks: Creative Approaches to Watermarking Language Models"
  16. "Making Watermarks: Creative Approaches to Watermarking Language Models"
  17. "Making Watermarks: Creative Approaches to Watermarking Language Models"
  18. "Making Watermarks: Creative Approaches to Watermarking Language Models"
  19. "Making Watermarks: Creative Approaches to Watermarking Language Models"
  20. "Making Watermarks: Creative Approaches to Watermarking Language Models" \\
  \hline
  {\gumbhash} ($k = 1$) & 3 & 1. Watermarking Language Models: A Review
  2. Watermarking Language Models for Copyright Protection
  3. Watermarking Language Models for Forensic Analysis
  4. Watermarking Language Models for Audio Signals
  5. Watermarking Language Models for Video Signals
  6. Watermarking Language Models for Documents
  7. Watermarking Language Models for Software
  8. Watermarking Language Models for Big Data
  9. Watermarking Language Models for Natural Language Processing
  10. Watermarking Language Models for Machine Learning
  11. Watermarking Language Models for Deep Learning
  12. Watermarking Language Models for Speech Recognition
  13. Watermarking Language Models for Audio Synthesis
  14. Watermarking Language Models for Computer Vision
  15. Watermarking Language Models for Robotics
  16. Watermarking Language Models for Autonomous Driving
  17. Watermarking Language Models for Medical Image Processing
  18. Watermarking Language Models for Medical Diagnostics
  19. Watermarking Language Models for Text Translation \\
  \hline
  {\gumbhash} ($k = 1$) & 4 & 1. Image based watermarking for language models
  2. Steganographic watermarking for language models
  3. Timestamps for language models
  4. Watermarking for language model improvisation
  5. Secure watermarking for language model sharing
  6. Image based watermarking for language model improvisation
  7. Timestamps for language model improvisation
  8. Watermarking for language model evolution
  9. Secure watermarking for language model evolution
  10. Image based watermarking for language model evolution
  11. Timestamps for language model evolution
  12. Watermarking for language model generation
  13. Image based watermarking for language model generation
  14. Timestamps for language model generation
  15. Watermarking for language model understanding
  16. Image based watermarking for language model understanding
  17. Timestamps for language model understanding
  18. Watermarking for language model improvisation and understanding
  19. Image based watermarking for language model improvisation and understanding \\
  \hline
  \caption{Responses from {\gumbhash} with $k = 1$ to the prompt: ``Give me 20 ideas for the title of a paper on watermarking language models.''}
\end{longtable}

\begin{longtable}{|>{\centering\arraybackslash}m{0.1\textwidth}|>{\centering\arraybackslash}m{0.05\textwidth}|m{0.85\textwidth}|}
  \hline
  \textbf{Method} & \textbf{Seed} & \textbf{Response} \\
  \hline
  {\gumbhash} ($k = 4$) & 0 & 1. "Safeguarding Language Models: Exploring Watermarking for Vertext and Image Models"
  2. "Watermarking Language Models: Protecting Your Creations"
  3. "The Future of Watermarking Language Models: Ethical and Practical Considerations"
  4. "Securing Language Models: Exploring Emerging Technologies"
  5. "Watermarking Language Models: Balancing Creativity and Security"
  6. "Watermarking Language Models: Balancing Security and Accessibility"
  7. "Watermarking Language Models: Defending your Creativity"
  8. "Watermarking Language Models: Defending your Creativity with Emerging Technologies"
  9. "Protecting Language Models: Exploring Emerging Technologies"
  10. "Securing Language Models: Balancing Security and Accessibility"
  11. "The Future of Watermarking Language Models: Ethical and Practical Considerations"
  12. "Watermarking Language Models: Balancing Security and Accessibility"
  13. "Watermarking Language Models: Balancing Security and Accessibility with Emerging Technologies"
  14. "Watermarking Language Models: Defending your Creativity with Emerging Technologies"
  15. "Watermarking Language Models: Defending your Creativity with Emerging Technologies"
  16. "Securing Language Models: Balancing Security and Accessibility with Emerging Technologies"
  17. "Exploring Watermarking for Vertext and Image Models"
  18. "Watermarking Language Models: Balancing Security and Accessibility with Emerging Technologies"
  19. "Defending your Creativity with Emerging Technologies" \\
  \hline
  {\gumbhash} ($k = 4$) & 1 & 1. Towards a New Era of Transparent Language Models 
  2. A Review of the State of Watermarking Language Models 
  3. The Benefits of Embedding Watermarks in Language Models 
  4. Protecting Language Models with Multiscale Watermarks 
  5. Impact of Watermarking on the Performance of Language Models
  6. A Survey on Watermarking for Language Models 
  7. Practical Perspectives on Watermarking for Language Models 
  8. A Comprehensive Study on Designing Watermarks for Language Models 
  9. Overview of Techniques for Adding Watermarks to Language Models 
  10. Exploring the Possibilities of Watermarking for Language Models 
  11. How to Incorporate Watermarks in Your Language Model 
  12. The Science behind Watermarking for Language Models 
  13. AI for Insertion of Watermarks in Language Models 
  14. The Role of Machine Learning in Watermarking for Language Models 
  15. Future Trends in Watermarking for Language Models 
  16. A Review on Watermarking for Language Models 
  17. Applications of Watermarking in Language Modeling 
  18. A Comprehensive Study on Designing Robust Watermarks for Language Models 
  19. A Novel Approach to Incorporate Watermarks in Your Language Model. \\
  \hline
  {\gumbhash} ($k = 4$) & 2 & 1. Securing Your Language Model
  2. Stamping Out Unauthorized Use
  3. Coloring Outside the Lines: Creative Watermarks
  4. Avoiding Watermarks: Best Practices
  5. Authentication Made Easy with Watermarks
  6. Defending Your Language Model
  7. Unique Identifiers: Adding Value to your Model
  8. Connected Learning: Leveraging Watermarks
  9. The Problem with Open Access
  10. How to Effectively Mark a Language Model
  11. Making a Splash with Creative Watermarks
  12. Understanding the Benefits of Watermarking
  13. Utilizing Watermarks for Better Attribution
  14. Stewarding Your Language Model
  15. The Role of Technology in Watermarking
  16. Beyond the Horizon: Adaptive Watermarking
  17. The Art of Discretion in Watermarking
  18. Harnessing the Power of Invisible Watermarks
  19. Practical Considerations for Watermarking \\
  \hline
  {\gumbhash} ($k = 4$) & 3 & 1. Elucidating Watermarking Strategies for Language Models 
  2. Innovative Techniques for Watermarking Language Models 
  3. Intelligent and Innovative Watermarking Strategies for Language Models 
  4. Overview of Watermarking Strategies for Language Models 
  5. Emerging Technologies in Watermarking for Language Models 
  6. Practical Perspectives on Watermarking Language Models 
  7. Reducing Watermarking Strategies for Language Models 
  8. Robust Techniques for Watermarking Language Models 
  9. Security Factors in Watermarking Language Models 
  10. Understanding Watermarking Strategies for Language Models 
  11. Evaluation of Watermarking Strategies for Language Models 
  12. Exploring Watermarking Strategies for Language Models 
  13. Innovations in Watermarking Language Models 
  14. Intelligent Techniques for Watermarking Language Models 
  15. Methods for Watermarking Language Models 
  16. Overlapping Security Factors in Watermarking Language Models 
  17. Understanding Watermarking Strategies for Language Models 
  18. Emerging Technologies in Watermarking Language Models 
  19. Exploring Watermarking Strategies for Language Models 
  20. Intelligent and Innovative Watermarking Strategies for Language Models \\
  \hline
  {\gumbhash} ($k = 4$) & 4 & 1. Securing Language Models 
  2. Extra Secure Language Models 
  3. Defending Language Models 
  4. Protecting Language Models 
  5. Cryptographic Language Models 
  6. Unique Language Models
  7. Robust Language Models
  8. Marking Language Models
  9. Preventing Language Models
  10. Hiding Language Models
  11. Invisible Language Models
  12. Steganographic Language Models
  13. Watermarking Language Models
  14. Preserving Language Models
  15. Uniquely Identifiable Language Models
  16. Characterizing Language Models
  17. Distinguishable Language Models
  18. Language Model Defence
  19. Ultra Defence Language Models \\
  \hline
  \caption{Responses from {\gumbhash} with $k = 4$ to the prompt: ``Give me 20 ideas for the title of a paper on watermarking language models.''}
\end{longtable}

\begin{longtable}{|>{\centering\arraybackslash}m{0.1\textwidth}|>{\centering\arraybackslash}m{0.05\textwidth}|m{0.85\textwidth}|}
    \hline
    \textbf{Method} & \textbf{Seed} & \textbf{Response} \\
    \hline
    {\kgwtwo} & 0 & 1. Steganography in Language Models: A Review
    2. Combating Model Piracy with Watermarking
    3. Techniques for Traceability and Watermarking
    4. Salient Features for Traceability and Watermarking
    5. Reflections on Model Security and Watermarking
    6. Insights on Model Security and Watermarking
    7. Protecting Models with Watermarking
    8. Techniques for Esoteric Traceability and Watermarking
    9. Ins and Outs of Steganography in Language Models
    10. Technologies for Efficient Watermarking
    11. Vision Beyond Model Piracy: Watermarking Perspectives
    12. Impact of Model Security on Watermarking
    13. Emerging Trends in Watermarking of Language Models
    14. Future of Watermarking Techniques for Language Models
    15. Drivers for Success in Watermarking Language Models
    16. Robustness of Models against Watermarking
    17. Taking Security beyond Watermarking
    18. Leveraging Model History and Watermarking
    19. Techniques for Secure Watermarking
    20. Comprehensive Overview of Model Security and Watermarking. \\
    \hline
    {\kgwtwo} & 1 & 1. Embedding Digital Signatures: Towards Traceable and Transparent Language Models
    2. Stamping Out Deception: Recording Attribution in Language Models 
    3. Defining Your Digital Dashboard: Watermarking for Traceability
    4. Scaling Trust in Language Models: Watermarking for Traceability and Transparency 
    5. Taking an East Asian Approach to Transparency in Language Models 
    6. Riding the Big Five: Towards Transparency in Language Models 
    7. Measuring the "Ps" of Language Models: Perceptual and Practical Transparency 
    8. Marking a Mark: Comprehensive Examination of Attribution in Language Models 
    9. Visualizing Deception: Comprehensive Examination of Deceptive Language Models 
    10. Peak Performance: Evaluating Language Models for Transparency 
    11. Closing the Loop: Examining the Long-term Impacts of Language Models 
    12. Diversifying Perspective: Impacts of Modelling Language 
    13. Disclosing the Hidden Layers: Understanding Transparency in Language Models 
    14. Auditable: Making Language Models Accountable 
    15. Labelling Human Language: Exploring the Capabilities of Language Models 
    16. Setting the Record: Estimating the Real-world Impacts of Language Models 
    17. Language Models: Towards Traceable, Transparent and Accountable Systems 
    18. Think Outside the Box: Exploring Future Trends in Language Models 
    19. Embedding Trust: Towards Traceable and Transparent Language Models.  \\
    \hline
    {\kgwtwo} & 2 & 1. Defending Latent Space with Secure Watermarks
    2. Emergent Waveforms: A Watermarking Approach
    3. Embedding Digital Identities in Neural Networks
    4. Marking up Models: A Survey on Watermarking
    5. Leveraging Linked Data for Watermarking
    6. Stamping Out Superfluous Models
    7. Inspecting and Marking Models 
    8. Rethinking Digital Identity with Watermarks
    9. Marking Up Models: A Usage-Driven Approach
    10. Bridge the Distance with Digital Watermarks
    11. Introducing Transparency in Model Formation 
    12. Unlocking Models with Secure Watermarks
    13. Giving Visible Identity to Models 
    14. Unveiling Dark Knowledge with Watermarks
    15. Linking Models: A Visual Approach 
    16. Visualizing Dark Knowledge through Watermarks
    17. Leveraging Physical World for Model Tracking 
    18. Augmented Reality for Model Tracking
    19. Securing Deep Models with Watermarks  \\
    \hline
    {\kgwtwo} & 3 & 1. Providing Type-Safe Watermarks
    2. Constructing Robust Watermarks for Multimedia Applications
    3. Adding Security Features to Text Vectors
    4. Stretching Visible Watermarks
    5. Embedding Watermarks in Natural Language Models
    6. Training Recurrent Neural Networks with Watermarks
    7. Detection of Unsupervised Watermarks in Text Sequences
    8. Interactive Visualization of Watermarks
    9. Observe Hidden Watermarks with Machine Learning
    10. Extracting Information from Unseen Watermarks
    11. Building Robust Watermarks for the Blockchain
    12. Adding Dynamic Watermarks to Video Streams
    13. Storing Watermarks in Individual Dataset Elements
    14. Extracting Pertinent Information from Unwanted Watermarks
    15. Decoding Hidden Digital Watermarks
    16. Introducing Watermarks to Secure Semi-honest Adversaries
    17. Genuine Components with Integrated Watermarks
    18. Intelligent Techniques for Steganography 
    19. Protecting Quality with Data-Driven Watermarks  \\
    \hline
    {\kgwtwo} & 4 & 1. Embedding Watermarks in Language Models 
    2. Secure Language Models using Watermarks
    3. Steganographic watermarking of natural language models 
    4. Exploring Watermarking Techniques for Natural Language Models 
    5. Analyzing Natural Language Models using Watermarking Techniques
    6. Detecting Watermarks in Natural Language Models
    7. Supervised Learning of Watermarks for Natural Language Models 
    8. Unsupervised Learning of Watermarks for Natural Language Models 
    9. Multiple watermarks for secure language models
    10. Generative watermarking techniques for natural language models
    11. Evaluation of watermarks for secure natural language models
    12. Efficacy of watermarks in preventing model poisoning
    13. Robustness of watermarks for natural language models 
    14. Comparison of watermarking techniques for natural language models 
    15. Linking of natural language models with watermarks
    16. Extraction of hidden information from natural language models 
    17. Universality of watermarks for different languages 
    18. Learning watermarked natural language models 
    19. Reduced Dimensionality Modeling using Watermarks  \\
    \hline
    \caption{Responses from {\kgwtwo} to the prompt: ``Give me 20 ideas for the title of a paper on watermarking language models.''}
\end{longtable}

\begin{longtable}{|>{\centering\arraybackslash}m{0.1\textwidth}|>{\centering\arraybackslash}m{0.05\textwidth}|m{0.85\textwidth}|}
  \hline
  \textbf{Method} & \textbf{Seed} & \textbf{Response} \\
  \hline
  {\gumb} & 0 & Here is the list of 20 ideas for the title of a paper on watermarking language models: 

  1. "Understanding Watermarking Techniques for Language Models" 
  2. "A Comprehensive Study on Watermarking for Language Models"
  3. "Watermarking Techniques for Optimizing Language Models"
  4. "A Survey of Watermarking Techniques for Language Models"
  5. "Exploring the Potential of Watermarking for Language Models"
  6. "Implementing Watermarking for Language Model Optimization"
  7. "Watermarking Strategies for Enhancing Language Models"
  8. "Investigating the Possibilities of Watermarking for Language Models"
  9. "Advanced Watermarking Approaches for Language Models"
  10. "Exploring the Use of Watermarking for Language Model Validation"
  11. "Understanding the Benefits of Watermarking for Language Model Ensemble"
  12. "Leveraging Watermarking Techniques for Enhanced Language Model Performance"
  13. "Enhancing Language Models with Watermarking"
  14. "Evaluating the Impact of Watermarking Techniques on Language Models"
  15. "Analyzing the Feasibility of Watermarking for Language Model Comparison"
  16. "Exploring the Possibilities of Watermarking for Optimizing Language Models"
  17. "Exploiting Watermarking to Enhance Language Model Accuracy"
  18. "Advantages of Using Watermarking for Validating Language Models"
  19. "The Promise of Watermarking for Evaluating Language Model Performance" \\
  \hline
  {\gumb} & 1 & Title ideas for a paper on watermarking language models:
  1. Hidden Messages: Protecting Language Models with Watermarking
  2. Securing Language Models with Watermarking
  3. Defending Language Models against Tampering
  4. Watermarking Language Models for Better Protection
  5. Verifying the Integrity of Language Models
  6. Utilizing Watermarking to Boost Language Model Security
  7. Ensuring the Authenticity of Language Models
  8. Safekeeping Language Models with Watermarking
  9. Defending Against Model Tampering with Watermarking
  10. A Review of Model Security Techniques: Watermarking
  11. Utilizing Watermarks to Protect Language Models
  12. Protective Techniques for Language Models: Watermarking
  13. Implementing Watermarks to Boost Language Model Security
  14. Preventing Model Tampering with Watermarking
  15. Methods for Verifying Language Model Integrity
  16. Analyzing the Security of Language Models
  17. Techniques for Securing Language Models
  18. Investigating Model Security with Watermarking
  19. Extending Language Model Security with Watermarking \\
  \hline
  {\gumb} & 2 & 1. "Deep Watermarks: Towards Pervasive Protection for Language Models" 
  2. "Adding Transparency to Language Models: A Watermarking Approach"
  3. "Watermarking Language Models for Traceability and Verification"
  4. "A Comprehensive Review on Watermarking Methods for Language Models"
  5. "The Power of Watermarking for Language Model Protection"
  6. "Attributing Authorship in Language Models with Watermarking"
  7. "Understanding the Promise of Watermarking for Language Model Security"
  8. "Watermarking Language Models: A Survey"
  9. "Watermarking Language Models for Better Security"
  10. "Mitigating Plagiarism in Language Models with Watermarking"
  11. "Watermarking Language Models: Exploring the Possibilities"
  12. "A Generative Approach to Watermarking Language Models"
  13. "Watermarking Strategies for Protecting Language Models"
  14. "Watermark Detection for Language Model Security"
  15. "Making Language Model Security Watertight with Watermarking"
  16. "Leveraging Watermarking for Enhancing Language Model Security"
  17. "Understanding the Role of Watermarking in Language Model Security"
  18. "A Novel Approach to Watermarking Language Models"
  19. "Exploring the Possibilities of Watermarking for Language Model Security" \\
  \hline
  {\gumb} & 3 & 1. A Comprehensive Review of Watermarking Techniques for Language Models
  2. A Survey of Watermarking Approaches for Language Modeling
  3. A Normative Analysis of Watermarking for Language Modeling
  4. Investigating Watermarking Techniques for Language Modeling
  5. An Overview of Watermarking Methods for Language Modeling
  6. Exploring Watermarking Solutions for Language Modeling
  7. A Taxonomy of Watermarking Methods for Language Modeling
  8. A Comparative Study of Watermarking Approaches for Language Modeling
  9. Evaluating Watermarking Methods for Language Modeling
  10. A Theoretical Analysis of Watermarking for Language Modeling
  11. Investigating Watermarking Techniques for Natural Language Modeling
  12. An Analytical Study of Watermarking for Language Modeling
  13. Exploring Watermarking Methods for Natural Language Modeling
  14. A Review of Watermarking Techniques for Natural Language Modeling
  15. A Comparative Study of Watermarking Approaches for Natural Language Modeling
  16. Investigating Watermarking Solutions for Natural Language Modeling
  17. A Survey of Watermarking Techniques for Natural Language Modeling
  18. Evaluating Watermarking Methods for Natural Language Modeling
  19. A Review of Watermarking for Natural Language Modeling
  20. Exploring Watermarking Solutions for Natural Language Modeling \\
  \hline
  {\gumb} & 4 & 1. Towards a Unified Watermarking Mechanism for Natural Language Processing Models
  2. A Review of Methods for Watermarking Natural Language Models
  3. Extracting Invariant Features for Watermarking Language Models
  4. The Use of Steganography for Watermarking Natural Language Models 
  5. Introducting Secure Watermarking Techniques for Natural Language Models
  6. A Comprehensive Study on Watermarking Techniques for Natural Language Models
  7. Toward Remarkably Visible Watermarks for Natural Language Models
  8. Analyzing the Impact of Watermarking on Natural Language Models
  9. A Practical Guide to Marking Language Models
  10. Enhancing the Accuracy of Watermarking Natural Language Models
  11. Evaluating Strategies for Watermarking Natural Language Models
  12. A Comparison of Watermarking Approaches for Natural Language Models
  13. Promising Solutions for Securely Watermarking Natural Language Models
  14. Generative and Discriminative Approaches for Watermarking Natural Language Models
  15. Exploring the Possibilities of Steganography for Natural Language Models
  16. Understanding the Challenges of Watermarking Natural Language Models
  17. Evaluating the Effectiveness of Watermarking Techniques for Natural Language Models
  18. Enhancing the Transparency of Watermarking Techniques for Natural Language Models
  19. Extending the Capabilities of Watermarking Techniques for Natural Language Models
  20. Assessing the Sophistication of Watermarking Techniques for Natural Language Models \\
  \hline
  \caption{Responses from {\gumb} to the prompt: ``Give me 20 ideas for the title of a paper on watermarking language models.''}
\end{longtable}

\end{document}